\documentclass[opre,nonblindrev]{informs4a}
\OneAndAHalfSpacedXI

\usepackage{mathtools,bm}
\usepackage{aliascnt}
\usepackage{booktabs,tabularx}
\usepackage{enumitem}
\usepackage[nopatch=footnote]{microtype}
\usepackage{natbib}
\usepackage[hyperfootnotes=false]{hyperref}
\usepackage[capitalize,noabbrev]{cleveref}
\usepackage{algorithm}
\usepackage[noend]{algpseudocode}
\usepackage{float}

\bibpunct[, ]{(}{)}{,}{a}{}{,}
\def\bibfont{\footnotesize\linespread{1}\selectfont}

\hypersetup{
 hypertexnames=false,
 colorlinks=true,
 linkcolor=blue,
 citecolor=blue,
 urlcolor=blue,
 filecolor=blue,
}
\newtheorem{theorem}{Theorem}[section]
\newaliascnt{lemma}{theorem}
\newtheorem{lemma}[lemma]{Lemma}
\aliascntresetthe{lemma}
\newaliascnt{proposition}{theorem}
\newtheorem{proposition}[proposition]{Proposition}
\aliascntresetthe{proposition}
\newaliascnt{corollary}{theorem}

\aliascntresetthe{corollary}
\newaliascnt{assumption}{theorem}
\newtheorem{assumption}[assumption]{Assumption}
\aliascntresetthe{assumption}
\newaliascnt{definition}{theorem}

\aliascntresetthe{definition}
\newaliascnt{remark}{theorem}
\newtheorem{remark}[remark]{Remark}
\aliascntresetthe{remark}
\numberwithin{equation}{section}
\crefname{theorem}{Theorem}{Theorems}
\Crefname{theorem}{Theorem}{Theorems}
\crefname{lemma}{Lemma}{Lemmas}
\Crefname{lemma}{Lemma}{Lemmas}
\crefname{proposition}{Proposition}{Propositions}
\Crefname{proposition}{Proposition}{Propositions}
\crefname{corollary}{Corollary}{Corollaries}
\Crefname{corollary}{Corollary}{Corollaries}
\crefname{assumption}{Assumption}{Assumptions}
\Crefname{assumption}{Assumption}{Assumptions}
\crefname{definition}{Definition}{Definitions}
\Crefname{definition}{Definition}{Definitions}
\crefname{remark}{Remark}{Remarks}
\Crefname{remark}{Remark}{Remarks}

\newcommand{\E}{\mathbb E}
\newcommand{\1}{\mathbf 1}
\newcommand{\dd}{\mathrm d}
\newcommand{\proj}{\operatorname{Proj}}
\newcommand{\calH}{\mathcal H}
\newcommand{\calK}{\mathcal K}
\newcommand{\Dsum}{\overline D}
\newcommand{\RAPDL}{\ensuremath{\mathsf{RAPDL}}}

\begin{document}

\RUNTITLE{Resource-Adaptive Primal-Dual Learning for OWMS Systems}
\TITLE{Resource-Adaptive Primal-Dual Learning for One-Warehouse Multi-Store
Systems with Censored Demand}
\RUNAUTHOR{Lyu}
\ARTICLEAUTHORS{
\AUTHOR{Jiameng Lyu}
\AFF{Department of Management Science, School of Management, Fudan University,
Shanghai 200433, China, \EMAIL{jiamenglyu@fudan.edu.cn}}
}

\ABSTRACT{
The one-warehouse multi-store (OWMS) system is a fundamental inventory
network in which a nonreplenishable warehouse allocates shared stock across
multiple stores over time.
Existing OWMS learning policies are built around
a fixed target calibrated to the initial
average resource rate, but such a fixed-target architecture cannot re-center
after realized sales change
the remaining resource available per future period.  We develop
Resource-Adaptive Primal-Dual Learning, a new
learning framework that tracks the  Primal-Dual re-solving path with censored demand as
the remaining-resource state evolves.  In each period, the current resource
rate indexes the target store allocations and dual variable, while censored
sales provide gradient estimates for updating both.  The analysis
combines expected-sales geometry with a moving-target argument to yield
logarithmic expected regret, improving on the state-of-the-art
square-root-order guarantees
of existing OWMS learning policies.  The underlying design and analytical
ideas may inform other online learning problems
with depleting shared resources.  Numerical experiments further demonstrate
good finite-horizon performance of a practical variant across different
horizon lengths and inventory regimes.

}

\KEYWORDS{one-warehouse multi-store systems; inventory
learning; censored demand; Primal-Dual methods;
resource-adaptive learning; logarithmic regret}
\maketitle

\section{Introduction}

The one-warehouse multi-store (OWMS) system is a basic architecture of inventory distribution, which pools finite inventory at a
central warehouse and allocates it across stores and time.  Each shipment
trades current service at one location against the option of serving another
later, creating a systemwide dynamic allocation problem
\citep{jackson1988,chenZheng1997}.
Applications include general-merchandise distribution,
perishable-product distribution \citep{huangEtAl2025}, fast-fashion
distribution \citep{caroGallien2010,bekci2023}, service-parts logistics
\citep{kutanogluMahajan2009}, emergency-medical supply allocation
\citep{mehrotraEtAl2020}.

This paper focuses on a finite-horizon version in which the warehouse
receives an initial stock and ships it to stores with different demand
distributions and economic margins.  Each allocation changes both tomorrow's
inventory state and the opportunity cost of every future unit.  Even with
known demand laws, the exact dynamic program is high dimensional and strongly
state dependent.  With unknown distributions and censored feedback, the
manager must learn demand responses while preserving the same inventory
needed to earn future reward.

For newsvendor and related inventory problems, a
positive density lower bound supplies the strong convexity that converts
stochastic first-order learning into logarithmic regret
\citep{huhRusmevichientong2009,shi2016,zhangChaoShi2018,lyuEtAl2025}. 
These models effectively draw replenishment from an unconstrained upstream
source. With a nonreplenishable warehouse, by contrast, all stores and
periods compete for the same finite inventory pool.
A sale at one store irreversibly
reduces the resource available to every store in subsequent periods.
The best existing finite-stock censored-demand
guarantees are  square-root order for OWMS and
its multiwarehouse extension \citep{bekci2023,miaoWangZhao2023}.

Re-solving is a powerful and well-received organizing principle for dynamic resource
allocation in both academia and industries. It reveals how allocations and scarcity prices should adapt
to an evolving resource state, but learning work rarely treats the resulting
Primal-Dual path itself as the object to be tracked. We
develop a new resource-adaptive
Primal-Dual learning framework that replaces fixed-target learning with
online tracking of the
endogenous Primal-Dual re-solving path using censored demand.  This framework
yields a logarithmic regret guarantee for the OWMS learning problem, and its
design and analytical tools may inform
other online
learning-and-control problems with censored feedback and depleting shared
resources.

\subsection{Contributions}
Our contributions are summarized as follows.
\begin{enumerate}[leftmargin=2em]
\item \textbf{Resource-adaptive Primal-Dual learning framework.}
We introduce Resource-Adaptive Primal-Dual Learning (\RAPDL), a new
framework that replaces the fixed
target calibrated to the initial average resource rate with online tracking
of the endogenous Primal-Dual path generated by resource-state re-solving.
\RAPDL\ uses
remaining system inventory per remaining period to re-index the
fluid KKT target---the preferred store allocations and their common scarcity
price.  Censored sales provide observable stochastic Primal-Dual directions
for updating this target and simultaneously update the resource
state that re-centers the next target.  In this way, \RAPDL\ tracks the allocation-and-price path that
known-distribution fluid re-solving would produce,
without estimating the demand distributions or repeatedly solving a fluid
program.

\item \textbf{Analysis of endogenous moving targets.}
Our analysis separates regret into three components.  Value-function
smoothness controls the intrinsic gap of the resource-indexed re-solving path.
Expected-sales geometry and a moving-target argument control the gap from
learning that path.  The inventory dynamics control the gap from implementing
the learned state as a feasible physical action.  Each component is
logarithmic, yielding an \(O(\log T)\) regret guarantee for \RAPDL\ and
improving on the state-of-the-art square-root-order guarantees in the
finite-stock censored-demand OWMS literature.

\item \textbf{Numerical evaluation.}
We compare our resource-adaptive learning method \RAPDL\
with the Double Binary Search policy of \citet{bekci2023}, a state-of-the-art
benchmark across different inventory regimes.  Across both homogeneous and
heterogeneous experiments, \RAPDL\ has lower mean total
cost in all 24 paired comparisons, providing short-horizon evidence for
our algorithm.

\end{enumerate}

\subsection{Related Literature}
\label{sec:literature}
\textbf{Re-solving and Primal-Dual learning with unreplenishable
resources.}
A broad revenue-management and online-allocation literature feeds observed
resource consumption back into future decisions.  Re-solving policies
recompute a fluid program with the remaining capacity and horizon and achieve
sublinear guarantees under suitable structural conditions
\citep{jasinKumar2012,jasin2014}.  These
models assume known demand distributions and thus involve no learning.
Related online-LP work combines
learning with repeated LP optimization under fully observed requests
\citep{agrawalWangYe2014,liYe2022,liWangZhang2024}.  These methods learn
dual prices from observed arrivals through empirical LP solves.

Primal-Dual demand learning treats the fluid primal and dual
solutions themselves as unknown.  Related approaches include  explicit learning of an
inventory shadow price in personalized pricing \citep{chenGallego2022}, and
nonparametric or large-action-space network revenue management
\citep{chenLyuWangZhou2024,miaoWangDemandBalancing2024,miaoWang2025,miaoWangZhang2026}.
These approaches combine demand learning, dual-price learning, and resource
feedback through different architectures.  In the finite-stock OWMS setting,
\RAPDL\ instead uses censored sales to jointly track store-level allocation
targets and their common scarcity price along the KKT path indexed by remaining
system inventory per remaining period.

\textbf{OWMS control and learning.}
The finite-stock OWMS problem originates in the two-echelon allocation
setting of \citet{jackson1988}.  With known demand laws, the literature
develops tractable decomposition and Lagrangian policies for OWMS problem
\citep{marklundRosling2012,miaoJasinChao2022}.
Within this known-demand control literature, the policy of
\citet{chaoJasinMiao2025} adaptively readjusts its Lagrangian parameters
using realized demand history.

The closest learning papers retain that finite shared stock and explicitly
learn its scarcity price.  \citet{bekci2023} proposes Double Binary Search
for OWMS, and \citet{miaoWangZhao2023} combines dual cutting planes with
primal learning in a multiwarehouse extension.  Both learn a fixed
Lagrangian target and obtain the best available square-root-order regret
guarantees for this class.
By contrast, \RAPDL\ learns a resource-indexed Primal-Dual path and re-centers
that path after realized sales change the remaining resource state.

\textbf{Gradient-based inventory learning.}
For basic censored-demand inventory models, sales and stockout observations
can provide first-order information for gradient-based learning
\citep{huhRusmevichientong2009,shi2016,lyuEtAl2025,guoEtAl2026}.
Extensions cover positive lead times, perishable inventory, random
capacities, and multi-retailer systems
\citep{huh2009,zhangChaoShi2018,zhangChaoShi2020,chenShiDuenyas2020,
fanChenXiaoZhou2023}.
\citet{lyuEtAl2025} applies a minibatch-SGD metapolicy to a
two-echelon system in which the warehouse itself can replenish, whereas \citet{huangEtAl2025} develops an offline, feature-based approximation for a
perishable network in which both the warehouse and retailers replenish.

\textbf{Organization.}
\Cref{sec:model} presents the physical system, exact comparator, fluid
benchmark.  \Cref{sec:algorithm} derives and
states \RAPDL.  \Cref{sec:main-results} gives the main guarantee and develops
its regret analysis, including the expected-sales geometry used only in the
analysis. \Cref{sec:numerical-experiments} reports the computational study,
and \Cref{sec:discussion} concludes the paper.  The appendices collect the
proofs of all propositions and lemmas stated in the main text, the supporting
technical verifications, and the extension sketches.

\textbf{Notation.}
Throughout, \(T\ge2\), \(\log\) denotes the natural logarithm, and
\(C<\infty\) denotes a constant independent of \(T\) that may change from
line to line. Let \(x^+:=\max\{x,0\}\).

\section{Problem Formulation}
\label{sec:model}

There are stores \(i\in[N]:=\{1,\ldots,N\}\) and periods
\(t\in[T]:=\{1,\ldots,T\}\).  Immediately before the period-\(t\)
shipment, \(B_t\) is the stock physically remaining at the warehouse and
\(I_{i,t}\) is the on-hand stock already located at store \(i\).
The decision \(Y_{i,t}\) is store \(i\)'s stock \emph{after} that shipment.
The warehouse begins with a nonreplenishable stock
\(B_1=W=\gamma T\), and every store begins empty, \(I_{i,1}=0\).
Unless stated otherwise, vector norms are Euclidean.
At the beginning of period \(t\), a feasible post-shipment vector satisfies
\begin{equation}
 Y_{i,t}\ge I_{i,t},\qquad
 \sum_{i=1}^N(Y_{i,t}-I_{i,t})\le B_t.
 \label{eq:physical-feasibility}
\end{equation}
Demands, sales, and states obey
\begin{equation}
 S_{i,t}=D_{i,t}\wedge Y_{i,t},\quad
 I_{i,t+1}=Y_{i,t}-S_{i,t},\quad
 B_{t+1}=B_t-\sum_{i=1}^N(Y_{i,t}-I_{i,t}).
 \label{eq:dynamics}
\end{equation}
In each period, the manager observes the state \((B_t,\bm I_t)\)  and past sales, chooses
\(\bm Y_t\) before demand, observes only censored sales \(\bm S_t\), and then
updates the state by \eqref{eq:dynamics}. Unmet demand is lost.

We impose the following conditions on the primitive uncertainty.
\begin{assumption}
\label{ass:primitives}
\begin{enumerate}[label=\textup{(\roman*)},leftmargin=*]

\item
The demand vectors \(\{\bm D_t\}_{t\ge1}\) are i.i.d.\ over time.
Within each period, their coordinates may be arbitrarily dependent,
and the marginal distribution of \(D_{i,t}\) is \(F_i\).

\item
The law \(F_i\) is absolutely continuous on its support
\([0,\bar d_i]\), satisfies \(F_i(x)=1\) for \(x\ge\bar d_i\), and has a
continuous density obeying the known bounds
\begin{equation}
 0<\kappa_i\le f_i(x)\le K_i<\infty,
 \qquad 0\le x\le\bar d_i.
 \label{eq:density-bounds}
\end{equation}
\end{enumerate}
\end{assumption}

Write \(\Dsum:=\sum_{i=1}^N\bar d_i\) and suppose that the initial
resource rate satisfies \(0\le\gamma\le\Dsum\).

\begingroup

\begin{remark}
\label{rem:envelope-calibration}
For the directly clipped formulation, the endpoint \(\bar d_i\) and
valid density bounds \((\kappa_i,K_i)\) are known problem data.  This does not
mean that the demand law is known: the policy does not know the complete
function \(F_i\), and the density bounds need not equal its exact extrema.
Appendix~\ref{app:endpoint-envelope} describes a fixed-support-envelope
implementation and sketches a localization route that avoids using the exact
endpoint values.
Using such envelopes to certify theoretical parameter tuning is standard in
inventory-learning analyses
\citep{huhRusmevichientong2009,shi2016,zhangChaoShi2018,bekci2023,lyuEtAl2025}.\footnote{
For example, 
in \citet{huhRusmevichientong2009,shi2016}, the density lower bound enters
the stepsize explicitly. In the closest OWMS/MWMS work, \citet{bekci2023,miaoWangZhao2023} assumes
known support and lower/upper density bounds, which also enter the theoretical parameter-tuning of their algorithms.
}
Looser valid envelopes enter the associated safeguards and
primitive-dependent bounds.
\end{remark}
\endgroup

For a policy \(\pi\), let \(\mathcal U^\pi\) denote any exogenous
policy randomization, independent of demand, and define the pre-demand
history by
\[
 \calH_t^\pi:=
 \sigma\!\left(
 \mathcal U^\pi,B_1,\bm I_1,
 (\bm Y_\tau,\bm S_\tau,B_{\tau+1},\bm I_{\tau+1})_{\tau<t}
 \right).
\]
Under \Cref{ass:primitives}, \(\bm D_t\) is independent of
\(\calH_t^\pi\).
Let \(\Pi_{\rm ad}\) denote the class of policies that, from the stated
initial state, choose \(\bm Y_t\) measurably with respect to
\(\calH_t^\pi\), satisfy \eqref{eq:physical-feasibility}, and evolve
according to \eqref{eq:dynamics} in every period.
For any \(\pi\in\Pi_{\rm ad}\), define its expected cost by
\begin{equation*}
\begin{aligned}
 J_T^\pi
 &:=
 \E^\pi\!\Bigg[
\sum_{t=1}^T\sum_{i=1}^N
\Bigl(
 \underbrace{k_i(Y_{i,t}-I_{i,t})}_{\text{shipment cost}}
 +\hspace{-2mm}\underbrace{h_iI_{i,t+1}}_{\text{store holding cost}}\hspace{-2mm}
 +\underbrace{b_i(D_{i,t}-Y_{i,t})^+}_{\text{lost-sales penalty}}
\Bigr)
 +\hspace{-5mm}\underbrace{wB_{T+1}}_{\text{terminal warehouse cost}}
 -\underbrace{\sum_{i=1}^N c_iI_{i,T+1}}_{\text{sell-back credit}}
\Bigg].
\end{aligned}
\end{equation*}
Here \(k_i,h_i,b_i\) are per unit shipment, store-holding, and lost-sales
costs, respectively, and \(w\) is terminal warehouse holding cost.  Terminal
store inventory is credited at \(c_i=k_i-w\) as convention. Throughout,
\(c_i\ge0\), \(h_i>0\), and \(b_i-c_i>0\).

For the fluid comparison, let \(\mu_i:=\E D_i\), and, for a stock threshold
\(y\ge0\), define the expected sales
\begin{equation}
 m_i(y):=\E[D_i\wedge y]
 =\int_0^y(1-F_i(u))\,\dd u
 \label{eq:expected-sales}
\end{equation}
and the expected adjusted operating loss
\begin{align}
 \ell_i(y)
 &:=\E\!\left[
 c_i(D_i\wedge y)+h_i(y-D_i)^+
 +b_i(D_i-y)^+
 \right]\notag\\
 &=h_i y+b_i\mu_i-(h_i+b_i-c_i)m_i(y).
 \label{eq:expected-loss}
\end{align}
Here \(m_i(y)\) is the expected depletion of the shared system inventory,
whereas \(\ell_i(y)\) is the corresponding expected decision-dependent
cost after the terminal warehouse charge is accounted for.  Through direct
inventory-accounting calculations, for every admissible policy \(\pi\),
\begin{align}
 J_T^\pi
 &=wW+\sum_{t=1}^T\sum_{i=1}^N
 \E^\pi\ell_i(Y_{i,t}),
 \label{eq:exact-cost}\\
 \sum_{t=1}^T\sum_{i=1}^N\E^\pi m_i(Y_{i,t})
 &=W-\E^\pi\!\left[B_{T+1}+\sum_{i=1}^NI_{i,T+1}\right]
 \le W=\gamma T.
 \label{eq:total-sales-budget}
\end{align}

The exact known-distribution dynamic benchmark is the optimal value over the
same admissible policy class:
\begin{equation*}
\begin{aligned}
 J_T^*&:=\inf_{\pi\in\Pi_{\rm ad}}J_T^\pi.
\end{aligned}
\end{equation*}
The identities \eqref{eq:exact-cost} and \eqref{eq:total-sales-budget}
motivate a one-period fluid relaxation that retains only the expected-sales
budget.  The following proposition defines its value and shows that it
lower-bounds the exact dynamic benchmark.

\begin{proposition}
\label{prop:fluid-benchmark}
For \(r\ge0\), define
\begin{equation}
 v(r):=\min_{0\le y_i\le\bar d_i}
 \left\{\sum_{i=1}^N\ell_i(y_i):
 \sum_{i=1}^Nm_i(y_i)\le r\right\}.
 \label{eq:y-fluid}
\end{equation}
Then we have $J_T^*\ge wW+Tv(\gamma)$.
\end{proposition}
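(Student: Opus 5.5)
The plan is to take an arbitrary admissible policy \(\pi\in\Pi_{\rm ad}\), use \eqref{eq:exact-cost} and \eqref{eq:total-sales-budget} to rewrite its cost as a sum of expected per-period losses subject to an aggregate expected-sales budget, and then collapse the time dimension by convexity. Since \(J_T^*\) is an infimum over \(\Pi_{\rm ad}\), it suffices to show \(J_T^\pi\ge wW+Tv(\gamma)\) for every fixed \(\pi\).

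\textbf{Step 1 (truncation).} We first reduce to post-shipment levels in \([0,\bar d_i]\). For \(y\ge\bar d_i\) we have \(m_i(y)=m_i(\bar d_i)=\mu_i\), so \(\ell_i(y)=\ell_i(\bar d_i)+h_i(y-\bar d_i)\ge\ell_i(\bar d_i)\) because \(h_i>0\). Setting \(\tilde y_{i,t}:=Y_{i,t}\wedge\bar d_i\) therefore gives \(\ell_i(Y_{i,t})\ge\ell_i(\tilde y_{i,t})\) and \(m_i(Y_{i,t})=m_i(\tilde y_{i,t})\) pointwise. This step does not need \(\tilde y\) to be physically implementable; we only work with the values of \(\ell_i\) and \(m_i\).

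\textbf{Step 2 (convexity and averaging).} On \([0,\bar d_i]\), \(m_i'(y)=1-F_i(y)\) is nonincreasing, so \(m_i\) is concave. Since \(h_i+b_i-c_i>0\), \(\ell_i\) is convex. Define the averaged point
\[
\bar y_i:=\frac1T\sum_{t=1}^T\E^\pi\tilde y_{i,t}\in[0,\bar d_i].
\]
By Jensen's inequality (over the mixture of time and randomness), concavity gives
\[
\sum_i m_i(\bar y_i)\ge\frac1T\sum_{t,i}\E^\pi m_i(\tilde y_{i,t}),
\]
and by \eqref{eq:total-sales-budget} the right-hand side is at most \(\gamma\). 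Hence \(\bar y\) is feasible for \(v(\gamma)\). Convexity of \(\ell_i\) gives
\[
\frac1T\sum_{t,i}\E^\pi\ell_i(\tilde y_{i,t})\ge\sum_i\ell_i(\bar y_i)\ge v(\gamma).
\]

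\textbf{Step 3 (conclusion).} Combining Steps 1 and 2 with \eqref{eq:exact-cost} yields \(J_T^\pi\ge wW+Tv(\gamma)\). Taking the infimum over \(\pi\) gives the claim.

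The main obstacle is justifying \eqref{eq:exact-cost} and \eqref{eq:total-sales-budget}; the paper asserts them from inventory accounting, and I would take them as given. If I had to verify them, I would condition on \(\calH_t^\pi\) and use the independence of \(\bm D_t\) from \(\calH_t^\pi\), so that \(\E[S_{i,t}\mid\calH_t^\pi]=m_i(Y_{i,t})\). Summing the dynamics \eqref{eq:dynamics} then telescopes to \(W-B_{T+1}-\sum_iI_{i,T+1}\). A further minor point is integrability: \(Y_{i,t}\le W+\sum_i I_{i,1}\) is bounded, so every expectation above is finite.
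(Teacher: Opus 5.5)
Your Step 2 has a genuine gap: the feasibility of the averaged threshold \(\bar y\) does not follow, because Jensen is applied in the wrong direction. Concavity of \(m_i\) gives \(\sum_i m_i(\bar y_i)\ge\frac1T\sum_{t,i}\E^\pi m_i(\tilde y_{i,t})\), and \eqref{eq:total-sales-budget} bounds the right-hand side by \(\gamma\). But from \(A\ge B\) and \(B\le\gamma\) you cannot conclude \(A\le\gamma\). Averaging thresholds raises expected sales, so the averaged point can violate the budget, and then \(\sum_i\ell_i(\bar y_i)\ge v(\gamma)\) can fail.

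Here is a concrete instance. Take \(N=1\), \(T=2\), \(F\) uniform on \([0,1]\) (so \(m(y)=y-y^2/2\)), \(\gamma=1/4\), \(h=1\), \(b=10\), \(c=0\). Use the admissible policy that ships all of \(W=1/2\) in period~1, so \(Y_1=1/2\) and \(Y_2=(1/2-D_1)^+\). Then \(\E Y_2=1/8\), \(\bar y=5/16\), and \(m(\bar y)=135/512>1/4\). Since \(\ell\) is strictly decreasing on \([0,10/11]\), and \(v(1/4)=\ell(1-1/\sqrt2)\) with \(1-1/\sqrt2<5/16\), you get \(\ell(\bar y)<v(\gamma)\). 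So the second inequality in your chain is false, even though the proposition itself is true.

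The missing idea is to average in expected-sales coordinates, where the budget is linear. What you need is convexity of cost as a function of expected sales, not as a function of the threshold. Put \(\bar s_i:=\frac1T\sum_t\E^\pi m_i(\tilde y_{i,t})\in[0,\mu_i]\), so that \(\sum_i\bar s_i\le\gamma\) holds directly. The function \(g_i=\ell_i\circ m_i^{-1}\) is convex on \([0,\mu_i]\): with \(y=m_i^{-1}(s)\), its derivative \(g_i'(s)=h_iF_i(y)/(1-F_i(y))-(b_i-c_i)\) is nondecreasing. This does not follow from convexity of \(\ell_i\) and concavity of \(m_i\) alone, because \(\ell_i\) is decreasing below the newsvendor quantile; it depends on the specific cancellation of the \(b_i-c_i\) terms. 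Jensen then gives \(\frac1T\sum_t\E^\pi\ell_i(\tilde y_{i,t})\ge g_i(\bar s_i)=\ell_i(m_i^{-1}(\bar s_i))\), and \((m_i^{-1}(\bar s_i))_i\) is feasible for \(v(\gamma)\).

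This repaired argument is essentially the paper's. The paper phrases it as convexity and monotonicity of \(v\) via the \(g_i\) representation, then the pointwise bound \(\sum_i\ell_i(y_i)\ge v\bigl(\sum_i m_i(y_i)\bigr)\) with the same clipping as your Step~1, then Jensen over randomness and across periods. Your truncation step, the final assembly, and your sketch of \eqref{eq:exact-cost} and \eqref{eq:total-sales-budget} are fine.
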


By \Cref{prop:fluid-benchmark}, the fluid-benchmark gap is nonnegative and upper-bounds
the performance gap relative to the exact dynamic oracle:
\begin{equation*}
 0\le J_T^\pi-J_T^*\le J_T^\pi-\bigl(wW+Tv(\gamma)\bigr).
\end{equation*}
We therefore define the paper's regret directly relative to the constrained
fluid benchmark:
 \begin{equation}
 R_T(\pi)
 :=J_T^\pi-\bigl(wW+Tv(\gamma)\bigr).
 \label{eq:regret-definition}
\end{equation}

For each resource rate \(r\), let \(\bm y^*(r)\) denote the unique optimizer
of the one-period fluid program \eqref{eq:y-fluid}, and let
\(\lambda^*(r)\) be its selected smallest resource multiplier.   Their existence and regularity are established in
\Cref{lem:kkt-path}.

\section{Resource-Adaptive Primal-Dual Learning}
\label{sec:algorithm}

The state driving our resource-adaptive framework is the remaining-resource rate.  At period
\(t\), define
\begin{equation*}
 C_t:=B_t+\sum_{i=1}^NI_{i,t},\qquad
 n_t:=T-t+1,\qquad r_t:=\frac{C_t}{n_t}.
\end{equation*}
Here \(C_t\) includes inventory at both the warehouse and the stores, and
\(r_t\) is the inventory available per remaining period. In particular,
\(r_1=W/T=\gamma\).  Because shipments only relocate inventory while sales
deplete it,
\begin{equation*}
 C_{t+1}=C_t-\sum_{i=1}^NS_{i,t},\qquad
 r_{t+1}
 =r_t+\frac{r_t-\sum_{i=1}^NS_{i,t}}{n_t-1}
 \qquad(n_t\ge2).
\end{equation*}
Thus each sales observation both provides censored feedback and changes the
next resource index.  \RAPDL\ maintains preferred thresholds \(\bm X_t\) and
a common shadow price \(\lambda_t\) to track the moving target
\((\bm y^*(r_t),\lambda^*(r_t))\), rather than the fixed target indexed by
\(\gamma\).

At a high level, each period has four operations: project the preferred
thresholds onto the physically feasible inventory set, observe censored sales
and form a Primal-Dual sample, update the remaining-resource rate, and take a
projected stochastic step.  The rate \(r_t\) moves the fluid target after every
sales realization.  Section~3.1 specifies the complete executable recursion, while
Sections~3.2 and~3.3 explain its gradient construction, safeguards, and
step-size design.

\subsection{\texorpdfstring{The Executable RAPDL Recursion}{The Executable RAPDL Recursion}}
\label{sec:executable-recursion}

\Cref{alg:rapdl} gives the complete executable RAPDL pseudocode.  We first
define the action projection and the quantities appearing in its updates.

The preferred threshold \(\bm X_t\) is a learning state and need not be a
feasible post-shipment inventory.  Given the current physical state, the
feasible action set is
\begin{equation*}
 \mathcal F_t
 :=\left\{\bm y:y_i\ge I_{i,t},\
       \sum_{i=1}^N(y_i-I_{i,t})\le B_t\right\}
 =\left\{\bm y:y_i\ge I_{i,t},\
       \sum_{i=1}^N y_i\le C_t\right\}.
\end{equation*}
The right-hand side uses \(C_t\), rather than \(B_t\), because store
carryover is already part of the available system inventory.  We implement
the Euclidean projection of \(\bm X_t\) onto \(\mathcal F_t\),
\[
 \bm Y_t=\argmin_{\bm y\in\mathcal F_t}
 \frac12\|\bm y-\bm X_t\|^2,
\]
which has the water-filling representation
\citep[Lemma~2]{duchi2008}
\begin{equation*}
 \begin{aligned}
 \nu_t:=\inf\left\{\nu\ge0:
   \sum_{i=1}^N\max\{I_{i,t},X_{i,t}-\nu\}\le C_t\right\},~~~
 Y_{i,t}=\max\{I_{i,t},X_{i,t}-\nu_t\},
 \qquad i\in[N].
 \end{aligned}
\end{equation*}

To specify the projected update, define the cost-based survival ratio
\begin{equation}
 p_i:=\frac{h_i}{h_i+b_i-c_i},
 \qquad i\in[N].
 \label{eq:critical-ratio}
\end{equation}
Define the capped aggregate resource rate by
\(r^{\mathrm c}:=\min\{r,\Dsum\}\).  Independently of \(r\), for each store
define the fixed safe threshold cap
\begin{equation}
 \bar X_i:=\bar d_i-\frac{p_i}{2K_i},
 \qquad i\in[N].
 \label{eq:fixed-safe-cap}
\end{equation}
The algorithm combines \(\bar X_i\) with the resource-dependent bound through
\(U_i(r)\).  With the dual cap
\(\lambda_{\max}:=\max_{i\in[N]}(b_i-c_i)\), the moving Primal-Dual
projection set is
\begin{equation}
 U_i(r):=\min\left\{\bar X_i,\frac{r^{\mathrm c}}{p_i}\right\},
 \quad i\in[N],
 \qquad
 \calK(r):=
 \prod_{i=1}^N[0,U_i(r)]\times[0,\lambda_{\max}].
 \label{eq:projection-set}
\end{equation}
Let \(j:=\min\operatorname*{arg\,max}_{i\in[N]}(b_i-c_i)\) be the
maximum-margin anchor, with ties broken by the smallest index.
After implementing \(\bm Y_t\) and observing
\(S_{i,t}:=D_{i,t}\wedge Y_{i,t}\), form
\begin{equation}
 \widehat G_{i,t}
 :=(h_i+b_i-c_i)\1\{S_{i,t}<Y_{i,t}\}-(b_i-c_i)
   +\lambda_t\1\{S_{i,t}=Y_{i,t}\},
 \qquad i\in[N],
 \label{eq:observable-samples}
\end{equation}
and
\begin{equation}
 \widehat G_{0,t}
 :=r_t^{\mathrm c}-\sum_{i=1}^NS_{i,t}
   +\theta\widehat G_{j,t},
 \qquad
 \widehat G_t
 :=(\widehat G_{1,t},\ldots,\widehat G_{N,t},\widehat G_{0,t}).
 \label{eq:stochastic-field}
\end{equation}
Here \(\theta>0\) is the weight assigned to the anchor correction in the
dual update. Its stabilizing role is discussed in
Section~\ref{sec:stochastic-step}.
Finally, use the two-sided harmonic step
\begin{equation}
 \alpha_t:=\frac{\chi}{\tau_0+\min\{t,n_t\}}
 =\frac{\chi}{\tau_0+\min\{t,T-t+1\}}.
 \label{eq:two-sided-stepsize}
\end{equation}

\begin{algorithm}[h]
\caption{Resource-Adaptive Primal-Dual Learning (RAPDL) for OWMS }
\label{alg:rapdl}
\small
\begin{algorithmic}[1]
\State \textbf{Initialize}
\(B_1=C_1:=\gamma T\), \(n_1:=T\), \(r_1:=\gamma\),
\(r_1^{\mathrm c}:=\min\{\gamma,\Dsum\}\) and
\(I_{i,1}=X_{i,1}:=0\) for all \(i\), and
\(\lambda_1:=0\).
\For{\(t=1,\ldots,T\)}
\State \textbf{Inventory decision:} compute
\(\nu_t:=\inf\{\nu\ge0:
\sum_{i=1}^N\max\{I_{i,t},X_{i,t}-\nu\}\le C_t\}\) and set
\(Y_{i,t}:=\max\{I_{i,t},X_{i,t}-\nu_t\}\) for all \(i\).
\State \textbf{Gradient estimation:} observe
\(S_{i,t}=D_{i,t}\wedge Y_{i,t}\) and compute, for all \(i\in[N]\),
\[
\widehat G_{i,t}
:=(h_i+b_i-c_i)\1\{S_{i,t}<Y_{i,t}\}-(b_i-c_i)
 +\lambda_t\1\{S_{i,t}=Y_{i,t}\},
\]
\[
\widehat G_{0,t}
:=r_t^{\mathrm c}-\sum_{i=1}^NS_{i,t}
  +\theta\widehat G_{j,t}.
\]
\State \textbf{Physical-state update:}
\(I_{i,t+1}:=Y_{i,t}-S_{i,t}\),
\(C_{t+1}:=C_t-\sum_{i=1}^NS_{i,t}\), and
\(B_{t+1}:=C_{t+1}-\sum_{i=1}^NI_{i,t+1}\).
\If{\(t<T\)}
\State \textbf{Resource update:}
\(n_{t+1}:=n_t-1\),
\(r_{t+1}:=C_{t+1}/n_{t+1}\), and
\(r_{t+1}^{\mathrm c}:=\min\{r_{t+1},\Dsum\}\).
\State \textbf{Primal-Dual update:}
\[
 \begin{aligned}
 X_{i,t+1}
 &:=\proj_{[0,U_i(r_{t+1})]}
       (X_{i,t}-\alpha_t\widehat G_{i,t}),
 &&i\in[N],\\
 \lambda_{t+1}
 &:=\proj_{[0,\lambda_{\max}]}
       (\lambda_t-\alpha_t\widehat G_{0,t}).
 \end{aligned}
\]
\Statex {\footnotesize The projection endpoints are given in
\eqref{eq:projection-set}, and \(\alpha_t\) is given in
\eqref{eq:two-sided-stepsize}.}
\EndIf
\EndFor
\end{algorithmic}
\end{algorithm}

\begin{remark}
If \(C_t=0\), then
\(\bm I_t=\bm X_t=\bm Y_t=\bm S_t=\bm0\), while the dual update continues for
\(t<T\).  At \(t=T\), only the terminal physical state is updated, and no
next-period learning state is defined.
\end{remark}

\subsection{\texorpdfstring{Gradient Estimation and Dual Stabilization}{Gradient Estimation and Dual Stabilization}}
\label{sec:stochastic-step}

The recursion above uses censored sales to construct
\(\widehat G_t\).  We first identify the fluid direction estimated at the
implemented action \(\bm Y_t\), and then explain the anchor correction that
stabilizes its dual coordinate.

Write \(\bm m(\bm y):=(m_i(y_i))_{i=1}^N\).  At resource rate
\(r_t\), define the threshold-space fluid Lagrangian
and its derivatives by
\begin{equation}
 \begin{aligned}
 \widetilde{\mathcal L}_{r_t}(\bm y,\lambda)
 &:=\sum_{i=1}^N\ell_i(y_i)
   +\lambda\left(\sum_{i=1}^Nm_i(y_i)-r_t\right),\\
 \partial_{y_i}\widetilde{\mathcal L}_{r_t}(\bm y,\lambda)
 &=(h_i+b_i-c_i)F_i(y_i)-(b_i-c_i)
   +\lambda\bigl(1-F_i(y_i)\bigr),
 &&i\in[N],\\[2pt]
 \partial_\lambda\widetilde{\mathcal L}_{r_t}(\bm y,\lambda)
 &=\sum_{i=1}^Nm_i(y_i)-r_t.
 \end{aligned}
 \label{eq:threshold-field}
\end{equation}

Although this field is not computable because \(F_i\) and \(m_i\) are
unknown, the algorithm samples it at the implemented action as follows. 

Because \(D_{i,t}<Y_{i,t}\) iff \(S_{i,t}<Y_{i,t}\), censored sales
make \eqref{eq:observable-samples} observable and satisfy\footnote{Throughout this section and the regret analysis, write
\(\calH_t:=\calH_t^{\RAPDL}\).}
\[
 \E[\widehat G_{i,t}\mid\calH_t]
 =\partial_{y_i}\widetilde{\mathcal L}_{r_t}(\bm Y_t,\lambda_t),
 \qquad
 \E\!\left[r_t-\sum_iS_{i,t}\mid\calH_t\right]
 =-\partial_\lambda\widetilde{\mathcal L}_{r_t}(\bm Y_t,\lambda_t).
\]

These identities establish the gradient-estimation component, but the
uncorrected joint direction still lacks a restoring force in the dual
coordinate.  This motivates the following stabilization.

\textbf{The anchor correction.} The sales-space primal Hessian is uniformly positive on the safe region, but
the joint Lagrangian has zero curvature in the dual coordinate.  Without an
additional restoring term, the usual \(O(1/t)\) mean-square contraction is
unavailable.
The generic \(O(t^{-1/2})\) tracking scale would accumulate to
\(O(\sqrt T)\), rather than \(O(\log T)\).  The anchor borrows curvature
from one primal coordinate \citep{benziEtAl2005,latafatEtAl2019}.  For the
maximum-margin anchor \(j\), its KKT residual vanishes along the selected target
path, including at zero resource, so the correction stabilizes the dual
direction without moving the target.  Specifically, use
\begin{equation*}
 \begin{aligned}
 G^\theta(\bm y,\lambda;r)
 &:=
 \begin{pmatrix}
  \nabla_{\bm y}\widetilde{\mathcal L}_r(\bm y,\lambda)\\[2pt]
  r^{\mathrm c}-\displaystyle\sum_{i=1}^Nm_i(y_i)
   +\theta\,\partial_{y_j}\widetilde{\mathcal L}_r(\bm y,\lambda)
 \end{pmatrix}.
 \end{aligned}
\end{equation*}
Its observable version is exactly \eqref{eq:stochastic-field}, so the anchor
requires no additional sample.  In particular,
\[
 \E[\widehat G_t\mid\calH_t]
 =G^\theta(\bm Y_t,\lambda_t;r_t).
\]
Thus feedback estimates the stabilized field at the implemented action
\(\bm Y_t\), while the preferred-state direction is
\(G^\theta(\bm X_t,\lambda_t;r_t)\).  Their difference is the implementation
bias bounded in \eqref{eq:feasibility-bias}.

\subsection{\texorpdfstring{Safeguards and Step-Size Design}{Safeguards and Step-Size Design}}
\label{sec:stabilized-field}

This subsection turns to the remaining
safeguards and step-size design.  Resource capping controls sample magnitude,
safe state clipping preserves curvature, and the two-sided step-size schedule
balances learning across the horizon.

\textbf{Resource capping and a safe clipping box.}
Near the horizon, \(r_t=C_t/n_t\) can be of order \(T\), making both
the uncapped dual sample \(r_t-\sum_{i=1}^NS_{i,t}\) and the resulting
quadratic term in the one-step analysis unbounded in \(T\).

The cap \(r^{\mathrm c}=\min\{r,\Dsum\}\) removes this endpoint explosion.
By construction, \(0\le r^{\mathrm c}\le\Dsum\), and
\(r\mapsto r^{\mathrm c}\) is one-Lipschitz.  Since total one-period sales
also lies in \([0,\Dsum]\), the capped dual sample is bounded in absolute
value by \(\Dsum\).
It leaves the fluid target unchanged because it truncates only the
nonbinding region. See \Cref{lem:capped-rate}.

The fixed caps in \eqref{eq:fixed-safe-cap} keep every preferred
threshold in a region of uniformly positive survival probability without
excluding the fluid target.  Indeed, because \(\lambda^*(r)\ge0\),
\[
 y_i^*(r)\le F_i^{-1}(1-p_i)
 \le\bar d_i-\frac{p_i}{K_i}<\bar X_i.
\]
For later use, define the associated survival-probability floor
\(\beta_i:=\kappa_i p_i/(2K_i)\), \(i\in[N]\).  Then, for every
\(0\le x\le\bar X_i\),
\[
 1-F_i(x)\ge\kappa_i(\bar d_i-x)\ge\beta_i.
\]

The second clipping bound responds to the remaining resource.  Recall from
\eqref{eq:critical-ratio} that \(p_i=h_i/(h_i+b_i-c_i)\) is the survival
probability at the zero-price newsvendor threshold.  At an active fluid
target, survival is therefore at least \(p_i\), so
\[
 p_i y_i^*(r)
 \le m_i(y_i^*(r))
 \le r^{\mathrm c},
 \qquad\text{and therefore}\qquad
 y_i^*(r)\le\frac{r^{\mathrm c}}{p_i}.
\]
If the store is inactive, \(y_i^*(r)=0\), so the same bound holds.
This gives precisely the cap \(U_i(r)\) and projection box
\(\calK(r)\) defined in \eqref{eq:projection-set}.
The endpoint \(\bar X_i\) protects the survival and curvature geometry,
whereas \(r^{\mathrm c}/p_i\) makes the box contract with the remaining
resource.  Taking their minimum enforces both requirements.  Finally, \(\lambda_{\max}\) is sufficient because every
store's fluid best response is zero at any resource price at least
\(\lambda_{\max}\).

\textbf{Step-size choice.}
The scale \(\chi\) in \eqref{eq:two-sided-stepsize} controls
responsiveness, while \(\tau_0\) prevents
the larger endpoint updates from being too aggressive.  Thus the two-sided
schedule permits faster learning initially and faster resource adjustment
near the end, with smaller updates in the middle.  The constructive choice
in \Cref{rem:computable-tuning} is horizon independent and certifies the
regret guarantee.  In practice, \((\theta,\chi,\tau_0)\) can instead be selected offline by a coarse grid search or rolling-origin validation for the intended operating instance.
Appendix~\ref{app:endpoint-envelope} describes how fixed support envelopes
can be used in the caps and outlines the associated localization argument.

\begin{remark}
\label{rem:two-rate-rapdl}
We also consider a two-rate stepsize variant
\[
 \alpha_t^x=\frac{\chi_x}{\tau_0+\min\{t,n_t\}},
 \qquad
 \alpha_t^\lambda=\frac{\chi_\lambda}{\tau_0+\min\{t,n_t\}}.
\]
\begingroup
Appendix~\ref{app:two-rate-rapdl} discusses this two-rate variant and sketches
its theoretical guarantee.
\endgroup

\end{remark}

\section{\texorpdfstring{Main Result and Regret Analysis}{Main Result and Regret Analysis}}
\label{sec:main-results}

The following theorem quantifies the performance of the
resource-adaptive learning framework.

\begin{theorem}
\label{thm:main}
Under \Cref{ass:primitives}, \RAPDL\ with the initialization in
\Cref{alg:rapdl} admits horizon-independent tuning based only on the known
primitives such that, for every \(T\ge2\), every
\(0\le\gamma\le\Dsum\), and every admissible within-period joint demand law,
\begin{equation*}
 \begin{aligned}
 &0\le J_T^{\RAPDL}-J_T^*
 \le R_T(\RAPDL)\le C\log T,
 \end{aligned}
\end{equation*}
where \(C<\infty\) depends only on the fixed model primitives and the selected
horizon-independent tuning, and not on \(T\), \(\gamma\).
\end{theorem}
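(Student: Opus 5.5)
The lower bounds \(0\le J_T^{\RAPDL}-J_T^*\le R_T(\RAPDL)\) follow at once from \Cref{prop:fluid-benchmark} and the definition \eqref{eq:regret-definition}. It remains to show \(R_T(\RAPDL)\le C\log T\).

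By \eqref{eq:exact-cost},
\[
 R_T=\sum_{t=1}^T\E\Bigl[\sum_i\ell_i(Y_{i,t})-v(\gamma)\Bigr].
\]
I would split each summand into three terms:
\[
 \sum_i\ell_i(Y_{i,t})-v(\gamma)
 =\underbrace{\bigl[v(r_t)-v(\gamma)\bigr]}_{\text{path gap}}
 +\underbrace{\Bigl[\sum_i\ell_i(X_{i,t})-v(r_t)\Bigr]}_{\text{learning gap}}
 +\underbrace{\Bigl[\sum_i\ell_i(Y_{i,t})-\ell_i(X_{i,t})\Bigr]}_{\text{implementation gap}}.
\]

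\textbf{Path gap.} Here I would use \Cref{lem:kkt-path}: \(v\) is convex and nonincreasing with \(v'(r)=-\lambda^*(r)\), and \(v\) is \(C^{1}\) with Lipschitz derivative on the relevant range. A Taylor expansion around \(\gamma\) gives
\[
 \sum_t\E[v(r_t)-v(\gamma)]\le -\lambda^*(\gamma)\sum_t\E(r_t-\gamma)+C\sum_t\E(r_t-\gamma)^2 .
\]
For the first-order term, the recursion \(r_{t+1}-r_t=(r_t-\sum_iS_{i,t})/(n_t-1)\) shows that \(r_t\) is nearly a martingale once its drift \(r_t-\sum_i m_i(Y_{i,t})\) is controlled. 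Telescoping with weights \(1/n_t\) then bounds the first-order term by the cumulative drift, which is itself a learning-error term, plus leftover sales at the end. For the second-order term, the martingale increments have variance of order \(1/n_t^2\), so \(\E(r_t-\gamma)^2\lesssim 1/n_t\) plus learning error, and the sum is \(O(\log T)\). Capping \(r^{\mathrm c}\) (\Cref{lem:capped-rate}) keeps the final periods bounded.

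\textbf{Learning gap (main step).} Define \(z_t=(\bm X_t,\lambda_t)\), the moving target \(z^*_t=(\bm y^*(r_t),\lambda^*(r_t))\), and \(e_t=\E\|z_t-z_t^*\|^2\). Working in sales space \(u_i=m_i(y_i)\), the survival floor \(\beta_i\) makes the primal Hessian uniformly positive on \(\calK(r)\). The anchor term \(\theta\,\partial_{y_j}\widetilde{\mathcal L}\) supplies curvature in \(\lambda\), so for \(\theta\) small enough relative to the curvature constants the field \(G^\theta(\cdot;r)\) is strongly monotone with modulus \(\mu\) around \(z^*(r)\), and it vanishes at \(z^*(r)\) (since \(j\) is the anchor). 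The one-step projected analysis, using that \(z^*(r_{t+1})\in\calK(r_{t+1})\), gives
\[
 e_{t+1}\le(1-2\mu\alpha_t+C\alpha_t^2)e_t+C\alpha_t^2+C\E\|z^*_{t+1}-z^*_t\|^2+(\text{cross term})+C\alpha_t\,\E\|\bm Y_t-\bm X_t\|\cdot\|z_t-z_t^*\| .
\]
The target moves by \(\|z^*_{t+1}-z^*_t\|\le L|r_{t+1}-r_t|\le L\Dsum/(n_t-1)\) by Lipschitzness of the KKT path. Its conditional mean is of order drift\(/n_t\), which handles the cross term via Young's inequality. Choosing \(\chi\mu>1\), the two-sided harmonic step then gives \(e_t\lesssim 1/\min\{t,n_t\}\). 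Since the KKT residual vanishes at the target, \(\sum_i\ell_i(X_{i,t})-v(r_t)\) is controlled by a Lagrangian second-order expansion by \(C e_t\) plus \(\lambda^*(r_t)\) times the budget residual. That budget residual is exactly the drift term, which cancels against the path-gap first-order term. Summing \(1/\min\{t,n_t\}\) gives \(O(\log T)\).

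\textbf{Implementation gap.} I expect to show \(\E\|\bm Y_t-\bm X_t\|\) is small, \eqref{eq:feasibility-bias}. Since \(X_{i,t}\le r_t^{\mathrm c}/p_i\) and prior leftover \(I_{i,t}\) exceeds \(X_{i,t}\) only by the one-step movement \(\alpha_{t-1}|\widehat G|\), the projection is nontrivial only through carried-over stock (bounded by \(O(\alpha_t)\) excess) or the budget constraint \(\sum X_i>C_t\). The latter is possible only in the last \(O(1)\) periods, or when \(r_t\) falls below \(\sum m_i(X_{i,t})\) by a margin, an event controlled by \(e_t\) via Markov. Since \(\ell_i\) is Lipschitz, the total implementation gap is \(\sum_t O(\alpha_t+e_t^{1/2}\cdot\alpha_t)+O(1)=O(\log T)\). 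A first-order argument combining this with the drift term above avoids a \(\sqrt{e_t}\) loss.

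The hardest part will be closing the coupled recursion: showing the drift of \(r_t\) is second order, so the path-gap linear term costs only \(\log T\), while the target itself depends on \(r_t\). I would handle it by treating the drift and \(e_t\) jointly through a single Lyapunov function \(e_t+\eta\,\E(r_t-\gamma)^2\), uniformly over \(\gamma\in[0,\Dsum]\), including the degenerate \(r^{\mathrm c}\to0\) case where the anchor's vanishing residual is used.
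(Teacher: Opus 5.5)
\textbf{There are genuine gaps, the first of which breaks the argument outright.}

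Your lower bounds $0\le J_T^{\RAPDL}-J_T^*\le R_T(\RAPDL)$ are fine, but the upper bound already fails at the path gap. Expanding $v(r_t)$ around the fixed rate $\gamma$ leaves $C\sum_t\E(r_t-\gamma)^2$, and this is not $O(\log T)$ uniformly in $\gamma$. Take $\gamma\in(r_0,\Dsum]$, which the theorem allows. Even a policy that implements $\bm y^*(r_t)$ exactly has expected sales $\min\{r_t,r_0\}$. So $r_t$ is a submartingale with $\E r_t-\gamma\ge(\gamma-r_0)(t-1)/n_t$, and $\sum_t\E(r_t-\gamma)^2$ is of order $(\gamma-r_0)^2T^2$. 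This drift is structural, not learning error. Hence your claim that $\E(r_t-\gamma)^2\le C/n_t$ plus learning error is false, and the joint Lyapunov function $e_t+\eta\,\E(r_t-\gamma)^2$ grows like $T^2$ near the end of the horizon.

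The paper never compares $r_t$ with $\gamma$. It telescopes the continuation value one period at a time, using only $|r_{t+1}-r_t|\le\Dsum/(n_t-1)$ (or flatness of $v$ when $r_t>\Dsum$) and the Lipschitz derivative $v'=-\lambda^*$:
\[
 \sum_{i=1}^N\ell_i(Y_{i,t})+(n_t-1)\E[v(r_{t+1})\mid\calH_t]-n_tv(r_t)
 \le\Delta(\bm m(\bm Y_t);r_t)+\frac{L_v\Dsum^2}{2(n_t-1)}.
\]
Here the drift term $\lambda^*(r_t)\bigl(\sum_im_i(Y_{i,t})-r_t\bigr)$ is absorbed exactly into the nonnegative Lagrangian gap at the \emph{current} rate. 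Summing gives $R_T\le\E\sum_t\Delta(\bm m(\bm Y_t);r_t)+C\log T$, so the coupled recursion you flag as the hardest part never arises.

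Your learning-gap step is missing three devices.

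\emph{The metric.} In Euclidean threshold space, pairing the mean field $(1-F_i(X_{i,t}))\bigl(g_i'(m_i(X_{i,t}))+\lambda_t\bigr)$ with $X_{i,t}-y_i^*$ leaves the cross term $(\lambda_t-\lambda^*)\sum_i\bigl[(1-F_i(X_{i,t}))(X_{i,t}-y_i^*)-(m_i(X_{i,t})-s_i^*)\bigr]$. The bracket is nonpositive and of order $(X_{i,t}-y_i^*)^2$, while $\lambda_t-\lambda^*$ has either sign and size up to $\lambda_{\max}$. For general costs this is not dominated by the primal curvature. Squared sales-space error is worse, since it leaves an uncancelled factor $(1-F_i)^2$. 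So strong monotonicity ``around $z^*(r)$'' is not available. The paper's mirror potential $\mathcal B_i(\sigma,s)=\int_\sigma^s(u-\sigma)\bigl(1-F_i(m_i^{-1}(u))\bigr)^{-2}\,\dd u$ satisfies $\partial_x\mathcal B_i(\sigma,m_i(x))=(m_i(x)-\sigma)/(1-F_i(x))$. This cancels the survival factor, so the Primal-Dual cross terms cancel identically.

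\emph{Kinks in the target path.} $q^*(r)$ has kinks at active-set changes and at $r_0$. There, symmetric noise in $r_{t+1}$ is rectified into an $O(1/n_t)$ mean motion. So $\E[z^*(r_{t+1})-z^*(r_t)\mid\calH_t]$ is not the drift over $n_t$ plus $O(n_t^{-2})$, and Young's inequality leaves forcing $|r_{t+1}-r_t|^2/\alpha_t=O(\alpha_t)$ rather than $O(\alpha_t^2)$. The paper instead tracks the backward-smoothed comparator $q^{\eta_t}(r_t)$ with $\eta_t=\sqrt{\alpha_t}$. This trades an $O(\eta_t^2)=O(\alpha_t)$ bias for a Taylor remainder $O(|r_{t+1}-r_t|^2/\eta_t)$.

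\emph{Inactive stores.} The KKT residual does not vanish for stores with $\lambda^*(r_t)>b_i-c_i$. The gap therefore contains the first-order term $\sum_im_i(X_{i,t})\bigl(g_i'(s_i^*(r_t))+\lambda^*(r_t)\bigr)$. A squared-error bound controls this only as $C\sqrt{e_t}$, which sums to order $\sqrt T$. The paper keeps it as a dissipation term in the recursion and bounds its cumulative sum separately by summation by parts (\Cref{lem:technical-pd}\textup{(ii)}).

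\emph{Implementation gap.} Excess store stock is not pathwise bounded by one step of $\bm X$. The paper controls it cumulatively through $(I_{i,t+1}-X_{i,t+1})^{+}\le\bigl((I_{i,t}-X_{i,t})^{+}-D_{i,t}\bigr)^{+}+|X_{i,t+1}-X_{i,t}|$. The clip $X_{i,t}\le r_t/p_i$ also gives $\sum_iX_{i,t}\le C_t$ once $n_t\ge\sum_ip_i^{-1}$, so no Markov or $\sqrt{e_t}$ step is needed.
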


\begingroup

\Cref{thm:main} concerns the direct safe-box formulation.  As a separate
extension, Appendix~\ref{app:endpoint-envelope} sketches how the analysis can
be adapted when exact endpoints are replaced by fixed deterministic support
envelopes.
\endgroup

\begin{remark}
\label{rem:computable-tuning}
The tuning in \Cref{thm:main} can be fixed offline by a finite deterministic
construction using only \(N\), the costs, and the known support and density
bounds.  It does not require the horizon, the demand distributions, unknown
quantiles, or any Primal-Dual optimizer.  The construction is given in
\Cref{app:tuning}.
\end{remark}

\subsection{\texorpdfstring{High-Level Idea of Regret Analysis}{High-Level Idea of Regret Analysis}}
\label{sec:analysis}

At a high level, regret consists of three distinct logarithmic
components.  

\emph{First, the re-solving gap.}  This is the intrinsic gap
between the resource-indexed fluid re-solving path and the fluid benchmark:
even if the policy could follow this path exactly, movements in the
remaining-resource rate would generate an \(O(\log T)\) cumulative gap.

\emph{Second, the learning gap.}  Because the demand laws are unknown, the
Primal-Dual re-solving path cannot be computed directly.  \RAPDL\ instead
learns the path by tracking it from censored sales, and the learning analysis
shows that the resulting cumulative tracking gap is \(O(\log T)\).

\emph{Third, the implementation gap.}  The
preferred actions produced by the learning recursion may differ from the
actions that can be implemented under the current warehouse and store
inventories. The physical implementation analysis shows that this cumulative
discrepancy is also
\(O(\log T)\).  Adding the re-solving, learning, and implementation gaps
yields the overall \(O(\log T)\) regret bound.

\subsection{\texorpdfstring{Analytical Setup in Expected-Sales Space}{Analytical Setup in Expected-Sales Space}}
\label{sec:fluid-geometry}

Direct analysis in threshold space does not provide the uniform
primal curvature required by the tracking argument.  Although \(\ell_i\) is
strongly convex, the nonlinear resource term changes the curvature of the
threshold-space Lagrangian to
\[
 \frac{\partial^2}{\partial y_i^2}
 \bigl(\ell_i(y_i)+\lambda m_i(y_i)\bigr)
 =(h_i+b_i-c_i-\lambda)f_i(y_i),
\]
which need not be uniformly positive.  Setting \(s_i=m_i(y_i)\) makes the
resource term affine, so the shadow price no longer erodes the primal
curvature.  This change of variables is purely analytical. The executable
policy remains in threshold space.  Since
\(m_i\) is strictly increasing on \([0,\bar d_i)\), define
\[
 s_i:=m_i(y_i),\qquad
 g_i(s):=\ell_i(m_i^{-1}(s)),\qquad 0\le s<\mu_i,
\]
and set \(g_i(\mu_i):=\ell_i(\bar d_i)\) by continuity.  Direct differentiation gives
\(g_i'(0)=-(b_i-c_i)\) and, whenever
\(1-F_i(m_i^{-1}(s))\ge\beta>0\),
\begin{equation}
 h_i\kappa_i\le g_i''(s)\le h_iK_i/\beta^3.
 \label{eq:g-curvature-bounds}
\end{equation}
Thus \(g_i\) is strongly convex, and the fluid program becomes
\[
 v(r)=\min_{0\le s_i\le\mu_i}
 \left\{\sum_{i=1}^Ng_i(s_i):\sum_{i=1}^Ns_i\le r\right\}.
\]
Its resource-indexed Lagrangian is
\[
 \mathcal L_r(\bm s,\lambda)
 :=\sum_{i=1}^Ng_i(s_i)
 +\lambda\left(\sum_{i=1}^Ns_i-r\right).
\]

Let
\[
 r_0:=\sum_{i=1}^N
 m_i\!\left(F_i^{-1}(1-p_i)\right)
\]
be the expected sales of the unconstrained zero-price solution.  The next
lemma collects exactly the path properties used in the regret analysis.

\begin{lemma}
\label{lem:kkt-path}
For every \(r\ge0\), the transformed fluid program has a unique primal
optimizer \(\bm s^*(r)\), with \(s_i^*(r)<\mu_i\), and a selected smallest
optimal multiplier \(\lambda^*(r)\).  The following properties hold,
including when several stores have identical margins.
\begin{enumerate}[label=\textup{(\roman*)},leftmargin=*]
\item The KKT residuals are
\begin{equation}
 \begin{aligned}
 g_i'(s_i^*(r))+\lambda^*(r)&\ge0,&
 s_i^*(r)\bigl(g_i'(s_i^*(r))+\lambda^*(r)\bigr)&=0,
 &&i\in[N],\\
 r-\sum_{i=1}^Ns_i^*(r)&\ge0,&
 \lambda^*(r)\left(r-\sum_{i=1}^Ns_i^*(r)\right)&=0.
 \end{aligned}
 \label{eq:KKT-residuals}
\end{equation}
In particular, store \(i\) is active if and only if
\(\lambda^*(r)<b_i-c_i\). On an active coordinate,
\(-g_i'(s_i^*(r))=\lambda^*(r)\).

\item The resource-clearing branches satisfy
\[
 \sum_{i=1}^Ns_i^*(r)=\min\{r,r_0\},\qquad
 (\bm s^*(0),\lambda^*(0))=(\bm0,\lambda_{\max}),\qquad
 \lambda^*(r)=0\quad(r\ge r_0).
\]

\item There is a primitive-computable constant \(L_v<\infty\) such that
\[
 |\lambda^*(r')-\lambda^*(r)|\le L_v|r'-r|,
 \qquad r,r'\ge0.
\]
Moreover, \(q^*(r):=(\bm s^*(r),\lambda^*(r))\) is globally Lipschitz.
Each \(s_i^*(r)\) is nondecreasing, whereas \(\lambda^*(r)\) is
nonincreasing.  The value function is continuously differentiable, with
\[
 v'(r)=-\lambda^*(r),\qquad
 |v'(r')-v'(r)|\le L_v|r'-r|,
\]
where the derivative at zero is the right derivative.
\end{enumerate}
\end{lemma}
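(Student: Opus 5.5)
The plan is to work entirely in expected-sales space, where the program is a separable strictly convex minimization over a box with one linear coupling constraint. We argue in three stages: existence and KKT, the explicit branches, and regularity.

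\textbf{Existence, uniqueness, and KKT.} First we record the properties of \(g_i\) on \([0,\mu_i]\). It is continuous, and differentiable with
\[
g_i'(s)=\frac{\ell_i'(y)}{m_i'(y)}=\frac{h_i}{1-F_i(y)}-(h_i+b_i-c_i),\qquad y=m_i^{-1}(s).
\]
This derivative is strictly increasing and tends to \(+\infty\) as \(s\uparrow\mu_i\). The feasible set is compact and nonempty, since \(\bm s=\bm 0\) is feasible, so a minimizer exists. Strict convexity of \(\sum_i g_i\) gives uniqueness. Because \(g_i'\to+\infty\) at \(\mu_i\) while the budget constraint only bounds sums from above, reducing \(s_i\) near \(\mu_i\) strictly improves the objective. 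Hence \(s_i^*(r)<\mu_i\), and the upper box constraint is never binding.

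Slater's condition fails at \(r=0\), so we handle that case directly: the only feasible point is \(\bm 0\). For \(r>0\), Slater holds and yields KKT multipliers. The condition \(g_i'(s_i)+\lambda\ge0\) with complementarity follows from the lower bound \(s_i\ge0\).

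The set of optimal multipliers is a closed interval; we take \(\lambda^*(r)\) to be its minimum. At \(r=0\) we define the multiplier set as all \(\lambda\) satisfying \eqref{eq:KKT-residuals} with \(\bm s=\bm0\). This means \(\lambda\ge\max_i(-g_i'(0))=\lambda_{\max}\), so the smallest such multiplier is \(\lambda_{\max}\).

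The activity characterization follows from monotonicity of \(g_i'\) together with \(g_i'(0)=-(b_i-c_i)\):
\[
s_i^*>0 \iff g_i'(0)<-\lambda^* \iff \lambda^*<b_i-c_i .
\]

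\textbf{Branches (ii).} The zero-price solution solves \(g_i'(s_i)=0\), which gives \(s_i=m_i(F_i^{-1}(1-p_i))\), with sum \(r_0\). If \(r\ge r_0\), this point is feasible, so \(\lambda^*=0\). If \(r<r_0\), the choice \(\lambda=0\) would force the budget to be violated. Therefore \(\lambda^*>0\), the constraint binds, and \(\sum_i s_i^*=r\). The case \(r=0\) was handled above.

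\textbf{Regularity (iii).} Define the inverse response \(\sigma_i(\lambda):=(g_i')^{-1}(-\lambda)\) for \(\lambda<b_i-c_i\), and \(\sigma_i(\lambda):=0\) otherwise. Then \(\Sigma(\lambda)=\sum_i\sigma_i(\lambda)\) is continuous and nonincreasing on \([0,\lambda_{\max}]\), and it is strictly decreasing wherever some store is active. On \((0,r_0)\), the multiplier \(\lambda^*(r)\) is the unique solution of \(\Sigma(\lambda)=r\); uniqueness holds because on \([0,\lambda_{\max})\) at least the anchor store is active.

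For the Lipschitz bound we show that \(-\Sigma'\) is bounded below. On active coordinates,
\[
\sigma_i'(\lambda)=-\frac{1}{g_i''(\sigma_i)}\le -\frac{1}{\sup g_i''}.
\]
Along the path, \(s_i^*\le m_i(F_i^{-1}(1-p_i))\), so the survival probability is at least \(p_i\). Applying \eqref{eq:g-curvature-bounds} with \(\beta=p_i\) gives
\[
g_i''\le \frac{h_iK_i}{p_i^3}.
\]
Since the anchor \(j\) is active whenever \(\lambda<\lambda_{\max}\), we obtain \(-\Sigma'\ge p_j^3/(h_jK_j)\) almost everywhere. Hence \(\lambda^*\) is Lipschitz with
\[
L_v=\frac{h_jK_j}{p_j^3}.
\]
Because \(\lambda^*\) is constant for \(r\ge r_0\) and continuous at \(0\) and \(r_0\), this bound is global.

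Monotonicity of \(\lambda^*\) follows because \(\Sigma\) is nonincreasing. Since \(s_i^*=\sigma_i(\lambda^*)\) and \(\sigma_i\) is nonincreasing in \(\lambda\), each \(s_i^*\) is nondecreasing in \(r\). Each \(s_i^*\) is also Lipschitz, because \(|\sigma_i'|\le 1/(h_i\kappa_i)\) by the curvature lower bound. Identical margins cause no difficulty, since this argument never needs to distinguish coordinates.

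Finally, we get \(v'(r)=-\lambda^*(r)\) by an envelope/sensitivity argument. The function \(v\) is convex and nonincreasing, and its subdifferential at \(r>0\) is \(-\)(set of optimal multipliers). The multiplier is unique for \(r>0\): it equals the common value \(-g_i'(s_i^*)\) on the active stores, which are nonempty when \(r<r_0\), and it is \(0\) when \(r\ge r_0\). So \(v\) is differentiable there, and continuity of \(\lambda^*\) gives \(C^1\). At \(r=0\) we compute the right derivative directly from \((v(r)-v(0))/r\) together with continuity of \(\lambda^*\).

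\textbf{Main obstacle.} The delicate point is getting a global Lipschitz constant for \(\lambda^*\) uniformly near \(r=0\), where stores drop out and Slater fails. The key is the uniform survival floor on the path combined with the anchor store remaining active for every \(\lambda<\lambda_{\max}\).
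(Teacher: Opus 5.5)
Your proposal is correct and follows essentially the same route as the paper. The paper likewise builds the storewise price response $\phi_i(\lambda)$ (your $\sigma_i$) and lower-bounds $-\Phi'$ on $[0,\lambda_{\max})$ using the always-active maximum-margin store(s) and the survival floor $p_j$; it then inverts to obtain the explicit KKT-path representation and the Lipschitz multiplier, selects $\lambda^*(0)=\lambda_{\max}$ from the multiplier set $[\lambda_{\max},\infty)$, and uses value sensitivity for $v'=-\lambda^*$. The differences are minor: you obtain existence by compactness rather than by exhibiting the KKT point, and your constant $h_jK_j/p_j^3$ equals the paper's single-store term $(h_j+b_j-c_j)K_j/p_j^2$, which the paper sums over all maximum-margin stores to get a slightly smaller $L_v$.
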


The threshold target introduced in Section~\ref{sec:model} is the inverse
representation
\[
 y_i^*(r)=m_i^{-1}(s_i^*(r)),\qquad i\in[N].
\]
Thus \((\bm y^*(r),\lambda^*(r))\) describes the target tracked by the
policy, whereas \(\bm s^*(r)\) and \(g_i\) are used only in the analysis.

\subsection{\texorpdfstring{Step 1: Bounding the Re-Solving Gap}{Step 1: Bounding the Re-Solving Gap}}
\label{sec:resolving-gap}

The first component compares the resource-indexed fluid re-solving path with
the fluid benchmark.  To make this comparison, we work in expected-sales
space, where three vectors play different roles:
\(\bm m(\bm Y_t)\) is conditional expected sales under the implemented action,
\(\bm m(\bm X_t)\) is the expected-sales image of the preferred learning
state, and \(\bm s^*(r_t)\) is the unobserved fluid target.
Throughout the regret analysis, we use the pathwise safe-box invariant
$ 0\le X_{i,t},I_{i,t},Y_{i,t}\le\bar X_i, i\in[N],\ t\in[T],$
established from the algorithmic recursion in
\eqref{eq:appendix-physical-safe-box}.

We collect the learning and implementation gaps, respectively, as
\begin{align}
 \mathsf{PD}_T
 &:=
 \sum_{t=1}^T
 \E\!\left[
   \|\bm m(\bm X_t)-\bm s^*(r_t)\|^2
   +\left\langle
    \bm m(\bm X_t)-\bm s^*(r_t),
    \nabla_{\bm s}\mathcal L_{r_t}
    (\bm s^*(r_t),\lambda^*(r_t))
    \right\rangle
 \right],
 \label{eq:learning-gap-block}\\
 \mathsf{IMP}_T
 &:=
 \sum_{t=1}^T\E\|\bm Y_t-\bm X_t\|_1
 +\sum_{t=1}^T\E\|\bm Y_t-\bm X_t\|^2.
 \label{eq:implementation-gap-block}
\end{align}
Here \(\mathsf{PD}_T\) contains the squared tracking error and the first-order
KKT residual that enter the operational gap, whereas \(\mathsf{IMP}_T\)
contains the two implementation norms needed by the tracking and regret
arguments.

The following gap bound is the deterministic bridge from fluid geometry to
regret.  Set
\begin{equation*}
 L_g:=\max_{1\le i\le N}\frac{h_iK_i}{\beta_i^3}.
\end{equation*}

For \(r\ge0\) and any \(\bm s\) satisfying
\(0\le m_i^{-1}(s_i)\le\bar X_i\), define
\begin{equation}
 \begin{aligned}
 \Delta(\bm s;r)
 :={\mathcal L}_r(\bm s,\lambda^*(r))
   -{\mathcal L}_r(\bm s^*(r),\lambda^*(r))=\sum_{i=1}^Ng_i(s_i)-v(r)
   +\lambda^*(r)\left(\sum_{i=1}^Ns_i-r\right).
 \end{aligned}
 \label{eq:sales-gap}
\end{equation}
Because \(\bm s^*(r)\) minimizes
\({\mathcal L}_r(\cdot,\lambda^*(r))\), this gap is nonnegative.  Moreover,
\eqref{eq:g-curvature-bounds} and the bound
\(1-F_i(x)\ge\beta_i\) on \([0,\bar X_i]\) make the
Lagrangian \(L_g\)-smooth on the segment from \(\bm s^*(r)\) to \(\bm s\).
Thus the descent lemma, followed by the KKT complementarity identities in
\eqref{eq:KKT-residuals}, gives directly
\begin{align}
 0\le\Delta(\bm s;r)
 &\le\frac{L_g}{2}\|\bm s-\bm s^*(r)\|^2
 +\left\langle
 \bm s-\bm s^*(r),
 \nabla_{\bm s}{\mathcal L}_r(\bm s^*(r),\lambda^*(r))
 \right\rangle\notag\\
 &=\frac{L_g}{2}\|\bm s-\bm s^*(r)\|^2
 +\sum_{i=1}^Ns_i
 \bigl(g_i'(s_i^*(r))+\lambda^*(r)\bigr).
 \label{eq:gap-residual-bound}
\end{align}
For a threshold vector \(\bm y\), writing
\(\bm m(\bm y):=(m_i(y_i))_{i=1}^N\),
substitution of \(s_i=m_i(y_i)\) and
\(g_i(m_i(y_i))=\ell_i(y_i)\) into \eqref{eq:sales-gap} gives
\begin{equation}
 \Delta(\bm m(\bm y);r)
 =\sum_{i=1}^N\ell_i(y_i)-v(r)
 +\lambda^*(r)\left(\sum_{i=1}^Nm_i(y_i)-r\right).
 \label{eq:operational-gap}
\end{equation}

Fix \(t<T\).  The aggregate inventory recursion gives
\begin{equation}
 r_{t+1}-r_t
 =
 \frac{r_t-\sum_{i=1}^N S_{i,t}}{n_t-1},
 \qquad
 \E[r_{t+1}-r_t\mid\calH_t]
 =
 \frac{r_t-\sum_{i=1}^N m_i(Y_{i,t})}{n_t-1}.
 \label{eq:master-resource-drift}
\end{equation}
The second identity uses that \(\bm Y_t\) is chosen before current demand and
\(\E[S_{i,t}\mid\calH_t]=m_i(Y_{i,t})\).

By \Cref{lem:kkt-path}\textup{(iii)}, the fluid value is differentiable and
its derivative is Lipschitz:
\begin{equation*}
 v'(r)=-\lambda^*(r),\qquad
 |v'(r')-v'(r)|\le L_v|r'-r|,
\end{equation*}
with the right derivative understood at \(r=0\).

Suppose first that \(r_t\le\Dsum\).  Since both \(r_t\) and total realized
sales belong to \([0,\Dsum]\),
\[
 |r_{t+1}-r_t|\le\frac{\Dsum}{n_t-1}.
\]
By \(L_v\)-smoothness, with \(r=r_t\) and \(r'=r_{t+1}\), and then using
\(v'(r_t)=-\lambda^*(r_t)\), gives
\[
 v(r_{t+1})
 \le
 v(r_t)-\lambda^*(r_t)(r_{t+1}-r_t)
 +\frac{L_v}{2}(r_{t+1}-r_t)^2.
\]
Taking the conditional expectation, multiplying by \(n_t-1\), and using
\eqref{eq:master-resource-drift} and
\(|r_{t+1}-r_t|\le\Dsum/(n_t-1)\) give
\begin{align*}
 (n_t-1)\E[v(r_{t+1})\mid\calH_t]\le
 (n_t-1)v(r_t)
 +\lambda^*(r_t)
  \left(\sum_{i=1}^Nm_i(Y_{i,t})-r_t\right)
 +\frac{L_v\Dsum^2}{2(n_t-1)}.
\end{align*}
Adding \(\sum_{i=1}^N\ell_i(Y_{i,t})-n_tv(r_t)\) to both sides and using
\eqref{eq:operational-gap} with \(\bm y=\bm Y_t\) gives
\begin{align}
 &\sum_{i=1}^N\ell_i(Y_{i,t})
 +(n_t-1)\E[v(r_{t+1})\mid\calH_t]-n_tv(r_t)\notag\\
 &\qquad\le
 \sum_{i=1}^N\ell_i(Y_{i,t})-v(r_t)
 +\lambda^*(r_t)
  \left(\sum_{i=1}^Nm_i(Y_{i,t})-r_t\right)
 +\frac{L_v\Dsum^2}{2(n_t-1)}\notag\\
 &\qquad=
 \Delta(\bm m(\bm Y_t);r_t)
 +\frac{L_v\Dsum^2}{2(n_t-1)}.
 \label{eq:master-one-step-bellman}
\end{align}
If \(r_t>\Dsum\), then
\(\sum_{i=1}^N S_{i,t}\le\Dsum<r_t\), hence \(r_{t+1}>r_t\).
Both \(r_t\) and \(r_{t+1}\) lie in the slack-resource region, where
\(v\) is constant and \(\lambda^*=0\).  Thus
\eqref{eq:master-one-step-bellman} remains valid, in fact with zero
remainder.

Taking expectations and summing \eqref{eq:master-one-step-bellman} from
\(t=1\) to \(T-1\) telescopes the continuation values:
\begin{align*}
 \E\sum_{t=1}^{T-1}\sum_{i=1}^N\ell_i(Y_{i,t})
 +\E v(r_T)-Tv(\gamma)\le
 \E\sum_{t=1}^{T-1}\Delta(\bm m(\bm Y_t);r_t)
 +\frac{L_v\Dsum^2}{2}
 \sum_{t=1}^{T-1}\frac{1}{n_t-1}.
\end{align*}
Because \(n_t-1=T-t\), the last sum---the cumulative discrepancy generated by
movements along the re-solving path---is at most \(C\log T\).
Moreover, \(v(r_T)\ge0\), and the safe box
\(0\le Y_{i,T}\le\bar X_i\) gives a primitive upper bound on
\(\sum_{i=1}^N\ell_i(Y_{i,T})-v(r_T)\).  Adding the last period and using
\(\Delta(\bm m(\bm Y_t);r_t)\ge0\) therefore yields
\begin{equation}
 \E\sum_{t=1}^T\sum_{i=1}^N\ell_i(Y_{i,t})-Tv(\gamma)
 \le
 \E\sum_{t=1}^T\Delta(\bm m(\bm Y_t);r_t)
 +C\log T.
 \label{eq:master-gap-reduction}
\end{equation}
By the exact cost reduction \eqref{eq:exact-cost} and the regret
definition \eqref{eq:regret-definition}, the left side of
\eqref{eq:master-gap-reduction} is \(R_T(\RAPDL)\).

\textbf{Separating the learning and implementation gaps.}
The function \(m_i\) is
\(1\)-Lipschitz, and
\[
 |\ell_i'(y)|
 \le \max\{h_i,b_i-c_i\}
 \qquad (0\le y\le\bar X_i).
\]
Together with \(0\le\lambda^*(r_t)\le\lambda_{\max}\), these
Lipschitz bounds give
\[
 \Delta(\bm m(\bm Y_t);r_t)
 \le \Delta(\bm m(\bm X_t);r_t)
 +C\|\bm Y_t-\bm X_t\|_1.
\]
Applying \eqref{eq:gap-residual-bound} to the preferred action then yields
\begin{align}
 \Delta(\bm m(\bm Y_t);r_t)
 \le C\Bigl(
  \|\bm m(\bm X_t)-\bm s^*(r_t)\|^2
  +\left\langle
   \bm m(\bm X_t)-\bm s^*(r_t),
   \nabla_{\bm s}\mathcal L_{r_t}
   (\bm s^*(r_t),\lambda^*(r_t))
   \right\rangle
  +\|\bm Y_t-\bm X_t\|_1
 \Bigr).
 \label{eq:master-gap-comparison}
\end{align}

Taking expectations in \eqref{eq:master-gap-comparison}, summing, and
using \eqref{eq:learning-gap-block}--\eqref{eq:implementation-gap-block} gives
\begin{equation}
 \E\sum_{t=1}^T\Delta(\bm m(\bm Y_t);r_t)
 \le C(\mathsf{PD}_T+\mathsf{IMP}_T).
 \label{eq:master-gap-blocks}
\end{equation}
Substituting \eqref{eq:master-gap-blocks} into
\eqref{eq:master-gap-reduction} and absorbing fixed constants into
\(C\) yields the three-component regret reduction
\begin{equation}
 R_T(\RAPDL)
 \le
 C\left(\log T
 +
   \mathsf{PD}_T
   +\mathsf{IMP}_T
 \right).
 \label{eq:three-component-regret-bound}
\end{equation}

\subsection{\texorpdfstring{Step 2: Bounding the Learning Gap}{Step 2: Bounding the Learning Gap}}
\label{sec:learning-gap}

Under \eqref{eq:tuning}, we start from the definition in
\eqref{eq:learning-gap-block}.  Since
\[
 \nabla_{\bm s}\mathcal L_{r_t}
 (\bm s^*(r_t),\lambda^*(r_t))
 =
 \bigl(g_i'(s_i^*(r_t))+\lambda^*(r_t)\bigr)_{i=1}^N,
\]
the storewise complementarity conditions
\eqref{eq:KKT-residuals} give
\begin{align}
 \mathsf{PD}_T
 ={}&
 \sum_{t=1}^T
 \E\|\bm m(\bm X_t)-\bm s^*(r_t)\|^2+\sum_{t=1}^T
 \E\sum_{i=1}^N
 \bigl(m_i(X_{i,t})-s_i^*(r_t)\bigr)
 \bigl(g_i'(s_i^*(r_t))+\lambda^*(r_t)\bigr)\notag\\
 ={}&
 \sum_{t=1}^T
 \E\|\bm m(\bm X_t)-\bm s^*(r_t)\|^2+\sum_{t=1}^T
 \E\sum_{i=1}^N
 m_i(X_{i,t})
 \bigl(g_i'(s_i^*(r_t))+\lambda^*(r_t)\bigr).
 \label{eq:pd-expanded}
\end{align}
We now invoke \Cref{lem:technical-pd}.  Its tracking conclusion
\eqref{eq:cumulative-tracking}, after dropping the nonnegative squared dual
error, gives
\begin{equation}
 \sum_{t=1}^T
 \E\|\bm m(\bm X_t)-\bm s^*(r_t)\|^2
 \le
 C\log T
 +C\sum_{t=1}^T
 \E\|\bm Y_t-\bm X_t\|^2.
 \label{eq:pd-tracking-from-technical}
\end{equation}
Its residual conclusion \eqref{eq:cumulative-residual} gives
\begin{align}
\sum_{t=1}^T
 \E\!\left[
  \sum_{i=1}^N
  m_i(X_{i,t})
  \bigl(g_i'(s_i^*(r_t))+\lambda^*(r_t)\bigr)
 \right]\le
 C\log T
 +C\sum_{t=1}^T
 \E\|\bm Y_t-\bm X_t\|^2.
 \label{eq:pd-residual-from-technical}
\end{align}
Thus \eqref{eq:pd-residual-from-technical} directly bounds the second
sum in \eqref{eq:pd-expanded}.  Adding it to
\eqref{eq:pd-tracking-from-technical}, and using
\[
 \sum_{t=1}^T\E\|\bm Y_t-\bm X_t\|^2
 \le\mathsf{IMP}_T,
\]
we obtain the learning-gap bound
\begin{equation}
 \mathsf{PD}_T
 \le
 C\left(\log T+\mathsf{IMP}_T\right).
 \label{eq:learning-gap-bound}
\end{equation}

\subsection{\texorpdfstring{Step 3: Bounding the Implementation Gap}{Step 3: Bounding the Implementation Gap}}
\label{sec:implementation-gap}

The implementation gap arises solely from physical feasibility: the implemented
action must satisfy \(Y_{i,t}\ge I_{i,t}\) for every store and
\(\sum_iY_{i,t}\le C_t\), whereas the preferred action \(\bm X_t\) need not
satisfy these two restrictions.

The following physical bound controls this component.

\begin{lemma}
\label{lem:implementation-gap}
Under the conditions of \Cref{thm:main}, \RAPDL\ satisfies
\begin{equation}
 \mathsf{IMP}_T\le C\log T.
 \label{eq:implementation-gap-bound}
\end{equation}
\end{lemma}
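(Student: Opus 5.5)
The plan is to bound the two sources of discrepancy between \(\bm Y_t\) and \(\bm X_t\) in the water-filling formula \(Y_{i,t}=\max\{I_{i,t},X_{i,t}-\nu_t\}\) separately. The first source is \emph{carryover excess}, \(E_{i,t}:=(I_{i,t}-X_{i,t})^+\), where leftover stock already sits above the preferred threshold. The second is \emph{aggregate rationing}, \(\nu_t>0\), where \(\sum_i\max\{I_{i,t},X_{i,t}\}>C_t\). Coordinatewise,
\[
|Y_{i,t}-X_{i,t}|\le E_{i,t}+\nu_t .
\]
Moreover, on the coordinates where \(X_{i,t}-\nu_t\ge I_{i,t}\), the total reduction equals \(\bigl(\sum_i\max\{I_{i,t},X_{i,t}\}-C_t\bigr)^+\). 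Hence
\[
\|\bm Y_t-\bm X_t\|_1\le 2\sum_iE_{i,t}+\Bigl(\sum_iX_{i,t}-C_t\Bigr)^+ .
\]
By the safe-box invariant, every coordinate lies in \([0,\bar X_i]\), so \(\|\bm Y_t-\bm X_t\|^2\le C\|\bm Y_t-\bm X_t\|_1\). It therefore suffices to show that \(\sum_t\E\|\bm Y_t-\bm X_t\|_1\le C\log T\).

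\textbf{Rationing term.} Let \(P:=\sum_i1/p_i\). The projection set \eqref{eq:projection-set} gives \(X_{i,t}\le r_t^{\mathrm c}/p_i\), so \(\sum_iX_{i,t}\le Pr_t^{\mathrm c}\le Pr_t=(P/n_t)C_t\). Whenever \(n_t\ge P\), this yields \(\sum_iX_{i,t}\le C_t\), and the rationing term vanishes. Only the last \(\lceil P\rceil\) periods remain. There the term is at most \(\sum_i\bar X_i\le\Dsum\), which contributes an \(O(1)\) constant.

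\textbf{Carryover recursion (the main step).} We have \(I_{i,t+1}=(Y_{i,t}-D_{i,t})^+\le Y_{i,t}\,\1\{D_{i,t}<Y_{i,t}\}\) and \(Y_{i,t}\le\max\{I_{i,t},X_{i,t}\}=X_{i,t}+E_{i,t}\). Therefore
\[
E_{i,t+1}\le \1\{D_{i,t}<Y_{i,t}\}\bigl(E_{i,t}+(X_{i,t}-X_{i,t+1})^+\bigr).
\]
The decrease of the learning state has two parts. The first is the gradient step, which is at most \(\alpha_t|\widehat G_{i,t}|\le C\alpha_t\) because \(\widehat G_{i,t}\) is bounded by the costs and \(\lambda_{\max}\). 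The second is the shrinkage of the projection cap, \((U_i(r_t)-U_i(r_{t+1}))^+\le |r^{\mathrm c}_{t+1}-r^{\mathrm c}_t|/p_i\). This shrinkage is at most \(\Dsum/(p_i(n_t-1))\) when \(r_t\le\Dsum\). When \(r_t>\Dsum\) it is zero, since \(r_{t+1}>r_t\) and the cap is nondecreasing.

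The key point is that the indicator resets the excess with positive probability. Because \(Y_{i,t}\le\bar X_i\) and \(Y_{i,t}\) is \(\calH_t\)-measurable, we get \(\E[\1\{D_{i,t}<Y_{i,t}\}\mid\calH_t]=F_i(Y_{i,t})\le1-\beta_i\). Taking conditional expectations gives
\[
\E E_{i,t+1}\le(1-\beta_i)\,\E E_{i,t}+C\alpha_t+\frac{C}{T-t}\qquad(t<T-1),
\]
with \(E_{i,1}=0\). The remaining last period is bounded by a constant.

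Summing this geometric recursion gives
\[
\sum_t\E E_{i,t}\le\beta_i^{-1}\sum_t\Bigl(C\alpha_t+\frac{C}{T-t}\Bigr)+C .
\]
The harmonic two-sided step \eqref{eq:two-sided-stepsize} gives \(\sum_t\alpha_t\le 2\chi\log T+C\), and \(\sum_t1/(T-t)\le C\log T\). Combining the three pieces yields \(\mathsf{IMP}_T\le C\log T\).

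I expect the delicate point to be the carryover recursion. One must verify that the cap shrinkage is charged only to the rate change of \(r^{\mathrm c}\), and that the survival floor \(\beta_i\) applies at the implemented \(Y_{i,t}\) rather than at \(X_{i,t}\). This is where the safe-box invariant \eqref{eq:appendix-physical-safe-box} is essential.
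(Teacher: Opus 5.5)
Your proof is correct. Its outer structure matches the paper's: the excess $E_{i,t}=(I_{i,t}-X_{i,t})^+$, the pathwise bound on the decrease of $X_{i,t}$ by $G_i\alpha_t+\Dsum/(p_i(n_t-1))$, rationing that vanishes once $n_t\ge\sum_ip_i^{-1}$, and $\|\bm Y_t-\bm X_t\|^2\le\bar X_{\max}\|\bm Y_t-\bm X_t\|_1$.

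The genuine difference is how you show $\sum_t\E E_{i,t}=O(\log T)$.

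\emph{The paper's route.} It runs an additive drainage telescope. From $e_{i,t+1}\le(e_{i,t}-D_{i,t})^+ +|X_{i,t+1}-X_{i,t}|$ it gets $e_{i,t}\wedge D_{i,t}\le e_{i,t}-e_{i,t+1}+|X_{i,t+1}-X_{i,t}|$. Taking conditional means gives $\sum_t\E m_i(e_{i,t})\le C\log T$. A coercivity bound $y\le2\max\{1,K_i\bar X_i\}\,m_i(y)$, which follows from $F_i(u)\le K_iu$ alone, then turns expected drainage into cumulative excess.

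\emph{Your route.} You use a multiplicative reset instead. The store's entire stock sells out with conditional probability $1-F_i(Y_{i,t})\ge\beta_i$, which gives the geometric recursion $\E E_{i,t+1}\le(1-\beta_i)\E E_{i,t}+C\alpha_t+C/(T-t)$, and this sums directly.

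\emph{What each buys.} Your argument is more elementary and avoids the coercivity step. However, it depends on the survival floor at the implemented action, that is, on the safe cap $\bar X_i<\bar d_i$. The paper's drainage argument needs only the density upper bound and a bounded threshold. That is why it carries over to the fixed-envelope implementation sketched in the appendix by replacing $\bar X_i$ with $D_i^{\mathrm{up}}$ in the coercivity constant. In that setting, thresholds may reach or exceed the true endpoint $\bar d_i$, so $1-F_i(Y_{i,t})$ can be zero and your reset probability disappears. Your route does not transfer there.
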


Combining the re-solving reduction
\eqref{eq:three-component-regret-bound}, the learning bound
\eqref{eq:learning-gap-bound}, and the implementation bound
\eqref{eq:implementation-gap-bound} gives directly
\begin{align*}
 \mathsf{IMP}_T&\le C\log T,\qquad
 \mathsf{PD}_T
 \le C\bigl(\log T+\mathsf{IMP}_T\bigr)
 \le C\log T,\\
 R_T(\RAPDL)
 &\le C\bigl(\log T+\mathsf{PD}_T+\mathsf{IMP}_T\bigr)
 \le C\log T.
\end{align*}
This completes the proof of \Cref{thm:main}.\hfill\(\square\)

\section{Numerical Experiments}
\label{sec:numerical-experiments}

\textbf{Experimental design.}
The numerical study evaluates the finite-horizon performance of
\RAPDL\ across horizon lengths and inventory regimes and compares it with the
Double Binary Search (DBS) algorithm of \citet{bekci2023}, the closest
directly comparable learning benchmark.  We use matched demand paths,
initialization, tuning budgets, and feasibility constraints.

We use
\[
 \rho\in\{0.30,0.60,0.90,1.20\},\qquad W=T\rho\gamma^\star,
\]
where \(\gamma^\star\) is unconstrained optimal expected sales.  These values
represent four distinct operating regimes---severe scarcity, moderate
scarcity, near sufficiency, and abundant inventory. In S2 they also cross
fluid active-set sizes three, five, and all six stores.  Total Cost is
reported for \(T\in\{100,200,300\}\), while the figures use
\(T=20,40,\ldots,200\), each with newly initialized inventory.  In both
settings the demand vectors are independent and identically distributed over
time.  Stores are independent within a period in S1, whereas S2 introduces
the contemporaneous dependence specified below.  All calculations use full
precision.

The experimental label \RAPDL\ denotes the two-rate
variant in \Cref{rem:two-rate-rapdl}.  The label DBS denotes Double Binary
Search in \citet{bekci2023}.  Because S1 is homogeneous and satisfies their
Assumption~2 by symmetry, its benchmark is precisely their DBS policy in
Algorithm~2.  S2 is heterogeneous and may have stores outside the fluid
active set.  Its benchmark is therefore their heterogeneous active-set
implementation: Algorithm~2 equipped with the relaxed-Assumption~2 extension
described in their online appendix.  For a fair comparison, both policies use
the same midpoint initialization, \(\lambda_1=0\), demand paths, and
feasibility constraints. For \RAPDL, \(X_{i,1}=\bar d_i/2\).  Each policy is
calibrated offline on independent demand paths for every
\((\text{setting},\rho,T)\) cell under the same tuning budget.  The selected
parameters are then locked and evaluated on untouched paired paths.

\textbf{S1: homogeneous setting.}
There are \(N=2\) identical stores, with
\[
 \bm h=(6.00,6.00),\qquad
 \bm b=(60.00,60.00),\qquad
 \bm c=(0.50,0.50).
\]
For every store and period, demand is a
\(\mathcal N(50.00,50.00^2)\) random variable conditioned to lie in
\([0,175.00]\).  With \(\Phi\) denoting the standard-normal distribution
function, its common distribution function is
\[
F_{\mathrm{S1}}(d)=
\begin{cases}
0, & d<0,\\
\displaystyle
\frac{\Phi((d-50.00)/50.00)-\Phi(-1.00)}
     {\Phi(2.50)-\Phi(-1.00)},
   & 0\le d\le175.00,\\
1, & d>175.00.
\end{cases}
\]
This is the homogeneous synthetic specification used by
\citet{bekci2023}.  Its unconstrained expected-sales normalizer is
\(\gamma^\star=123.43\).

\textbf{S2: heterogeneous setting.}
There are \(N=6\) stores, with store-specific holding, net shipment, and
lost-sales coefficients:
\[
\begin{aligned}
 \bm h&=(2.28,2.46,2.52,2.42,2.34,2.38),\\
 \bm c&=(0.195,0.190,0.205,0.220,0.210,0.180),\\
 \bm b&=(38.40,40.00,41.60,43.20,44.80,46.40).
\end{aligned}
\]
For store \(i\) and each period, demand is
\(\operatorname{Uniform}[0,\bar d_i]\), where
\[
 \bar{\bm d}=(1.18,1.26,1.08,1.32,1.14,1.22).
\]
Its distribution function is
\[
F_{\mathrm{S2},i}(d)=
\begin{cases}
0, & d<0,\\
d/\bar d_i, & 0\le d\le\bar d_i,\\
1, & d>\bar d_i.
\end{cases}
\]
Thus both the cost coefficients and demand laws are store-specific.  The
six Uniform marginals are positively correlated within each period through
a Gaussian copula.  Specifically, let \(U_t\) and \(\varepsilon_{i,t}\) be independent
standard-normal variables and set
\[
 Z_{i,t}=\sqrt{0.60}\,U_t+\sqrt{0.40}\,\varepsilon_{i,t},
 \qquad
 D_{i,t}=\bar d_i\Phi(Z_{i,t}).
\]
A fresh \(U_t\) and fresh idiosyncratic shocks are drawn every period.  Hence
the demand vectors remain independent and identically distributed over time,
while the stores experience a positive common within-period shock.  
The resulting unconstrained expected-sales normalizer
is \(\gamma^\star=3.59\).  This correlated-demand specification is admissible
under \Cref{ass:primitives} and shows that our framework handles cross-store
dependence.  It is also more practical because retail stores may share
weather, holiday, and market shocks.

\textbf{Performance metrics and results.}
For policy \(\pi\), define the realized store--period cost by
\[
 \mathsf{C}_{i,t}^\pi
 :=c_iS_{i,t}^\pi
   +h_i(Y_{i,t}^\pi-D_{i,t})^+
   +b_i(D_{i,t}-Y_{i,t}^\pi)^+,
 \qquad
 S_{i,t}^\pi:=D_{i,t}\wedge Y_{i,t}^\pi .
\]
The terms are net shipment, holding, and lost-sales costs.  Here
\(w=0\), so \(c_i=k_i\). Equation~\eqref{eq:exact-cost} makes this sales-based
accounting equivalent to physical shipment cost net of terminal store credit.

The realized \emph{Total Cost} of one complete \(T\)-period path is
\[
 \mathsf{TC}_T^\pi
 :=\sum_{t=1}^T\sum_{i=1}^N\mathsf{C}_{i,t}^\pi .
\]

For each cell, \Cref{tab:formal-total-cost} reports mean Total Cost
over \(M=2{,}000\) post-tuning paths and the paired
\RAPDL-minus-DBS difference.

The horizon-by-horizon Relative Regret results for S1 and S2 are shown in
\Cref{fig:formal-regret-s1,fig:formal-regret-s2}, respectively.  For these
figures, set \(\gamma:=W/T=\rho\gamma^\star\).  The quantity
\(v(\gamma)\) is the one-period constrained fluid lower-bound value, so
\(Tv(\gamma)\) is the corresponding \(T\)-period benchmark.  We define the
estimated \emph{Relative Regret} by
\[
 \widehat{\operatorname{RR}}_T^\pi
 :=100\times
 \frac{\widehat{\E}[\mathsf{TC}_T^\pi]-Tv(\gamma)}
      {Tv(\gamma)}.
\]

Thus a plotted value of \(2.00\) denotes a mean cost \(2.00\%\) above
the fluid benchmark.

For each \((\text{setting},\rho,T,\pi)\) cell, the figures show 1,000
bootstrap resamples of the mean Relative Regret: boxes span the 25th--75th
percentiles and whiskers the 5th--95th.  These intervals concern the mean
estimator, and each horizon is a newly initialized problem rather than a
prefix of a longer simulation.

\begin{table}[H]
\centering
\footnotesize
\setlength{\tabcolsep}{3pt}
\caption{Mean Total Cost over 2,000 paired production paths.}
\label{tab:formal-total-cost}
\begin{tabular}{ccrrrrrr}
\toprule
& & \multicolumn{3}{c}{S1} & \multicolumn{3}{c}{S2}\\
\cmidrule(lr){3-5}\cmidrule(lr){6-8}
$T$ & $\rho$ & RAPDL & DBS & RAPDL--DBS & RAPDL & DBS & RAPDL--DBS\\
\midrule
100 & 0.30 & 544,099.04 & 549,055.18 & -4,956.15 & 10,565.88 & 10,603.04 & -37.17\\
100 & 0.60 & 333,446.00 & 338,801.06 & -5,355.06 & 6,190.46 & 6,249.52 & -59.05\\
100 & 0.90 & 140,755.01 & 141,484.92 & -729.91 & 2,136.95 & 2,215.54 & -78.59\\
100 & 1.20 & 101,075.81 & 102,271.96 & -1,196.15 & 999.33 & 1,007.25 & -7.92\\
200 & 0.30 & 1,086,641.01 & 1,094,896.22 & -8,255.21 & 21,069.98 & 21,146.08 & -76.09\\
200 & 0.60 & 664,709.73 & 677,088.42 & -12,378.68 & 12,300.85 & 12,462.86 & -162.01\\
200 & 0.90 & 277,786.27 & 278,650.01 & -863.74 & 4,176.88 & 4,341.13 & -164.24\\
200 & 1.20 & 198,100.89 & 201,156.71 & -3,055.82 & 1,895.43 & 1,906.57 & -11.14\\
300 & 0.30 & 1,629,585.18 & 1,638,005.07 & -8,419.89 & 31,545.71 & 31,652.20 & -106.49\\
300 & 0.60 & 991,727.32 & 1,014,562.07 & -22,834.75 & 18,376.37 & 18,686.17 & -309.80\\
300 & 0.90 & 414,885.00 & 415,403.95 & -518.95 & 6,186.55 & 6,411.03 & -224.48\\
300 & 1.20 & 295,226.20 & 299,404.12 & -4,177.92 & 2,790.04 & 2,802.44 & -12.40\\
\bottomrule
\end{tabular}
\end{table}

As shown in \Cref{tab:formal-total-cost}, RAPDL has lower mean Total Cost in
all 24 paired finite-horizon comparisons.

\begin{figure}[p]
\centering
\begin{minipage}{0.44\textwidth}
\centering
\includegraphics[width=\linewidth]{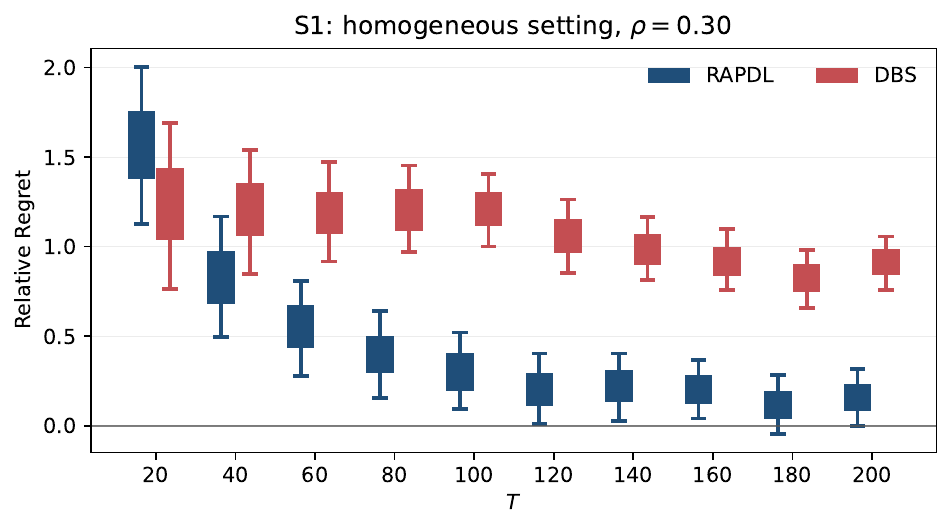}
\small (a) $\rho=0.30$
\end{minipage}\hfill\begin{minipage}{0.44\textwidth}
\centering
\includegraphics[width=\linewidth]{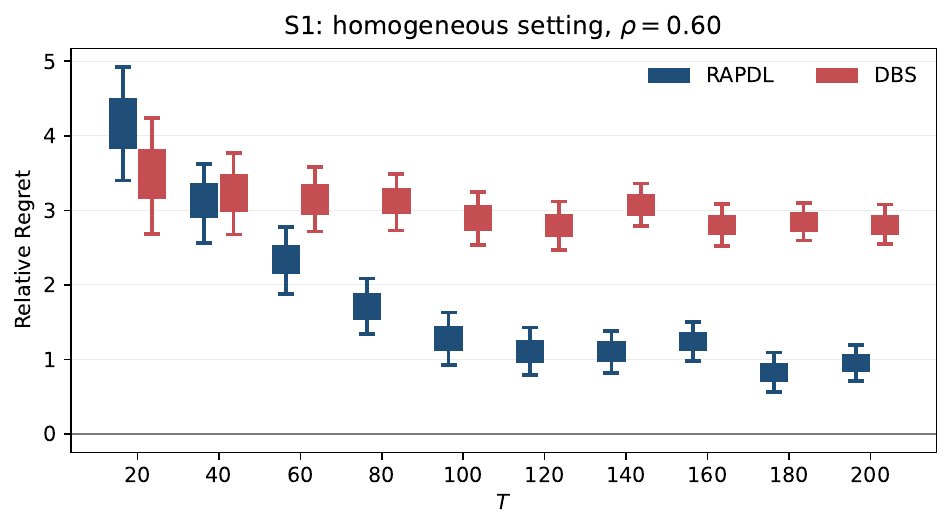}
\small (b) $\rho=0.60$
\end{minipage}
\par\smallskip
\begin{minipage}{0.44\textwidth}
\centering
\includegraphics[width=\linewidth]{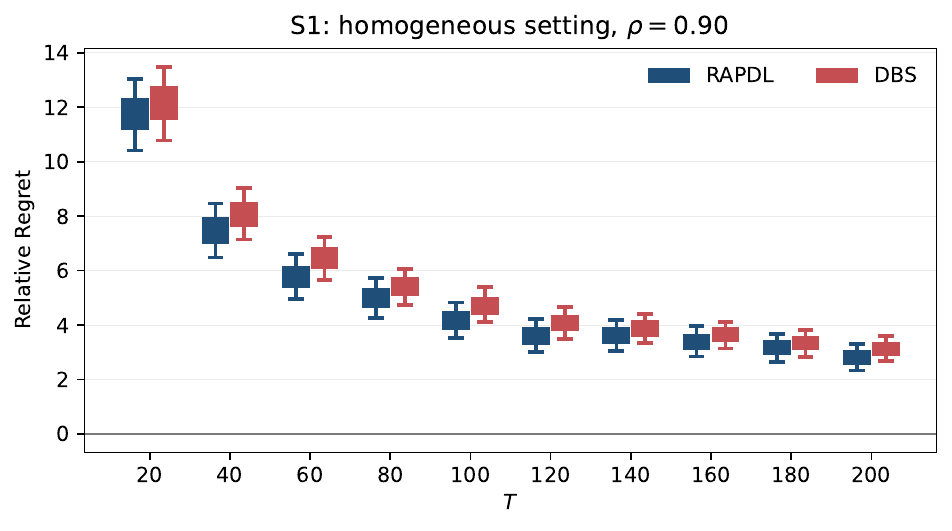}
\small (c) $\rho=0.90$
\end{minipage}\hfill\begin{minipage}{0.44\textwidth}
\centering
\includegraphics[width=\linewidth]{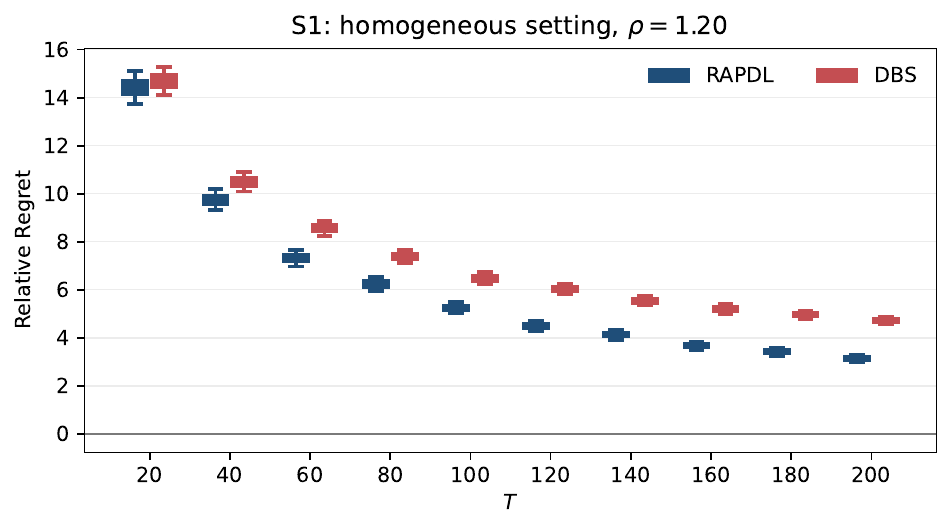}
\small (d) $\rho=1.20$
\end{minipage}
\caption{Relative Regret across complete horizons in S1.}
\label{fig:formal-regret-s1}
\end{figure}

\begin{figure}[p]
\centering
\begin{minipage}{0.44\textwidth}
\centering
\includegraphics[width=\linewidth]{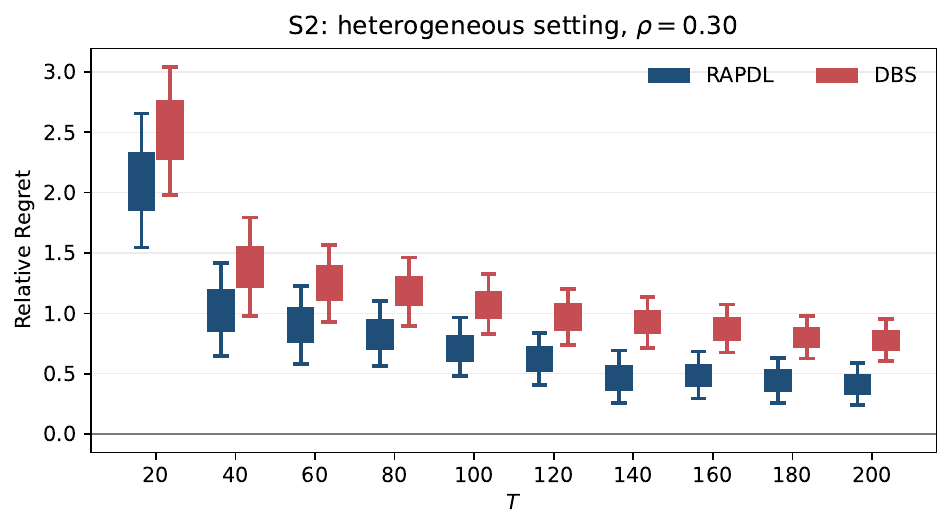}
\small (a) $\rho=0.30$
\end{minipage}\hfill\begin{minipage}{0.44\textwidth}
\centering
\includegraphics[width=\linewidth]{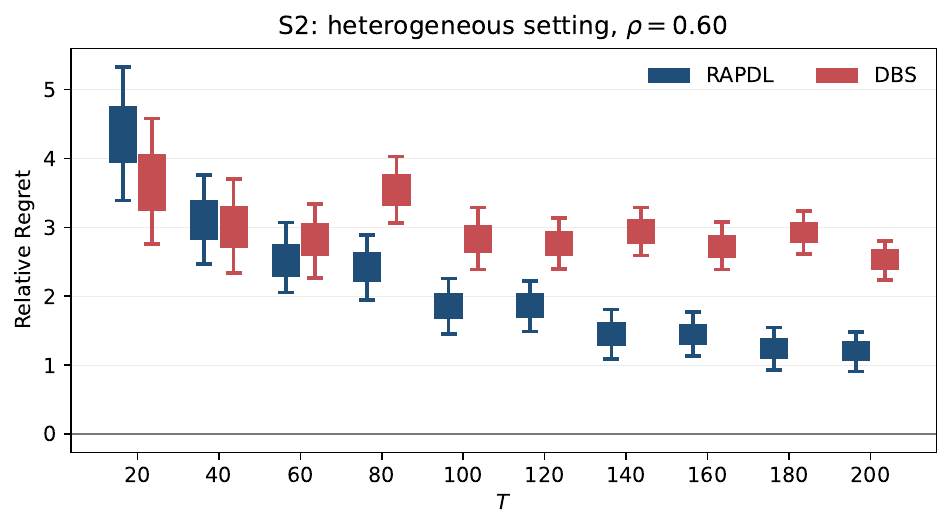}
\small (b) $\rho=0.60$
\end{minipage}
\par\smallskip
\begin{minipage}{0.44\textwidth}
\centering
\includegraphics[width=\linewidth]{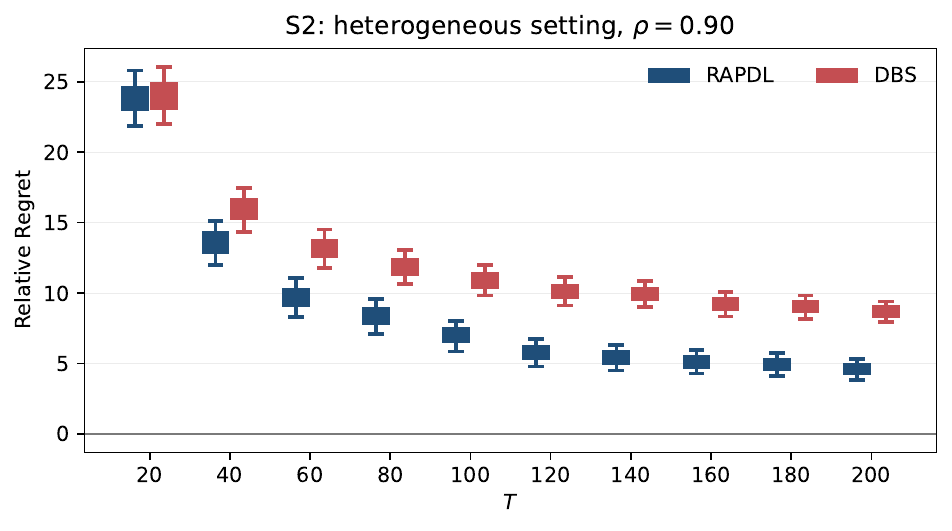}
\small (c) $\rho=0.90$
\end{minipage}\hfill\begin{minipage}{0.44\textwidth}
\centering
\includegraphics[width=\linewidth]{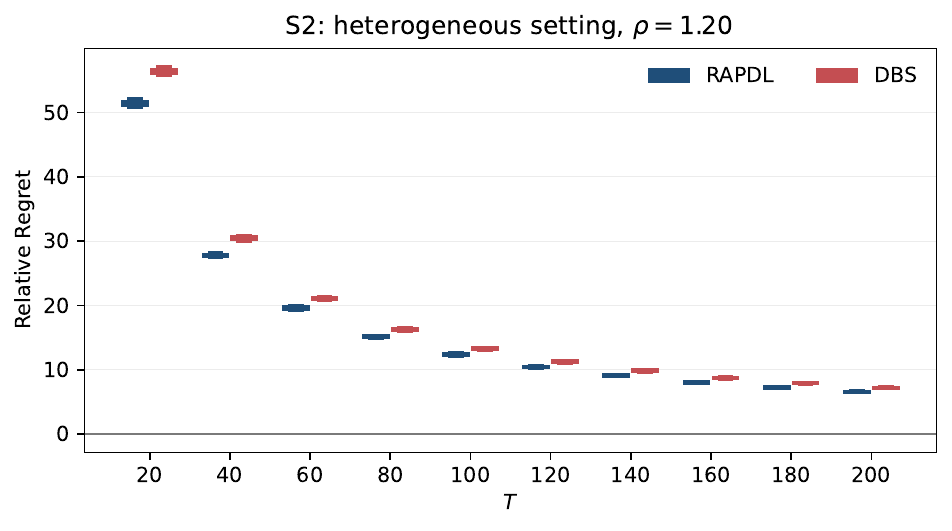}
\small (d) $\rho=1.20$
\end{minipage}
\caption{Relative Regret across complete horizons in S2.}
\label{fig:formal-regret-s2}
\end{figure}

The figures place both methods close to the fluid benchmark at practical
horizons.  At \(T=100\), the bootstrap median Relative Regret of \RAPDL\
ranges from \(0.30\%\) to \(12.38\%\). By \(T=200\), the range narrows to
\(0.15\%\)--\(6.58\%\).  At the short horizons \(T=60\) and \(T=100\), the
bootstrap median for \RAPDL\ is below that for DBS in all eight panels.  The
very earliest
\(T=20\) and \(T=40\) comparisons are mixed, and these outcomes are retained
in the plots.  In the \(T\in\{100,200,300\}\) cells reported in
\Cref{tab:formal-total-cost}, \RAPDL\ has lower mean Total Cost in all 24
paired comparisons.

\textbf{Sensitivity to endpoint information.}
We test the endpoint-envelope implementation in S1 at \(T=200\).  The true demand support
remains \([0,175]\).  The exact implementation uses
\(\bar X_i=170.21\), \(\Dsum=350\),
and \(X_{i,1}=87.5\), whereas the
envelope implementation uses
\(D_i^{\mathrm{up}}=200\),
\(D^{\mathrm{up}}=400\), and \(X_{i,1}=100\).
For each \(\rho\), both information regimes use the same
tuning budget and 2,000 paired production demands.  This comparison evaluates two separately calibrated end-to-end implementations with regime-specific midpoint initializations, rather than isolating the endpoint input under fixed parameters.
The results are reported in \Cref{tab:endpoint-sensitivity}. Its last
column gives the percentage difference
\(100\times
 \bigl(\widehat{\E}[\mathrm{TC}^{\mathrm{Envelope}}_{200}]
       -\widehat{\E}[\mathrm{TC}^{\mathrm{Exact}}_{200}]\bigr)
 /\widehat{\E}[\mathrm{TC}^{\mathrm{Exact}}_{200}]\).

\begin{table}[h]
\centering
\footnotesize
\setlength{\tabcolsep}{5pt}
\caption{Sensitivity of S1 Total Cost to endpoint information at \(T=200\).}
\label{tab:endpoint-sensitivity}
\begin{tabular}{crrr}
\toprule
\(\rho\) & Exact endpoint & Envelope & Envelope--Exact, \%\\
\midrule
0.30 & 1,087,811.49 & 1,088,103.80 & \phantom{-}0.03\\
0.60 &   666,128.51 &   667,172.93 & \phantom{-}0.16\\
0.90 &   279,716.69 &   283,190.98 & \phantom{-}1.24\\
1.20 &   198,172.87 &   197,300.71 &          -0.44\\
\bottomrule
\end{tabular}
\end{table}
Across the four resource levels, the absolute Total Cost difference
between the exact endpoint and the rough envelope is at most \(1.24\%\).
Thus, in these finite-horizon comparisons, replacing the exact endpoint by a
conservative upper bound has only a small numerical effect.

\section{Conclusion}
\label{sec:discussion}

This paper studies learning in a finite-horizon OWMS system with
nonreplenishable shared inventory,
unknown demand, and censored sales.  We
propose \RAPDL, a resource-adaptive Primal-Dual framework that replaces
fixed-target learning with online tracking of
the endogenous
Primal-Dual re-solving path.  Censored sales update the preferred store
thresholds and warehouse scarcity price, while water-filling maps the learned
state to feasible physical actions.  The regret analysis separates the
re-solving, learning, and implementation gaps and bounds each at logarithmic
order.  Numerical
experiments on a practical variant show strong finite-horizon performance
across inventory regimes.

More broadly, the framework turns re-solving from repeated full-information
optimization into a path that can be tracked incrementally under unknown and
censored feedback.  This resource-adaptive perspective may inform the design
of learning-and-control methods for other systems with depleting shared
resources.

\bibliographystyle{informs2014}
\bibliography{ref}
\clearpage
\begin{APPENDICES}
\SingleSpacedXI
\begin{center}
{\Large\bfseries Online Appendix for\\[4pt]
``Resource-Adaptive Primal-Dual Learning for One-Warehouse Multi-Store
Systems with Censored Demand''}
\end{center}
\bigskip
\numberwithin{table}{section}
\section{\texorpdfstring{Technical Bound for the Learning Gap}{Technical Bound for the Learning Gap}}
\label{app:learning-gap}

The following estimate supplies the squared-tracking and first-order residual
bounds used to control the learning gap.  Both follow from the same one-step
mirror recursion developed below.

\begin{lemma}
\label{lem:technical-pd}
Choose \(\chi,\tau_0\) as in \eqref{eq:tuning}.  Then there is a primitive
constant \(C_{\rm md}\) such that
\medskip

\emph{(i) Cumulative Primal-Dual tracking.}
\begin{align}
 \sum_{t=1}^T\E\left[
  \|\bm m(\bm X_t)-\bm s^*(r_t)\|^2
  +|\lambda_t-\lambda^*(r_t)|^2
 \right]
 &\le C_{\rm md}\log T
 +C_{\rm md}\sum_{t=1}^T\E\|\bm Y_t-\bm X_t\|^2,
 \label{eq:cumulative-tracking}
\end{align}

\emph{(ii) Unweighted first-order residual.}
\begin{align}
 \sum_{t=1}^T\E\!\left[
  \sum_{i=1}^N
  m_i(X_{i,t})
  \bigl(g_i'(s_i^*(r_t))+\lambda^*(r_t)\bigr)
 \right]
 &\le C_{\rm md}\log T
 +C_{\rm md}\sum_{t=1}^T\E\|\bm Y_t-\bm X_t\|^2.
 \label{eq:cumulative-residual}
\end{align}

\end{lemma}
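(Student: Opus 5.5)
We prove both parts from a single one-step Lyapunov recursion for the weighted distance between the learning state and the moving target. Take the mirror potential
\[
\Phi_t:=\sum_{i=1}^N\int_{y_i^*(r_t)}^{X_{i,t}}\bigl(m_i(u)-s_i^*(r_t)\bigr)\,\dd u+\tfrac12|\lambda_t-\lambda^*(r_t)|^2 .
\]
On the safe box \(0\le x\le\bar X_i\) we have \(\beta_i\le m_i'\le1\). Hence each primal summand is comparable to \((X_{i,t}-y_i^*(r_t))^2\), and also to \((m_i(X_{i,t})-s_i^*(r_t))^2\), with primitive constants.

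\textbf{Step 1: conditional drift from the stochastic step.} The step \(X_{i,t+1}=\proj_{[0,U_i(r_{t+1})]}(X_{i,t}-\alpha_t\widehat G_{i,t})\) has a moving box. By the argument of Section 3.3, \(y^*(r_{t+1})\in[0,U_i(r_{t+1})]\), so nonexpansiveness of the projection toward the point \(y_i^*(r_{t+1})\) still applies. Samples are bounded: \(|\widehat G_{i,t}|\le C\), and \(|\widehat G_{0,t}|\le\Dsum+\theta C\) because of the resource cap. The standard expansion then gives
\[
\E[\Phi_{t+1}\mid\calH_t]\le\Phi_t-\alpha_t\langle q_t-q^*(r_t),G^\theta(\bm X_t,\lambda_t;r_t)\rangle_{W}+C\alpha_t\|\bm Y_t-\bm X_t\|+C\alpha_t^2+(\text{target drift}).
\]
Here the inner product is in the metric induced by \(m_i'\), and the implementation term uses \(\E[\widehat G_t\mid\calH_t]=G^\theta(\bm Y_t,\lambda_t;r_t)\) together with the Lipschitz property of \(F_i\). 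We apply Young's inequality: \(C\alpha_t\|\bm Y_t-\bm X_t\|\le\epsilon\alpha_t\|\cdot\|_{\rm track}^2+C\alpha_t\|\bm Y_t-\bm X_t\|^2\).

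\textbf{Step 2: strong monotonicity of the stabilized field.} In sales coordinates, \(\nabla_{\bm y}\widetilde{\mathcal L}\) equals \((1-F_i)\bigl(g_i'(s_i)+\lambda\bigr)\). The uncorrected field is monotone with curvature \(h_i\kappa_i\) from \eqref{eq:g-curvature-bounds} and zero dual curvature. The anchor term \(\theta\partial_{y_j}\widetilde{\mathcal L}\) contributes \(\theta(1-F_j)(\lambda-\lambda^*)^2\) plus a cross term \(\theta(1-F_j)(g_j'(s_j)-g_j'(s_j^*))(\lambda-\lambda^*)\). The KKT residual of \(j\) vanishes on the whole path by \Cref{lem:kkt-path}(i),(ii), so this term does not displace the target. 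Choosing \(\theta\) small relative to \(h\kappa\beta/L_g\), as \eqref{eq:tuning} should permit, makes the cross term absorbable. This yields
\[
\langle q_t-q^*,G^\theta\rangle\ge\eta\bigl(\|\bm m(\bm X_t)-\bm s^*\|^2+|\lambda_t-\lambda^*|^2\bigr)+c\sum_i m_i(X_{i,t})\bigl(g_i'(s_i^*)+\lambda^*\bigr).
\]
The last, nonnegative residual term comes from inactive stores: for them \(g_i'(s_i^*)+\lambda^*\ge0\) and \(s_i^*=0\). Keeping this term explicitly, instead of discarding it, is what delivers part (ii).

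\textbf{Step 3: moving target and summation.} The target moves by \(\|q^*(r_{t+1})-q^*(r_t)\|\le C|r_{t+1}-r_t|\le C\Dsum/(n_t-1)\) when \(r_t\le\Dsum\). In the regime \(r_t>\Dsum\) the target is frozen at the slack solution. The cap \(U_i\) also changes, but only with \(r^{\rm c}\). The drift contributes \(C\sqrt{\Phi_t}/n_t+C/n_t^2\le\epsilon\alpha_t\Phi_t+C/(\alpha_tn_t^2)\). With the two-sided step \(\alpha_t\asymp1/\min\{t,n_t\}\), the term \(1/(\alpha_tn_t^2)\) is \(O(1/n_t)\), and the expected component \(\E[r_{t+1}-r_t\mid\calH_t]\) enters only linearly. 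We thus obtain
\[
\E\Phi_{t+1}\le(1-\eta'\alpha_t)\E\Phi_t-\alpha_t\eta'\,\E[\text{track}_t+\text{res}_t]+C\alpha_t^2+C/n_t+C\alpha_t\E\|\bm Y_t-\bm X_t\|^2 .
\]
Choose \(\chi\eta'>1\). Dividing by \(\alpha_t\) and summing, the potential terms \(\Phi_t/\alpha_t-\Phi_{t+1}/\alpha_t\) telescope up to \(\Phi_{t+1}(1/\alpha_{t+1}-1/\alpha_t)\le\Phi_{t+1}/\chi\), which is absorbed by the contraction. The remainders give \(\sum_t\alpha_t+\sum_t1/(\alpha_tn_t)\le C\log T\), since \(1/(\alpha_t n_t)\le C\) only near the end and is otherwise \(O(t/n_t)\cdot(1/n_t)\). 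This last step must be checked carefully; it is why the step size is two-sided.

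\textbf{Main obstacle.} I expect the drift term near the horizon end to be the hardest part. There \(1/(\alpha_tn_t)\) is bounded, but \(\sum_t\E|r_{t+1}-r_t|/\alpha_t\) could be of order \(\sum_t t/n_t\), which is too large in the middle of the horizon. My first attempt would bound the target motion more carefully using the martingale part of \(r_{t+1}-r_t\), whose cross term with \(q_t-q^*\) has zero conditional mean, so that only \(\E[(r_{t+1}-r_t)^2]\le C/n_t^2\) and the conditional-mean drift \(|r_t-\sum_im_i(Y_{i,t})|/n_t\) remain. The conditional-mean drift is itself controlled by the tracking error plus the implementation gap, and is absorbed for \(n_t\) large compared with \(1/\eta'\).
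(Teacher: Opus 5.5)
Your summation scheme and your decision to keep the residual for part~(ii) match the paper. However, the proposal has two genuine gaps: Step~2 fails for your potential, and Step~3 does not survive the kinks of the target path.

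First, your potential does not give the restoring inequality claimed in Step~2. Your $\Phi_t$ has $\partial_{X_i}\Phi_t=m_i(X_{i,t})-s_i^*(r_t)$, while the mean primal step is $(1-F_i(X_{i,t}))\bigl(g_i'(m_i(X_{i,t}))+\lambda_t\bigr)$. So the primal pairing carries the survival weight and the dual pairing does not. The bilinear terms $(1-F_i(X_{i,t}))(\lambda_t-\lambda^*)(m_i-s_i^*)$ and $-(\lambda_t-\lambda^*)(m_i-s_i^*)$ then no longer cancel. What survives is $-(\lambda_t-\lambda^*)\sum_iF_i(X_{i,t})(m_i(X_{i,t})-s_i^*)$, which has an $O(1)$ coefficient. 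The only dual curvature available to absorb it is the anchor's $\theta(1-F_j)|\lambda-\lambda^*|^2$, with $\theta$ small. Already for $N=1$ on the binding branch, write $u=m(x)-s^*$, $v=\lambda-\lambda^*$, $S=1-F(x)$, and let $a$ be a mean value of $g''$. Your pairing is then
\[
 Sa\,u^2+(\theta Sa-F(x))\,uv+\theta S\,v^2 .
\]
This form is indefinite once $(F(x)-\theta Sa)^2>4\theta S^2a$. That holds for every small $\theta$ whenever $F(x)$ is bounded away from zero. The missing idea is the divergence $\mathcal B_i(\sigma,s)=\int_\sigma^s(u-\sigma)\bigl(1-F_i(m_i^{-1}(u))\bigr)^{-2}\dd u$. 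By \eqref{eq:mirror-threshold-derivative}, its threshold derivative is $(m_i(x)-\sigma)/(1-F_i(x))$, which cancels the survival factor exactly. The pairing becomes $a\,u^2+\theta Sa\,uv+\theta S\,v^2$, and the anchor then closes the argument.

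Second, Step~3 does not sum to $O(\log T)$. For $t\le T/2$ one has $1/(\alpha_tn_t)\asymp t/T$, so $\sum_t1/(\alpha_tn_t)$ is of order $T$. Your obstacle paragraph names the right first-order repair: condition on $\calH_t$ so the martingale part vanishes, and bound $r_t-\sum_im_i(Y_{i,t})$ by tracking error plus $\|\bm Y_t-\bm X_t\|$. This is the resource-pairing estimate in \Cref{lem:two-contributions}(ii), which also needs $\chi$ large, not only $n_t$ large. But the repair presupposes $q^*(r_{t+1})-q^*(r_t)=\dot q^*(r_t)(r_{t+1}-r_t)+O(|r_{t+1}-r_t|^2)$.

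The KKT path is only Lipschitz. It has corners at $r_0$ and at every active-set breakpoint, and between them it is merely $C^1$, since $f_i$ is only continuous. Moreover, $r_t$ can stay near a corner for the whole horizon, e.g.\ when $\gamma$ is a breakpoint. There the remainder is first order, and the Lipschitz bound gives back the $O(\alpha_t)$ forcing rather than $O(\alpha_t^2)$.

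The paper's fix is to track the backward-smoothed comparator $q^{\eta_t}(r_t)$ with $\eta_t=\sqrt{\alpha_t}$ (\Cref{lem:smoothing}). It stays inside the moving box because it averages $q^*(r-\eta u)$ only over $r-\eta u\le r$. It lies within $O(\eta_t)$ of $q^*(r_t)$, and its Taylor remainder is $O(|\Delta r|^2/\eta_t)$. Consequently three terms each cost $O(\alpha_t^2)$ per step: the Taylor remainder, the radius motion $O(\alpha_t^{3/2})$, and the transfer of the restoring inequality from $q^*$ to $q^{\eta_t}$. Finally, $W_t\le C\widetilde W_t+C\alpha_t$ returns the bound to the exact target.
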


The proof proceeds in four steps: construct the exact-target
potential, smooth the moving KKT path, establish one-step drift, and sum the
recursion.  Supporting proofs and local verifications are collected in
Appendix~\ref{app:auxiliary-verifications}.

\subsection{\texorpdfstring{Step 1: Construct the Exact-Target Potential}{Step 1: Construct the Exact-Target Potential}}
\label{sec:algorithmic-geometry}

Step~1 verifies that capping preserves the KKT target and places its
threshold representation in the moving box, then constructs a threshold-space
potential for exact sales-space error.

The next lemma verifies exact-target feasibility. Its complementarity
identities are also used in the state-update analysis.

\begin{lemma}
\label{lem:capped-rate}
For every \(r\ge0\), the selected moving target has the following
feasibility properties.
\begin{enumerate}[label=(\roman*),leftmargin=*]
\item \textbf{Target preservation under capping.}
\begin{equation}
 (\bm s^*(r),\lambda^*(r))
 =(\bm s^*(r^{\mathrm c}),\lambda^*(r^{\mathrm c})).
 \label{eq:cap-preserves-target}
\end{equation}

\item \textbf{Resource feasibility and complementarity.}
\begin{equation}
 r^{\mathrm c}-\sum_{i=1}^Ns_i^*(r)
 =(r^{\mathrm c}-r_0)^+,\qquad
 \lambda^*(r)(r^{\mathrm c}-r_0)^+=0.
 \label{eq:capped-target-complementarity}
\end{equation}

\item \textbf{Feasibility in the moving clipping box.}
\begin{equation}
 \begin{gathered}
 y_i^*(r)=m_i^{-1}(s_i^*(r))
 \le\min\left\{\bar X_i,\frac{r^{\mathrm c}}{p_i}\right\}
 =U_i(r),\qquad i\in[N],\\
 0\le\lambda^*(r)\le\lambda_{\max},\qquad
 \bigl((y_i^*(r))_{i=1}^N,\lambda^*(r)\bigr)\in\calK(r).
 \end{gathered}
 \label{eq:target-inside-threshold-cap}
\end{equation}
\end{enumerate}
\end{lemma}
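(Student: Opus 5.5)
The plan is to prove all three parts from \Cref{lem:kkt-path}(ii), splitting into the cases \(r\le\Dsum\) and \(r>\Dsum\).

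\textbf{Part (i).} If \(r\le\Dsum\), then \(r^{\mathrm c}=r\) and there is nothing to prove. If \(r>\Dsum\), the first step is to note that \(r_0\le\Dsum\), since \(m_i(y)\le\mu_i\le\bar d_i\). Hence both \(r\) and \(r^{\mathrm c}=\Dsum\) are at least \(r_0\). By \Cref{lem:kkt-path}(ii), \(\lambda^*(r)=\lambda^*(r^{\mathrm c})=0\) and \(\sum_i s_i^*=r_0\) for both arguments. To identify the primal optimizers, observe that once \(r\ge r_0\) the resource constraint in the transformed program is slack at the unconstrained minimizer. Concretely, the componentwise minimizer of \(g_i\) is \(m_i(F_i^{-1}(1-p_i))\), and these values sum to \(r_0\). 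This minimizer is therefore feasible and optimal for every \(r\ge r_0\), and uniqueness from \Cref{lem:kkt-path} gives \(\bm s^*(r)=\bm s^*(r^{\mathrm c})\).

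\textbf{Part (ii).} Apply \Cref{lem:kkt-path}(ii) at \(r\): \(\sum_i s_i^*(r)=\min\{r,r_0\}\). It then suffices to check \(r^{\mathrm c}-\min\{r,r_0\}=(r^{\mathrm c}-r_0)^+\) in two cases.
\begin{itemize}
\item If \(r\le r_0\), then \(r\le\Dsum\), so \(r^{\mathrm c}=r\) and both sides vanish.
\item If \(r>r_0\), then \(r^{\mathrm c}\ge r_0\) (because \(r_0\le\Dsum\)), and both sides equal \(r^{\mathrm c}-r_0\).
\end{itemize}
For the complementarity identity, \((r^{\mathrm c}-r_0)^+>0\) forces \(r>r_0\), and then \(\lambda^*(r)=0\).

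\textbf{Part (iii).} The dual bounds come from the monotonicity in \Cref{lem:kkt-path}(iii) together with (ii): \(0\le\lambda^*(r)\le\lambda^*(0)=\lambda_{\max}\). Nonnegativity also holds because \(\lambda^*(r)\) is a multiplier.

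For the primal threshold bound, the first task is to show \(y_i^*(r)\le F_i^{-1}(1-p_i)\).
\begin{itemize}
\item On an inactive coordinate, \(y_i^*=0\).
\item On an active coordinate, the KKT condition gives \(-g_i'(s_i^*)=\lambda^*\ge0\). Translating to threshold space, \(\ell_i'(y)/m_i'(y)=g_i'\), which yields \((h_i+b_i-c_i)F_i(y_i^*)-(b_i-c_i)+\lambda^*(1-F_i(y_i^*))=0\). Rearranging, \(1-F_i(y_i^*)=h_i/(h_i+b_i-c_i-\lambda^*)\ge p_i\).
\end{itemize}
The density upper bound then gives \(1-F_i(x)\le K_i(\bar d_i-x)\), so \(\bar d_i-y_i^*\ge p_i/K_i\). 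Hence \(y_i^*\le\bar d_i-p_i/K_i<\bar X_i\).

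It remains to show \(y_i^*\le r^{\mathrm c}/p_i\). Since survival is at least \(p_i\) on \([0,y_i^*]\) (survival is nonincreasing), we have \(m_i(y_i^*)=\int_0^{y_i^*}(1-F_i)\ge p_iy_i^*\). Combining with part (ii), \(m_i(y_i^*)=s_i^*(r)\le\sum_k s_k^*(r)=\min\{r,r_0\}\le r^{\mathrm c}\), again using \(r_0\le\Dsum\). Together these give membership in \(\calK(r)\).

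The main obstacle I expect is the careful translation of the active-coordinate KKT equality into the threshold-space survival inequality \(1-F_i(y_i^*)\ge p_i\). One point needs a check there: because \(\lambda^*<b_i-c_i\), the denominator \(h_i+b_i-c_i-\lambda^*\) is positive and at most \(h_i+b_i-c_i\).
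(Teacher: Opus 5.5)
Your proposal is correct and follows essentially the same route as the paper. Both arguments split the case at \(\Dsum\) using \(r_0\le\Dsum\), use the constant slack branch for (i), use \(\sum_i s_i^*(r)=\min\{r,r_0\}\) for (ii), and use the survival bound \(1-F_i(y_i^*(r))\ge p_i\) together with \(s_i^*(r)\ge p_i y_i^*(r)\) for (iii). You only spell out steps the paper leaves terse: you identify the slack-branch optimizer through the componentwise minimizer and uniqueness, and you derive the survival bound from the active-coordinate KKT equality.
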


The proof of \Cref{lem:capped-rate} is given in
Appendix~\ref{app:auxiliary-verifications}.

We therefore construct a potential
that measures error in sales space while following the algorithm's
threshold-space update.  For comparator sales \(\sigma\) and
current sales \(s\), both in \([0,m_i(\bar X_i)]\), define
\begin{equation*}
 \mathcal B_i(\sigma,s)
 :=
 \int_\sigma^s
 \frac{u-\sigma}
 {\left(1-F_i\bigl(m_i^{-1}(u)\bigr)\right)^2}\,\dd u.
\end{equation*}
Since \(m_i'(x)=1-F_i(x)\),
\begin{equation}
 \frac{\partial}{\partial x}
 \mathcal B_i\bigl(\sigma,m_i(x)\bigr)
 =
 \frac{m_i(x)-\sigma}{1-F_i(x)}.
 \label{eq:mirror-threshold-derivative}
\end{equation}

The mean threshold step contains a factor \(1-F_i(x)\), so
\eqref{eq:mirror-threshold-derivative} cancels that factor and leaves exactly
the sales-error--field pairing in \eqref{eq:learning-gap-block}.  This is why we
use \(\mathcal B_i\), rather than an ordinary squared distance.  The
comparator therefore remains in sales coordinates, while the update state
remains in threshold coordinates.

Because the pre-projection update can leave \([0,\bar X_i]\), we extend the
potential quadratically by one unit at each endpoint.  Matching the boundary
slopes below gives bounded curvature and monotonicity toward the comparator,
so projection cannot increase the potential:
\begin{equation}
 \left.
 \frac{\partial}{\partial x}\mathcal B_i(\sigma,m_i(x))
 \right|_{x=0}
 =-\sigma,
 \qquad
 \left.
 \frac{\partial}{\partial x}\mathcal B_i(\sigma,m_i(x))
 \right|_{x=\bar X_i}
 =
 \frac{m_i(\bar X_i)-\sigma}{1-F_i(\bar X_i)}.
 \label{eq:mirror-boundary-slopes}
\end{equation}
Using these slopes and the ordinary squared
discrepancy for the dual coordinate, define
\begin{equation}
 \begin{aligned}
 \mathcal W\bigl((\bm\sigma,\zeta),(\bm x,\lambda)\bigr)
 &:=
 \sum_{i=1}^N
 \begin{cases}
  \mathcal B_i(\sigma_i,0)-\sigma_i x_i+\dfrac12x_i^2,
  &-1\le x_i<0,\\[5pt]
  \mathcal B_i\bigl(\sigma_i,m_i(x_i)\bigr),
  &0\le x_i\le\bar X_i,\\[5pt]
  \begin{aligned}
   &\mathcal B_i\bigl(\sigma_i,m_i(\bar X_i)\bigr)
   +\dfrac{m_i(\bar X_i)-\sigma_i}{1-F_i(\bar X_i)}
    (x_i-\bar X_i)\\[-1pt]
   &\qquad+\dfrac12(x_i-\bar X_i)^2,
  \end{aligned}
  &\bar X_i<x_i\le\bar X_i+1
 \end{cases}\\
 &\quad+\frac12(\lambda-\zeta)^2.
 \end{aligned}
 \label{eq:joint-lyapunov}
\end{equation}
The outer derivatives are nonpositive to the left of the comparator
and nonnegative to its right, so the extensions preserve projection
compatibility while adding bounded curvature.
On the safe box, only the middle branch is active, and hence
\begin{equation*}
 \mathcal W\bigl((\bm\sigma,\zeta),(\bm x,\lambda)\bigr)
 =
 \sum_{i=1}^N
 \mathcal B_i\bigl(\sigma_i,m_i(x_i)\bigr)
 +\frac12(\lambda-\zeta)^2,
 \qquad 0\le x_i\le\bar X_i.
\end{equation*}

Define the comparator and update domains, respectively, by
\begin{align*}
 \mathcal Q
 :=\prod_{i=1}^N[0,m_i(\bar X_i)]\times[0,\lambda_{\max}],~~~
 \mathcal Z
 :=\prod_{i=1}^N[-1,\bar X_i+1]
 \times[-1,\lambda_{\max}+1].
\end{align*}
The joint domain is \(\mathcal Q\times\mathcal Z\).  The containment
below, together with the step-size bound \(\alpha_tG\le1\) verified below,
ensures that every pre-projection update remains in \(\mathcal Z\):
\begin{equation*}
 \calK(r)
 \subseteq
 \prod_{i=1}^N[0,\bar X_i]\times[0,\lambda_{\max}]
 \subseteq\mathcal Z.
\end{equation*}
The extra unit margin is used only to evaluate the potential at one
pre-projection update. It does not enlarge the feasible actions onto which
the algorithm projects.

\begin{lemma}
\label{lem:mirror-toolkit}
For \(q=(\bm\sigma,\zeta)\in\mathcal Q\) and
\(z=(\bm x,\lambda)\in\mathcal Z\) with
\(0\le x_i\le\bar X_i\) for every \(i\), there is a primitive constant
\(C_W\ge1\) such that
\begin{equation}
\begin{aligned}
 C_W^{-1}\bigl(\|\bm m(\bm x)-\bm\sigma\|^2+|\lambda-\zeta|^2\bigr)
 &\le \mathcal W((\bm\sigma,\zeta),(\bm x,\lambda))
 \le
 C_W\bigl(\|\bm m(\bm x)-\bm\sigma\|^2+|\lambda-\zeta|^2\bigr).
\end{aligned}
\label{eq:mirror-metric-comparison}
\end{equation}
\begin{equation}
 \|\nabla_q\mathcal W(q,z)\|^2+
 \|\nabla_z\mathcal W(q,z)\|^2
 \le C\mathcal W(q,z).
 \label{eq:mirror-gradient-control}
\end{equation}
\end{lemma}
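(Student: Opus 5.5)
The argument is coordinatewise: $\mathcal W$ is a sum of one-dimensional terms, and on the safe box only the middle branch is active. It therefore suffices to treat each $\mathcal B_i(\sigma_i,m_i(x_i))$ and the dual term $\tfrac12(\lambda-\zeta)^2$ separately. The dual term satisfies both claims trivially with constant $1$ or $2$. The only input needed is the survival floor $1-F_i(x)\ge\beta_i$ for $0\le x\le\bar X_i$, which holds by the choice of $\bar X_i$ in \eqref{eq:fixed-safe-cap}. Together with the trivial bound $1-F_i\le1$, this gives for every $u\in[0,m_i(\bar X_i)]$
\[
 1\le\frac{1}{\bigl(1-F_i(m_i^{-1}(u))\bigr)^2}\le\beta_i^{-2}.
\]

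\textbf{Metric comparison.} Write $s=m_i(x_i)$. The integrand of $\mathcal B_i(\sigma,s)=\int_\sigma^s (u-\sigma)w_i(u)\,\dd u$ has a constant sign along the path; this holds whether $s\ge\sigma$ or $s<\sigma$, after the orientation flip. Hence
\[
 \tfrac12(s-\sigma)^2\le\mathcal B_i(\sigma,s)\le\tfrac{1}{2\beta_i^2}(s-\sigma)^2 .
\]
Both $\sigma_i$ and $s$ lie in $[0,m_i(\bar X_i)]$, so the integration stays in the range where the weight bound applies. Summing over $i$ and adding the dual term gives \eqref{eq:mirror-metric-comparison} with $C_W=\max\{2,\max_i \beta_i^{-2}\}$, which is $\ge1$ and primitive.

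\textbf{Gradient control.} We differentiate in both arguments.

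First, in $\sigma_i$: by Leibniz's rule the boundary term vanishes because the integrand is zero at $u=\sigma$. Therefore
\[
 \partial_\sigma\mathcal B_i=-\int_\sigma^s w_i(u)\,\dd u,
 \qquad
 |\partial_\sigma\mathcal B_i|\le\beta_i^{-2}|s-\sigma|.
\]

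Second, in $x_i$: by \eqref{eq:mirror-threshold-derivative}, $\partial_x\mathcal B_i=(m_i(x)-\sigma)/(1-F_i(x))$, so $|\partial_x\mathcal B_i|\le\beta_i^{-1}|s-\sigma|$.

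Third, the $\lambda$- and $\zeta$-derivatives are $\pm(\lambda-\zeta)$.

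Squaring and summing bounds $\|\nabla_q\mathcal W\|^2+\|\nabla_z\mathcal W\|^2$ by $C(\|\bm m(\bm x)-\bm\sigma\|^2+|\lambda-\zeta|^2)$. The lower half of \eqref{eq:mirror-metric-comparison} then converts this into $C\,C_W\,\mathcal W$, which is \eqref{eq:mirror-gradient-control}.

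\textbf{Main obstacle.} The only delicate point is checking that the weight $w_i(u)=(1-F_i(m_i^{-1}(u)))^{-2}$ is well defined and bounded on the whole integration interval. This needs $m_i(\bar X_i)<\mu_i$, so that $m_i^{-1}$ is single-valued there, and the continuity of $F_i$ under \Cref{ass:primitives}, which makes $\mathcal B_i$ differentiable in both arguments. Both follow from $\bar X_i<\bar d_i$ and the strict monotonicity of $m_i$ on $[0,\bar d_i)$.

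If one also wanted \eqref{eq:mirror-gradient-control} at points in the extension branches, the same bounds would follow. The slopes match at the boundary by \eqref{eq:mirror-boundary-slopes}, and the extra quadratic pieces have unit curvature on intervals of unit length. The statement, however, only asks for the safe box.
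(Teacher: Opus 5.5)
Your proposal is correct and follows essentially the same route as the paper. The weight bound $1\le w_i\le\beta_i^{-2}$ on $[0,m_i(\bar X_i)]$ gives the two-sided quadratic comparison for each $\mathcal B_i$; the paper obtains it through the substitution $u=\sigma+\tau(s-\sigma)$, whereas you use the constant sign of the integrand. The derivative bounds $|\partial_x\mathcal B_i|\le\beta_i^{-1}|m_i(x_i)-\sigma_i|$ and $|\partial_\sigma\mathcal B_i|\le\beta_i^{-2}|m_i(x_i)-\sigma_i|$, combined with the lower metric bound, then give the gradient control, exactly as in the paper, and your constant $C_W=\max\{2,\max_i\beta_i^{-2}\}$ is valid.
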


The proof of \Cref{lem:mirror-toolkit} is given in
Appendix~\ref{app:auxiliary-verifications}.

We can now define the quantity that measures the tracking error of actual
interest.  Set
\begin{equation*}
 \begin{aligned}
 q_t^*&:=q^*(r_t)=(\bm s^*(r_t),\lambda^*(r_t)),
 &z_t&:=(\bm X_t,\lambda_t),\\
 W_t&:=\mathcal W(q_t^*,z_t)
 =\sum_{i=1}^N
 \mathcal B_i\bigl(s_i^*(r_t),m_i(X_{i,t})\bigr)
 +\frac12\bigl(\lambda_t-\lambda^*(r_t)\bigr)^2.
 \end{aligned}
\end{equation*}
By \eqref{eq:target-inside-threshold-cap}, both \(q_t^*\) and the
projected state \(z_t\) lie in the required domains.  The lower metric bound in
\eqref{eq:mirror-metric-comparison}
therefore gives
\begin{equation*}
 \|\bm m(\bm X_t)-\bm s^*(r_t)\|^2
 +|\lambda_t-\lambda^*(r_t)|^2
 \le C W_t.
\end{equation*}
Taking expectations and summing yields
\begin{equation}
 \sum_{t=1}^T\E\left[
  \|\bm m(\bm X_t)-\bm s^*(r_t)\|^2
  +|\lambda_t-\lambda^*(r_t)|^2
 \right]
 \le C\sum_{t=1}^T\E W_t.
 \label{eq:tracking-to-potential-sum}
\end{equation}

\subsection{\texorpdfstring{Step 2: Construct the Smoothed Comparator}{Step 2: Construct the Smoothed Comparator}}
\label{sec:moving-comparator-setup}

\textbf{The natural exact comparator and its obstruction.}
The exact target \(q^*(r_t)\) is feasible and ideal for the
fixed-target restoring inequality, but its path has corners when active sets
change or the resource constraint becomes slack.  At a crossing no derivative
gives a uniform expansion
\[
 q^*(r+\Delta r)-q^*(r)=\dot q^*(r)\Delta r+O(|\Delta r|^2).
\]
The remainder is generally first order.  Lipschitz continuity alone yields
\[
 C\sqrt{W_t}|r_{t+1}-r_t|
 \le \varepsilon\alpha_tW_t
 +C_\varepsilon|r_{t+1}-r_t|^2/\alpha_t,
\]
whose last term is \(O(\alpha_t)\), rather than the
\(O(\alpha_t^2)\) forcing required for harmonic tracking.

\textbf{High-level idea of smoothing.}
We therefore average \(q^*(r-\eta u)\) over a backward window of
width \(\eta\).  This analysis-only smoothing stays \(O(\eta)\) from the
target, has Taylor remainder \(O(|\Delta r|^2/\eta)\), and remains feasible
because it uses no more resource than \(r\).  The choice
\(\eta_t=\sqrt{\alpha_t}\) balances approximation and motion errors.

Formally, extend \(q^*\) constantly to negative \(r\), and
define
\begin{equation}
 \rho(u):=6u(1-u)\1_{[0,1]}(u),\qquad
 \int_0^1\rho(u)\,\dd u=1,
 \label{eq:fixed-kernel}
\end{equation}
and, for \(0<\eta\le1\),
\begin{equation*}
 q^\eta(r):=\int_0^1\rho(u)q^*(r-\eta u)\,\dd u
 =:(\bm s^\eta(r),\lambda^\eta(r)).
\end{equation*}

\begin{lemma}
\label{lem:smoothing}
There is a primitive constant \(C_{\mathrm{sm}}<\infty\) such that, for every
\(r,r'\ge0\) and \(0<\eta,\eta'\le1\), the following hold.
\begin{enumerate}[label=(\roman*)]
\item \textbf{Approximation and admissibility:}
\begin{equation}
 \|q^\eta(r)-q^*(r)\|\le C_{\mathrm{sm}}\eta,\qquad
 m_i^{-1}(s_i^\eta(r))\le y_i^*(r)\le U_i(r),\quad i\in[N],\qquad
 0\le\lambda^\eta(r)\le\lambda_{\max}.
 \label{eq:smoothing-approximation}
\end{equation}
\item \textbf{Monotonicity and constant tail:} the primal coordinates of
\(q^\eta\) are nondecreasing, the dual
coordinate is nonincreasing, and \(q^\eta(r)=q^*(r)\) whenever
\(r\ge r_0+\eta\).
\item \textbf{Motion in resource:} the derivative
\[
 \frac{\dd q^\eta(r)}{\dd r}
 =\left(
   \frac{\dd\bm s^\eta(r)}{\dd r},
   \frac{\dd\lambda^\eta(r)}{\dd r}
  \right)
\]
exists, \(\left\|\frac{\dd q^\eta(r)}{\dd r}\right\|
\le C_{\mathrm{sm}}\), and
\begin{equation}
 \left\|q^\eta(r')-q^\eta(r)
 -\frac{\dd q^\eta(r)}{\dd r}(r'-r)\right\|
 \le C_{\mathrm{sm}}|r'-r|^2/\eta.
 \label{eq:smoothing-taylor}
\end{equation}
\item \textbf{Motion in radius:}
\(\|q^{\eta'}(r)-q^\eta(r)\|\le C_{\mathrm{sm}}|\eta'-\eta|\).
\end{enumerate}
\end{lemma}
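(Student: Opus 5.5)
Everything follows from the global Lipschitz property and the monotonicity of \(q^*\) in \Cref{lem:kkt-path}(iii), extended constantly to \(r<0\), together with the smoothness of the kernel \(\rho\). \(\rho\) vanishes at both endpoints and has bounded derivative \(|\rho'|\le6\). Let \(L_q\) be the Lipschitz constant of \(q^*\); the constant extension is still \(L_q\)-Lipschitz on \(\mathbb R\).

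For (i), \(\int\rho=1\) gives \(\|q^\eta(r)-q^*(r)\|\le\int_0^1\rho(u)\|q^*(r-\eta u)-q^*(r)\|\dd u\le L_q\eta\). The admissibility claims use monotonicity. Each \(s_i^*\) is nondecreasing, so \(s_i^*(r-\eta u)\le s_i^*(r)\) for \(u\in[0,1]\); the extension to negative arguments is \(s_i^*(0)=0\). Averaging gives \(s_i^\eta(r)\le s_i^*(r)\). Since \(m_i^{-1}\) is increasing, \(m_i^{-1}(s_i^\eta(r))\le y_i^*(r)\), and \Cref{lem:capped-rate}(iii) then gives \(y_i^*(r)\le U_i(r)\). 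Likewise \(\lambda^\eta\) is an average of values in \([0,\lambda_{\max}]\).

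For (ii), monotonicity is preserved under averaging against a nonnegative kernel. For the tail, \(q^*\) is constant for \(r\ge r_0\) by \Cref{lem:kkt-path}(ii): \(\lambda^*=0\), and \(\bm s^*\) is then the unconstrained minimizer, independent of \(r\). Hence if \(r-\eta\ge r_0\), every \(q^*(r-\eta u)=q^*(r)\).

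For (iii), substitute \(x=r-\eta u\), so that \(q^\eta(r)=\eta^{-1}\int\rho((r-x)/\eta)q^*(x)\dd x\). Differentiating under the integral is legitimate because \(\rho\) is Lipschitz with compact support. This gives \(\frac{\dd}{\dd r}q^\eta(r)=\eta^{-2}\int\rho'((r-x)/\eta)q^*(x)\dd x\). To get a bound uniform in \(\eta\), use instead that \(q^*\) is Lipschitz, hence absolutely continuous with a.e. derivative bounded by \(L_q\). Then \(\frac{\dd}{\dd r}q^\eta(r)=\int_0^1\rho(u)\dot q^*(r-\eta u)\dd u\), which has norm at most \(L_q\). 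Equivalently, integrate by parts using \(\rho(0)=\rho(1)=0\).

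The Taylor bound \eqref{eq:smoothing-taylor} is the main point, and it needs the second derivative. Differentiating once more in the \(\rho'\) form, and integrating by parts against \(\dot q^*\), gives
\[
\frac{\dd^2}{\dd r^2}q^\eta(r)=-\eta^{-1}\int_0^1\rho'(u)\dot q^*(r-\eta u)\dd u,
\]
so \(\|\frac{\dd^2}{\dd r^2}q^\eta\|\le 6L_q/\eta\). The boundary terms vanish because \(\rho(0)=\rho(1)=0\). Care is needed because \(\rho'\) jumps at the endpoints, so \(\frac{\dd}{\dd r}q^\eta\) is only Lipschitz, not \(C^1\). It is therefore cleaner to show \(\frac{\dd}{\dd r}q^\eta\) is \((6L_q/\eta)\)-Lipschitz directly: write the difference of derivatives as \(\eta^{-2}\int[\rho'((r-x)/\eta)-\rho'((r'-x)/\eta)]\ldots\), or as \(\int\rho\) against shifted \(\dot q^*\) after a change of variables. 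Then apply the integral form of Taylor's theorem, \(q^\eta(r')-q^\eta(r)-\frac{\dd}{\dd r}q^\eta(r)(r'-r)=\int_r^{r'}[\frac{\dd}{\dd r}q^\eta(\xi)-\frac{\dd}{\dd r}q^\eta(r)]\dd\xi\). This yields \(\le 3L_q|r'-r|^2/\eta\).

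For (iv), \(\|q^{\eta'}(r)-q^\eta(r)\|\le\int_0^1\rho(u)\|q^*(r-\eta'u)-q^*(r-\eta u)\|\dd u\le L_q|\eta'-\eta|\).

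Take \(C_{\mathrm{sm}}=6L_q\). \(L_q\) is primitive by \Cref{lem:kkt-path}(iii).
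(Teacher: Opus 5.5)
Your proposal is correct and follows essentially the same route as the paper: the global Lipschitz property and monotonicity of \(q^*\) from \Cref{lem:kkt-path}, and coupling the integrands at the same kernel point for (i) and (iv). For (iii) it likewise uses the derivative \(\int_0^1\rho(u)\dot q^*(r-\eta u)\,\dd u\) (the paper's \(K_\eta*\dot q^*\)), an \(O(L_q/\eta)\) Lipschitz bound on that derivative from the variation of \(\rho\), and the integral form of Taylor's theorem. There are three harmless slips. The second derivative carries a plus sign, \(\eta^{-1}\int_0^1\rho'(u)\dot q^*(r-\eta u)\,\dd u\) (check with \(\dot q^*(x)=x\)). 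That expression is continuous in \(r\), so \(q^\eta\) is in fact \(C^2\) and not merely equipped with a Lipschitz derivative. Finally, using \(\int_0^1|\rho'|=3\) instead of \(\sup|\rho'|=6\) recovers the paper's sharper choice \(C_{\mathrm{sm}}=3L_q\).
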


The proof of \Cref{lem:smoothing} is given in
Appendix~\ref{app:auxiliary-verifications}.

Choose \(\alpha_0\in(0,1]\) so that \(\alpha_0G\le1\), and set
\begin{equation}
 \eta_t:=\sqrt{\alpha_t},\qquad
 q_t:=q^{\eta_t}(r_t)=:(\bm\sigma_t,\zeta_t).
 \label{eq:qz-notation}
\end{equation}
Under \(\tau_0\ge\chi/\alpha_0\) in \eqref{eq:tuning},
\[
 \alpha_t
 =\frac{\chi}{\tau_0+\min\{t,n_t\}}
 \le\frac{\chi}{\tau_0}\le\alpha_0.
\]
Thus \(0<\eta_t\le1\) and \(\alpha_tG\le1\).  The auxiliary potential in
\eqref{eq:direct-moving-mirror-expansion} is now
\begin{equation*}
 \widetilde W_t=\mathcal W(q_t,z_t)
 =\sum_{i=1}^N
 \mathcal B_i\bigl(\sigma_{i,t},m_i(X_{i,t})\bigr)
 +\frac12(\lambda_t-\zeta_t)^2.
\end{equation*}

\textbf{Admissibility of the chosen comparator.}
\Cref{lem:smoothing}\textup{(i)} puts \(q_t,q_{t+1}\) in \(\mathcal Q\) and
their threshold representations in \(\calK(r_t)\) and
\(\calK(r_{t+1})\), respectively.  Together with
\(\alpha_tG\le1\), this records the admissibility of the comparator
sequence in \eqref{eq:qz-notation}. The remaining pre-projection domain and
projection conditions are verified below.

\subsection{\texorpdfstring{Step 3: Establish the One-Step Drift}{Step 3: Establish the One-Step Drift}}
\label{sec:state-update-contraction}

\begin{lemma}[Projected moving-comparator bound]
\label{lem:generic-moving-mirror}
Let \(\mathcal Q\) and \(\mathcal Z\) be convex sets, and let
\(W:\mathcal Q\times\mathcal Z\to\mathbb R\) be differentiable with
\(L_W\)-Lipschitz joint gradient.  Fix \(q,q^+\in\mathcal Q\),
\(z\in\mathcal Z\), a direction \(g\), a step size \(\alpha>0\), and a
nonempty closed convex set \(K^+\subseteq\mathcal Z\).  Suppose that
\(z-\alpha g\in\mathcal Z\) and
\begin{equation}
 W\!\left(q^+,\proj_{K^+}(z-\alpha g)\right)
 \le W(q^+,z-\alpha g).
 \label{eq:generic-projection-compatibility}
\end{equation}
Then, with \(z^+:=\proj_{K^+}(z-\alpha g)\),
\begin{align}
 W(q^+,z^+)
 &\le W(q,z)
 -\alpha\langle\nabla_zW(q,z),g\rangle
 +\langle\nabla_qW(q,z),q^+-q\rangle\notag\\
 &\quad+\frac{L_W}{2}\alpha^2\|g\|^2
 +L_W\alpha\|g\|\,\|q^+-q\|
 +\frac{L_W}{2}\|q^+-q\|^2.
 \label{eq:generic-moving-mirror}
\end{align}
\end{lemma}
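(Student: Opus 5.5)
This is a two-step argument: first discard the projection using \eqref{eq:generic-projection-compatibility}, then apply the descent lemma for the jointly smooth function \(W\). Write \(\tilde z:=z-\alpha g\in\mathcal Z\). By the projection-compatibility hypothesis,
\[
 W(q^+,z^+)\le W(q^+,\tilde z),
\]
so it suffices to bound \(W(q^+,\tilde z)\) by the right-hand side of \eqref{eq:generic-moving-mirror}. Since \((q,z)\) and \((q^+,\tilde z)\) both lie in the convex product \(\mathcal Q\times\mathcal Z\), the segment between them stays in the domain. The \(L_W\)-Lipschitz joint gradient then gives the standard quadratic upper bound
\begin{equation*}
 W(q^+,\tilde z)\le W(q,z)
 +\langle\nabla_qW(q,z),q^+-q\rangle
 +\langle\nabla_zW(q,z),\tilde z-z\rangle
 +\frac{L_W}{2}\bigl(\|q^+-q\|^2+\|\tilde z-z\|^2\bigr).
\end{equation*}
This follows from \(W(b)-W(a)=\int_0^1\langle\nabla W(a+s(b-a)),b-a\rangle\,\dd s\) together with Cauchy--Schwarz.

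Next substitute \(\tilde z-z=-\alpha g\). The linear term in \(z\) becomes \(-\alpha\langle\nabla_zW(q,z),g\rangle\), and the quadratic term becomes
\[
 \frac{L_W}{2}\bigl(\|q^+-q\|^2+\alpha^2\|g\|^2\bigr).
\]
The target inequality carries the additional nonnegative cross term \(L_W\alpha\|g\|\,\|q^+-q\|\). It is consistent with the quadratic bound for the joint displacement \((q^+-q,-\alpha g)\), because
\[
 \tfrac12(a+b)^2=\tfrac12a^2+ab+\tfrac12b^2\ \ge\ \tfrac12(a^2+b^2)
\]
for \(a,b\ge0\). So one may either use the sharper bound above and add the nonnegative cross term, or bound \(\|(q^+-q,-\alpha g)\|\le\|q^+-q\|+\alpha\|g\|\) and expand the square. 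Either route yields \eqref{eq:generic-moving-mirror} exactly.

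The only point needing care is domain legitimacy. The descent lemma has to be applied along a segment on which \(W\) is differentiable with Lipschitz gradient. This is ensured by the convexity of \(\mathcal Q\times\mathcal Z\) and the hypotheses \(q,q^+\in\mathcal Q\) and \(z-\alpha g\in\mathcal Z\). The projected point itself enters only through the hypothesized compatibility inequality, so no further property of \(K^+\) is needed beyond its being nonempty, closed, and convex, which makes \(z^+\) well defined. No step presents a real obstacle.
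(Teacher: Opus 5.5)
Your proof is correct, but it takes a different route from the paper. The paper applies the descent lemma twice, block by block:
\begin{enumerate}
\item In the comparator argument at the shifted state, \(W(q^+,z-\alpha g)\le W(q,z-\alpha g)+\langle\nabla_qW(q,z-\alpha g),q^+-q\rangle+\frac{L_W}{2}\|q^+-q\|^2\).
\item It replaces \(\nabla_qW(q,z-\alpha g)\) by \(\nabla_qW(q,z)\), at a cost of at most \(L_W\alpha\|g\|\,\|q^+-q\|\). This is exactly where the cross term in \eqref{eq:generic-moving-mirror} comes from.
\item It applies the descent lemma in the state argument from \((q,z)\) to \((q,z-\alpha g)\).
\end{enumerate}

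You instead apply one joint descent lemma along the segment from \((q,z)\) to \((q^+,z-\alpha g)\). This is legitimate because both endpoints lie in the convex product \(\mathcal Q\times\mathcal Z\) and the hypothesis is Lipschitz continuity of the joint gradient.

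Your route is shorter and proves a sharper inequality, with no cross term at all. It shows that the term \(L_W\alpha\|g\|\,\|q^+-q\|\) in the lemma is slack. The paper's route uses the smoothness of each block and the \(z\)-dependence of \(\nabla_qW\) separately, and that separation is what generates the cross term. With a single joint constant \(L_W\), the separation gains nothing.

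Downstream the difference is immaterial. In \eqref{eq:direct-moving-mirror-expansion}, the cross term is bounded pathwise by \(C\alpha_t^2\) anyway, using \(\|q_{t+1}-q_t\|\le C\alpha_t\).
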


The proof is given in Appendix~\ref{app:auxiliary-verifications}.

\textbf{Verification for the present potential.}
\label{app:one-step-verification}
For \(W=\mathcal W\), the following facts verify the hypotheses of
\Cref{lem:generic-moving-mirror}.
For \(i\in[N]\), set
\[
 G_i:=\max\{h_i,b_i-c_i,\lambda_{\max}\},
 \qquad
 G:=\left(\sum_{i=1}^NG_i^2+
 (\Dsum+\theta G_j)^2\right)^{1/2}.
\]
Then \(G<\infty\), and there is a primitive
\(L_{\mathcal W}<\infty\), such that the following hold.
\begin{enumerate}[label=(\roman*)]
\item \textbf{Regularity.}
On \(\mathcal Q\times\mathcal Z\), \(\mathcal W\) is differentiable and
has \(L_{\mathcal W}\)-Lipschitz joint gradient.

\item \textbf{Executable bounds.}
The algorithm is adapted and, for every \(t\),
\[
 z_t\in\calK(r_t),\qquad
 0\le X_{i,t},I_{i,t},Y_{i,t}\le\bar X_i<\bar d_i,\qquad
 0\le\lambda_t\le\lambda_{\max},\qquad
 \|\widehat G_t\|\le G.
\]

\item \textbf{
Pre-projection update and projection.}
Fix \(t<T\) and \(q_t,q_{t+1}\in\mathcal Q\).  If
\(\alpha_tG\le1\) and the threshold representation of \(q_{t+1}\) belongs
to \(\calK(r_{t+1})\), then
\[
 z_t-\alpha_t\widehat G_t\in\mathcal Z
\]
and, writing
\(z_{t+1}:=\proj_{\calK(r_{t+1})}
(z_t-\alpha_t\widehat G_t)\),
\begin{equation}
 \mathcal W(q_{t+1},z_{t+1})
 \le
 \mathcal W(q_{t+1},z_t-\alpha_t\widehat G_t).
 \label{eq:joint-projection-compatibility}
\end{equation}
These are the domain and projection conditions needed below.
\end{enumerate}

The three claims above are proved in
Appendix~\ref{app:auxiliary-verifications}, under
\emph{Details for the one-step verification}.

For the smoothed comparator in \eqref{eq:qz-notation}, the checklist
and \Cref{lem:smoothing}\textup{(i)} permit
\Cref{lem:generic-moving-mirror} with
\(g=\widehat G_t\), \(\alpha=\alpha_t\), and
\(K^+=\calK(r_{t+1})\), yielding
\begin{align}
 \widetilde W_{t+1}
 &\le \widetilde W_t
 -\alpha_t
 \left\langle
 \nabla_z\mathcal W(q_t,z_t),\widehat G_t
 \right\rangle+
 \left\langle
 \nabla_q\mathcal W(q_t,z_t),q_{t+1}-q_t
 \right\rangle
 +\frac{L_{\mathcal W}G^2}{2}\alpha_t^2\notag\\
 &\quad+
 L_{\mathcal W}G\alpha_t\|q_{t+1}-q_t\|
 +\frac{L_{\mathcal W}}{2}\|q_{t+1}-q_t\|^2.
 \label{eq:direct-moving-mirror-expansion}
\end{align}
The next lemma controls the state-update and comparator-motion terms,
respectively.

Conditioning on \(\calH_t\) and using the marginal demand laws gives
\begin{equation}
 \E[\widehat G_t\mid\calH_t]
 =G^\theta(\bm Y_t,\lambda_t;r_t),\qquad
 \|G^\theta(\bm Y_t,\lambda_t;r_t)
      -G^\theta(\bm X_t,\lambda_t;r_t)\|
 \le C\|\bm Y_t-\bm X_t\|.
 \label{eq:feasibility-bias}
\end{equation}
Let \(\mu_g:=\min_{1\le i\le N}h_i\kappa_i\) and
\(L_{g,j}:=h_jK_j/\beta_j^3\).  Choose \(\theta\) and define \(\mu_0\) by
\begin{equation}
 0<\theta\le
 \min\left\{1,\frac{\mu_g\beta_j}{L_{g,j}^2}\right\},
 \qquad
 \mu_0:=\frac12\min\{\mu_g,\theta\beta_j\}>0.
 \label{eq:restoring-constants}
\end{equation}
\begin{lemma}
\label{lem:two-contributions}
Let \(C_W\) be the upper metric-comparison constant in
\eqref{eq:mirror-metric-comparison} and set the state-update contraction
modulus \(\mu_{\mathrm{upd}}:=\mu_0/(8C_W)\).

\medskip
\noindent\textup{(i)} \textbf{State-update contraction.}
There is a primitive remainder constant \(C<\infty\) such that, for every
\(t<T\),
\begin{align}
 &-\alpha_t\E\left[
 \left\langle
 \nabla_z\mathcal W(q_t,z_t),\widehat G_t
 \right\rangle\mathrel{\Big|}\calH_t
 \right]
 \le -\mu_{\mathrm{upd}}\alpha_t\widetilde W_t\notag\\
 &\quad-\alpha_t\left(
 \sum_{i=1}^N
 m_i(X_{i,t})
 \bigl(g_i'(s_i^*(r_t))+\lambda^*(r_t)\bigr)
 +\lambda_t(r_t^{\mathrm c}-r_0)^+
 \right)
 +C\alpha_t\bigl(\alpha_t+\|\bm Y_t-\bm X_t\|^2\bigr).
 \label{eq:direct-state-update-bound}
\end{align}

\medskip
\noindent\textup{(ii)} \textbf{Comparator-motion control.}
There is a primitive constant \(C_{\mathrm{pair}}<\infty\), given explicitly
in \eqref{eq:explicit-c-pair}.  Set
\begin{equation}
 \chi_M:=\max\left\{1,\frac{32C_{\mathrm{pair}}}{\mu_{\mathrm{upd}}}\right\},
 \qquad
 n_0:=\left\lceil\max\left\{
 8,
 1+\frac{32C_{\mathrm{pair}}}{\mu_{\mathrm{upd}}\alpha_0},
 1+\frac{2}{\sqrt{\alpha_0}}
 \right\}\right\rceil.
 \label{eq:motion-cutoffs}
\end{equation}
Then there is a primitive constant \(C_{\rm m}<\infty\) such that, if
\[
 \chi\ge\chi_M,\qquad
 \chi/\alpha_0\le\tau_0\le2\chi/\alpha_0,
\]
then, for every \(t<T\) with \(n_t\ge n_0\),
\begin{align}
 &\E\left[
 \left\langle
 \nabla_q\mathcal W(q_t,z_t),q_{t+1}-q_t
 \right\rangle
 +L_{\mathcal W}G\alpha_t\|q_{t+1}-q_t\|
 +\frac{L_{\mathcal W}}2\|q_{t+1}-q_t\|^2
 \mathrel{\Big|}\calH_t\right]\notag\\
 &\qquad\le
 \frac{\mu_{\mathrm{upd}}}{2}\alpha_t\widetilde W_t
 +C_{\rm m}\alpha_t^2
 +C_{\rm m}\alpha_t\|\bm Y_t-\bm X_t\|^2.
\label{eq:conditional-comparator-motion-bound}
\end{align}
\end{lemma}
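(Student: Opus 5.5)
The plan is to treat the two parts separately. Both rest on the identity \(\partial_{y_i}\widetilde{\mathcal L}_r(\bm y,\lambda)=(1-F_i(y_i))\bigl(g_i'(m_i(y_i))+\lambda\bigr)\), which follows from \(g_i'=\ell_i'/m_i'\). Combined with \eqref{eq:mirror-threshold-derivative}, the factor \(1-F_i(X_{i,t})\) cancels in the primal pairing.

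\emph{Part (i).} First I replace \(\E[\widehat G_t\mid\calH_t]=G^\theta(\bm Y_t,\lambda_t;r_t)\) by \(G^\theta(\bm X_t,\lambda_t;r_t)\) using \eqref{eq:feasibility-bias}. The resulting bias is at most \(\alpha_t\|\nabla_z\mathcal W\|\,C\|\bm Y_t-\bm X_t\|\). By \eqref{eq:mirror-gradient-control} and Young's inequality this is at most \(\varepsilon\alpha_t\widetilde W_t+C\alpha_t\|\bm Y_t-\bm X_t\|^2\).

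Write \(s_i=m_i(X_{i,t})\). The remaining pairing is then
\[
\sum_i(s_i-\sigma_{i,t})\bigl(g_i'(s_i)+\lambda_t\bigr)+(\lambda_t-\zeta_t)\Bigl(r_t^{\mathrm c}-\sum_is_i+\theta(1-F_j(X_{j,t}))(g_j'(s_j)+\lambda_t)\Bigr).
\]
I first evaluate it with the exact target \(q_t^*\) in place of \(q_t\), and add and subtract the field at \(q_t^*\).

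\begin{itemize}
\item The monotone part gives \(\mu_g\|\bm s-\bm s^*\|^2\), by strong convexity \eqref{eq:g-curvature-bounds}. The bilinear cross terms \(\pm(\lambda_t-\lambda^*)\sum_i(s_i-s_i^*)\) cancel.
\item Complementarity \eqref{eq:KKT-residuals} turns the constant primal part into exactly \(\sum_is_i(g_i'(s_i^*)+\lambda^*)\).
\item \Cref{lem:capped-rate}(ii) turns the constant dual part into \(\lambda_t(r_t^{\mathrm c}-r_0)^+\).
\item For the anchor, \(g_j'(s_j^*)+\lambda^*=0\) for all \(r\), including \(r=0\) since \(g_j'(0)=-\lambda_{\max}\). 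Using \(1-F_j\ge\beta_j\), the anchor term is at least \(\theta\beta_j|\lambda_t-\lambda^*|^2-\theta L_{g,j}|\lambda_t-\lambda^*||s_j-s_j^*|\). The negative piece is absorbed by the choice of \(\theta\) in \eqref{eq:restoring-constants}.
\end{itemize}

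This yields a restoring term of size \(2\mu_0(\|\bm s-\bm s^*\|^2+|\lambda_t-\lambda^*|^2)\), up to constants.

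Next I switch the comparator from \(q_t^*\) to \(q_t\), which differ by \(O(\eta_t)=O(\sqrt{\alpha_t})\) by \Cref{lem:smoothing}(i). The term \(\langle q_t^*-q_t,F(z_t)-F(q_t^*)\rangle\) is at most \(\varepsilon\|z-q^*\|^2+C\eta_t^2\). The term \(\langle q_t^*-q_t,F(q_t^*)\rangle\) has a favorable sign, for two reasons. A positive primal residual forces \(s_i^*=0\), hence \(\sigma_{i,t}\le s_i^*=0\). A positive dual slack forces \(\lambda^*=0\le\zeta_t\).

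Finally, \eqref{eq:mirror-metric-comparison} converts the squared errors into \(\widetilde W_t/C_W\), again at \(O(\eta_t^2)=O(\alpha_t)\) cost. Multiplying by \(\alpha_t\) gives \eqref{eq:direct-state-update-bound} with \(\mu_{\mathrm{upd}}=\mu_0/(8C_W)\).

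\emph{Part (ii).} I split
\[
q_{t+1}-q_t=\bigl[q^{\eta_{t+1}}(r_{t+1})-q^{\eta_{t+1}}(r_t)\bigr]+\bigl[q^{\eta_{t+1}}(r_t)-q^{\eta_t}(r_t)\bigr].
\]

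\begin{itemize}
\item \textbf{Radius part.} Since \(|\alpha_{t+1}-\alpha_t|\le C\alpha_t^2/\chi\), we get \(|\eta_{t+1}-\eta_t|\le C\alpha_t^{3/2}\). \Cref{lem:smoothing}(iv) and \eqref{eq:mirror-gradient-control} then bound its pairing by \(C\sqrt{\widetilde W_t}\,\alpha_t^{3/2}\le\varepsilon\alpha_t\widetilde W_t+C\alpha_t^2\).
\item \textbf{Resource part.} Under \(\tau_0\le2\chi/\alpha_0\) and \(n_t\ge n_0\), we have \(1/(n_t-1)\le C\alpha_t\), so \(|r_{t+1}-r_t|\le C\alpha_t\). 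The Taylor remainder \eqref{eq:smoothing-taylor} is then \(O(\alpha_t^2/\eta_t)=O(\alpha_t^{3/2})\) and is handled as in the radius part. All terms quadratic in \(\|q_{t+1}-q_t\|\), together with \(\alpha_t\|q_{t+1}-q_t\|\), are \(O(\alpha_t^2)\).
\end{itemize}

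The main obstacle is the first-order drift \(\langle\nabla_q\mathcal W,\tfrac{\dd q^{\eta}}{\dd r}\rangle\,\E[r_{t+1}-r_t\mid\calH_t]\), because it is only \(O(\sqrt{\widetilde W_t}\,\alpha_t)\). Using \eqref{eq:master-resource-drift}, I decompose \(r_t-\sum_im_i(Y_{i,t})\) into four pieces:
\begin{itemize}
\item the tracking error \(\sum_i(s_i^*-m_i(X_{i,t}))\);
\item the slack \((r_t-r_0)^+\);
\item the implementation error \(\|\bm Y_t-\bm X_t\|_1\);
\item the smoothing shift.
\end{itemize}
The slack contributes nothing when \(r_t\ge r_0+\eta_t\), because the derivative of \(q^\eta\) vanishes there by \Cref{lem:smoothing}(ii); otherwise the slack is at most \(\eta_t\). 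The tracking piece produces \(C_{\mathrm{pair}}\widetilde W_t/(n_t-1)\le C_{\mathrm{pair}}\,C\alpha_t\widetilde W_t/\chi\). This is at most \(\tfrac{\mu_{\mathrm{upd}}}{4}\alpha_t\widetilde W_t\) exactly because \(\chi\ge32C_{\mathrm{pair}}/\mu_{\mathrm{upd}}\) and \(n_t\ge n_0\). The remaining pieces go to \(C_{\rm m}\alpha_t^2+C_{\rm m}\alpha_t\|\bm Y_t-\bm X_t\|^2\) by Young's inequality, which yields \eqref{eq:conditional-comparator-motion-bound}.
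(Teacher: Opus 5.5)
Your plan follows the paper's architecture closely. In part (i) you use the exact-target restoring inequality, the transfer to the backward-smoothed comparator, and the $\bm X\to\bm Y$ bias via \eqref{eq:feasibility-bias}; in part (ii) you use the resource/radius split and handle the first-order drift by decomposing $r_t-\sum_i m_i(Y_{i,t})$. However, the comparator switch in part (i) fails as written. The dual component of $\langle q_t^*-q_t,F(q_t^*)\rangle$ is $(\lambda^*(r_t)-\zeta_t)(r_t^{\mathrm c}-r_0)^+$. Your own observation that $\lambda^*(r_t)=0\le\zeta_t$ on the slack region shows this equals $-\zeta_t(r_t^{\mathrm c}-r_0)^+\le0$, so the sign is unfavorable, not favorable.

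This is structural. The target satisfies $g_i'(s_i^*)+\lambda^*\ge0$ with complementarity, and $\lambda^*(r)(r^{\mathrm c}-r_0)^+=0$. Hence $q^*(r)$ satisfies the variational inequality $\langle q-q^*(r),F(q^*(r))\rangle\ge0$ for every feasible $q$, so moving the comparator off $q^*$ can never help at first order. Your primal part is merely zero, because backward smoothing keeps $\sigma_{i,t}\le s_i^*(r_t)$. The crude bound $\zeta_t(r_t^{\mathrm c}-r_0)^+\le C\eta_t$ is too weak. It leaves a remainder $C\alpha_t\eta_t=C\alpha_t^{3/2}$ in \eqref{eq:direct-state-update-bound}. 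After division by $\alpha_t$ in the reciprocal-step summation, this contributes $\sum_t\sqrt{\alpha_t}\asymp\sqrt T$ and destroys the logarithmic rate.

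The missing idea is the boundary-layer localization used in the paper's proof. The product vanishes for $r_t\le r_0$, where there is no slack. It also vanishes for $r_t\ge r_0+\eta_t$, where $\zeta_t=\lambda^{\eta_t}(r_t)=\lambda^*(r_t)=0$ by the constant tail in \Cref{lem:smoothing}\textup{(ii)}. For $r_0<r_t<r_0+\eta_t$ one has $(r_t^{\mathrm c}-r_0)^+\le\eta_t$ and $\zeta_t=|\lambda^{\eta_t}(r_t)-\lambda^*(r_t)|\le C_{\mathrm{sm}}\eta_t$. Hence the term is at least $-C\eta_t^2=-C\alpha_t$, an admissible remainder, and part (i) then closes. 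Note that the anchor argument yields restoring modulus $\mu_0$, not $2\mu_0$, since the dual coefficient after Young is only $\theta\beta_j/2$. This is still enough, because the chain $\mu_0\to\mu_0/2\to\mu_0/(4C_W)\to\mu_0/(8C_W)$ matches $\mu_{\mathrm{upd}}$ exactly.

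There is a smaller omission in part (ii). The bound $|r_{t+1}-r_t|\le C\alpha_t$ is false in general, since $r_t=C_t/n_t$ can be of order $T$. You need a case split.
\begin{itemize}
\item If $r_t>\max\{\Dsum,r_0+\eta\}$ for the relevant radius $\eta$, then total sales are below $r_t$, so $r_{t+1}>r_t$. The smoothed path is constant on $[r_t,r_{t+1}]$, so the increment, the derivative term, and the Taylor remainder all vanish.
\item Otherwise $|r_{t+1}-r_t|\le(\Dsum+1)/(n_t-1)\le C\alpha_t$.
\end{itemize}
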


The proof of \Cref{lem:two-contributions}, including the fixed-target
restoring calculation used in part~\textup{(i)}, is given in
Appendix~\ref{app:auxiliary-verifications}.

Set
\(\mu_{\mathrm{rec}}:=\min\{\mu_{\mathrm{upd}}/2,1/2\}\), and decrease
\(\alpha_0\), if necessary, so that
\begin{equation*}
 \alpha_0G\le1,\qquad \mu_{\mathrm{rec}}\alpha_0\le\frac12.
\end{equation*}
With this \(\alpha_0\), take \(\chi_M,n_0\) from
\Cref{lem:two-contributions}\textup{(ii)}.  For the
rest of this step, impose
\begin{equation}
 \chi\ge\chi_M,\qquad
 \chi\mu_{\mathrm{rec}}>3,\qquad
 \chi/\alpha_0\le\tau_0\le2\chi/\alpha_0,
 \label{eq:tuning}
\end{equation}
and fix \(t<T\) with \(n_t\ge n_0\).  The storewise KKT inequalities give
\(g_i'(s_i^*(r_t))+\lambda^*(r_t)\ge0\), while
\(m_i(X_{i,t})\ge0\), \(\lambda_t\ge0\), and
\((r_t^{\mathrm c}-r_0)^+\ge0\).  Thus the residual below is nonnegative.
Combining
\Cref{lem:two-contributions}\textup{(i)--(ii)} in
\eqref{eq:direct-moving-mirror-expansion}, and using
\(\mu_{\mathrm{rec}}\le\mu_{\mathrm{upd}}/2\) and
\(\mu_{\mathrm{rec}}\le1\), gives
\begin{align}
 \E[\widetilde W_{t+1}\mid\calH_t]
 &\le(1-\mu_{\mathrm{rec}}\alpha_t)\widetilde W_t
 -\mu_{\mathrm{rec}}\alpha_t\left(
 \sum_{i=1}^N
 m_i(X_{i,t})
 \bigl(g_i'(s_i^*(r_t))+\lambda^*(r_t)\bigr)
 +\lambda_t(r_t^{\mathrm c}-r_0)^+
 \right)\notag\\
 &\quad+C\alpha_t^2
 +C\alpha_t\|\bm Y_t-\bm X_t\|^2.
 \label{eq:mirror-recursion}
\end{align}
The condition \(\chi\mu_{\mathrm{rec}}>3\), in particular, guarantees
\(\mu_{\mathrm{rec}}>1/\chi\), as needed for the reciprocal-step
absorption below.

This completes the one-step drift argument.

\subsection{\texorpdfstring{Step 4: Sum the Recursion}{Step 4: Sum the Recursion}}
\label{sec:tracking-assembly}

With the one-step recursion \eqref{eq:mirror-recursion} established in
Appendix~\ref{sec:state-update-contraction}, it remains
to sum it in two ways and then return from the smoothed comparator to the
exact target.

\textbf{Summation.}
Dropping the nonnegative parenthesized residual in
\eqref{eq:mirror-recursion} and taking expectations gives
\begin{equation}
 \E\widetilde W_{t+1}
 \le(1-\mu_{\mathrm{rec}}\alpha_t)\E\widetilde W_t
 +C\alpha_t^2
 +C\alpha_t\E\|\bm Y_t-\bm X_t\|^2.
 \label{eq:scalar-potential-recursion}
\end{equation}

The first required bound is
\begin{equation}
 \sum_{t=1}^T\E\widetilde W_t
 \le C\log T
 +C\sum_{t=1}^T\E\|\bm Y_t-\bm X_t\|^2.
 \label{eq:potential-sum}
\end{equation}
The second bound compares the
exact and smoothed potentials:
\begin{equation}
 W_t\le C\widetilde W_t+C\alpha_t.
 \label{eq:exact-to-smoothed-potential}
\end{equation}
Consequently,
\begin{equation}
 \sum_{t=1}^T\E W_t
 \le C\sum_{t=1}^T\E\widetilde W_t+C\log T.
 \label{eq:exact-to-smoothed-potential-sum}
\end{equation}
To prove \eqref{eq:potential-sum}, first suppose \(T\ge n_0\), set
\(L=T-n_0+1\), and write
\[
 u_t:=\E\widetilde W_t,
 \qquad
 b_t:=C\E\|\bm Y_t-\bm X_t\|^2,
 \qquad
 h_t:=\frac1{\alpha_t}.
\]
Compactness gives \(0\le u_t\le U\).  For \(t\le L\), divide
\eqref{eq:scalar-potential-recursion} by \(\alpha_t\) and sum to obtain
\[
 \mu_{\mathrm{rec}}\sum_{t=1}^L u_t
 \le \sum_{t=1}^L h_t(u_t-u_{t+1})
 +C\sum_{t=1}^L\alpha_t+\sum_{t=1}^L b_t.
\]
Writing \(d_t=\min\{t,T-t+1\}\), summation by parts and
\(h_t=(\tau_0+d_t)/\chi\) give
\[
 \sum_{t=1}^Lh_t(u_t-u_{t+1})
 =h_1u_1+\sum_{t=2}^L(h_t-h_{t-1})u_t-h_Lu_{L+1}
 \le C+\frac1\chi\sum_{t=2}^Lu_t.
\]
Because \(\mu_{\mathrm{rec}}>1/\chi\), this sum is absorbed.  Moreover,
\[
 \sum_{t=1}^T\alpha_t
 \le2\chi\sum_{k=1}^{\lceil T/2\rceil}(\tau_0+k)^{-1}
 \le C\log T,
\]
and the final \(n_0-1\) potentials contribute at most \((n_0-1)U\).
This proves \eqref{eq:potential-sum}. For \(T<n_0\), boundedness gives the
same conclusion directly.

\textbf{Proof of the exact-to-smoothed comparison.}
The upper metric comparison in
\eqref{eq:mirror-metric-comparison} gives
\[
 W_t\le C_W\left(
 \|\bm m(\bm X_t)-\bm s^*(r_t)\|^2
 +|\lambda_t-\lambda^*(r_t)|^2\right).
\]
Insert \(q_t=(\bm\sigma_t,\zeta_t)\) separately in the primal and dual
coordinates and use \(\|a+b\|^2\le2\|a\|^2+2\|b\|^2\).  The expression in
parentheses is at most
\[
 2\left(\|\bm m(\bm X_t)-\bm\sigma_t\|^2
 +|\lambda_t-\zeta_t|^2\right)
 +2\|q_t-q^*(r_t)\|^2.
\]
The lower metric comparison in \eqref{eq:mirror-metric-comparison} gives
\[
 \|\bm m(\bm X_t)-\bm\sigma_t\|^2+|\lambda_t-\zeta_t|^2
 \le C\widetilde W_t,
\]
while \Cref{lem:smoothing}\textup{(i)} and \(\eta_t^2=\alpha_t\) give
\[
 \|q_t-q^*(r_t)\|^2\le C_{\mathrm{sm}}^2\eta_t^2\le C\alpha_t.
\]
Substitution proves \eqref{eq:exact-to-smoothed-potential}.  Taking
expectations and summing this pointwise bound yields
\[
 \sum_{t=1}^T\E W_t
 \le C\sum_{t=1}^T\E\widetilde W_t+C\sum_{t=1}^T\alpha_t.
\]
The harmonic estimate above now proves
\eqref{eq:exact-to-smoothed-potential-sum}.

Combining \eqref{eq:tracking-to-potential-sum},
\eqref{eq:exact-to-smoothed-potential-sum}, and \eqref{eq:potential-sum}
gives
\begin{align*}
 &\sum_{t=1}^T\E\left[
  \|\bm m(\bm X_t)-\bm s^*(r_t)\|^2
  +|\lambda_t-\lambda^*(r_t)|^2
 \right]
 \le
 C\sum_{t=1}^T\E W_t\notag\\
 &\le C\sum_{t=1}^T\E\widetilde W_t+C\log T
 \le
 C\log T
 +C\sum_{t=1}^T\E\|\bm Y_t-\bm X_t\|^2,
\end{align*}
which is \eqref{eq:cumulative-tracking}.

\textbf{Residual summation.}
\label{app:pd-residual-proof}
It remains to prove \Cref{lem:technical-pd}\textup{(ii)} from the same
recursion.  For \(T<n_0\), each storewise residual is bounded by
\(\lambda_{\max}m_i(\bar X_i)\), so the claim follows after enlarging
\(C_{\rm md}\).   Suppose therefore that \(T\ge n_0\), and let
\(L=T-n_0+1\).  For the residual, write
\(w_t=\E\widetilde W_t\), rearrange
\eqref{eq:mirror-recursion}, take expectations, and divide by
\(\mu_{\mathrm{rec}}\alpha_t\).  Discarding only the nonpositive term
\(-w_t\) gives
\begin{align}
 &\E\!\left[
  \sum_{i=1}^N
  m_i(X_{i,t})
  \bigl(g_i'(s_i^*(r_t))+\lambda^*(r_t)\bigr)
  +\lambda_t(r_t^{\mathrm c}-r_0)^+
 \right]\notag\\
 &\qquad\le\frac{w_t-w_{t+1}}{\mu_{\mathrm{rec}}\alpha_t}
 +C\alpha_t
 +C\E\|\bm Y_t-\bm X_t\|^2.
 \label{eq:residual-rearrange}
\end{align}
The intermediate inequality contains an additional resource term.
By \Cref{lem:capped-rate},
\[
 r_t^{\mathrm c}-\sum_{i=1}^Ns_i^*(r_t)
 =(r_t^{\mathrm c}-r_0)^+,
 \qquad
 \lambda^*(r_t)(r_t^{\mathrm c}-r_0)^+=0.
\]
Consequently, the target resource residual paired with the current dual
error satisfies
\[
 (\lambda_t-\lambda^*(r_t))
 \left(r_t^{\mathrm c}-\sum_{i=1}^Ns_i^*(r_t)\right)
 =(\lambda_t-\lambda^*(r_t))(r_t^{\mathrm c}-r_0)^+
 =\lambda_t(r_t^{\mathrm c}-r_0)^+\ge0.
\]
Thus this resource-dissipation term may be dropped from the left-hand side of
\eqref{eq:residual-rearrange} when proving
\Cref{lem:technical-pd}\textup{(ii)}.

The reciprocal-step telescope is the exact identity
\begin{align*}
 \sum_{t=1}^{L}\frac{w_t-w_{t+1}}{\alpha_t}
 &=\frac{w_1}{\alpha_1}
 +\sum_{t=2}^{L}w_t
  \left(\frac1{\alpha_t}-\frac1{\alpha_{t-1}}\right)
 -\frac{w_{L+1}}{\alpha_L}.
\end{align*}
For this proof, write \(d_t:=\min\{t,n_t\}\).  Because
\(1/\alpha_t=(\tau_0+d_t)/\chi\), the positive part of the
reciprocal increment is exactly \(1/\chi\) while \(d_t\) increases, zero on
a possible midpoint plateau, and
nonpositive while it decreases.  The first term is uniformly bounded and
the last is nonpositive.  Therefore the telescope is at most
\(C+(1/\chi)\sum_{t=2}^Lw_t\), which is controlled by
\eqref{eq:potential-sum}.  
After dropping the nonnegative resource term as above, summing
\eqref{eq:residual-rearrange} and adding the final \(n_0-1\) bounded
storewise residuals proves \eqref{eq:cumulative-residual}.

\subsection{\texorpdfstring{Auxiliary Proofs and Verifications}{Auxiliary Proofs and Verifications}}
\label{app:auxiliary-verifications}

This subsection collects, in order of use, the auxiliary proofs for
Steps~1--3 and the remaining one-step verifications.

\textbf{Proof of \Cref{lem:capped-rate}.}

\proof{Proof.}
\noindent\emph{Target preservation and complementarity.}
Since \(r_0=\sum_i\phi_i(0)\le\Dsum\), if \(r\le\Dsum\), capping does
nothing. If \(r>\Dsum\), then both
\(r\) and \(r^{\mathrm c}=\Dsum\) lie on the constant nonbinding branch
\((\bm\phi(0),0)\) of \eqref{eq:kkt-path}.  This proves
\eqref{eq:cap-preserves-target}.  Moreover, the binding branch gives
\(\sum_i s_i^*(r)=r=r^{\mathrm c}\) for \(r<r_0\), while the nonbinding
branch gives \(\sum_i s_i^*(r)=r_0\) and \(\lambda^*(r)=0\) for
\(r\ge r_0\).  Hence
\[
 r^{\mathrm c}-\sum_i s_i^*(r)=(r^{\mathrm c}-r_0)^+,
 \qquad \lambda^*(r)(r^{\mathrm c}-r_0)^+=0.
\]

\noindent\emph{Feasibility in the moving clipping box.}
It remains to verify feasibility in the moving clipping box.  A nonnegative
resource price
can only lower the store threshold below the zero-price newsvendor
threshold, and hence \(y_i^*(r)<\bar X_i\).  If store \(i\) is active, then
recall that \(p_i\) is its zero-price survival probability.  Therefore
\(1-F_i(y_i^*(r))\ge p_i\), and
\[
 s_i^*(r)
 =\int_0^{y_i^*(r)}\bigl(1-F_i(u)\bigr)\,\dd u
 \ge p_i y_i^*(r).
\]
The capped complementarity identity already proved gives
\[
 s_i^*(r)\le\sum_{k=1}^Ns_k^*(r)
 =r^{\mathrm c}
 -(r^{\mathrm c}-r_0)^+
 \le r^{\mathrm c}.
\]
This proves the primal bound in
\eqref{eq:target-inside-threshold-cap}.  For an inactive store,
\(y_i^*(r)=0\), so the same conclusion holds.  Finally,
\(0\le\lambda^*(r)\le\lambda_{\max}\) is part of the selected KKT path.
Together with the coordinatewise primal bounds and the definition of
\(\calK(r)\), it proves the asserted threshold--dual membership.
\endproof

\phantomsection
\label{app:proof-mirror-toolkit}

\textbf{Proof of \Cref{lem:mirror-toolkit}.}

\proof{Proof.}
For a fixed store \(i\), abbreviate
\[
 S_i(x):=1-F_i(x),\qquad M_i:=m_i(\bar X_i),\qquad
 w_i(u):=S_i\bigl(m_i^{-1}(u)\bigr)^{-2}.
\]
The safe-box bounds imply, for \(0\le u\le M_i\),
\[
 1\le w_i(u)\le\beta_i^{-2}.
\]

\medskip
\noindent\emph{Metric comparison.}
For \(s\in[0,M_i]\), the change of variables
\(u=\sigma+\tau(s-\sigma)\) gives
\[
 \mathcal B_i(\sigma,s)
 =(s-\sigma)^2\int_0^1
 \tau w_i\bigl(\sigma+\tau(s-\sigma)\bigr)\,\dd\tau.
\]
Consequently,
\[
 \frac12(s-\sigma)^2
 \le\mathcal B_i(\sigma,s)
 \le\frac1{2\beta_i^2}(s-\sigma)^2.
\]
Setting \(s=m_i(x_i)\), summing over \(i\), and adding the dual quadratic
proves \eqref{eq:mirror-metric-comparison}.

\medskip
\noindent\emph{Gradient control.}
Let \(H_i(\sigma,x)\) denote the \(i\)-th primal summand in
\eqref{eq:joint-lyapunov}.  On the safe box, direct differentiation of its
middle branch gives
\[
 |\partial_xH_i(\sigma_i,x_i)|
 \le\beta_i^{-1}|m_i(x_i)-\sigma_i|,
 \qquad
 |\partial_\sigma H_i(\sigma_i,x_i)|
 \le\beta_i^{-2}|m_i(x_i)-\sigma_i|.
\]
The two derivatives of the dual quadratic have magnitude
\(|\lambda-\zeta|\).  Therefore,
\[
 \|\nabla_q\mathcal W(q,z)\|^2+
 \|\nabla_z\mathcal W(q,z)\|^2
 \le C\bigl(\|\bm m(\bm x)-\bm\sigma\|^2
             +|\lambda-\zeta|^2\bigr).
\]
The lower bound in \eqref{eq:mirror-metric-comparison} converts the
right-hand side to \(C\mathcal W(q,z)\), proving
\eqref{eq:mirror-gradient-control}.
\endproof

\textbf{Proof of \Cref{lem:smoothing}.}

\proof{Proof.}
Let \(L_q\) be a global Lipschitz constant of \(q^*\).  Coupling the
integrands at the same kernel point gives
\begin{align*}
 \|q^\eta(r)-q^*(r)\|
 &\le L_q\eta\int_0^1u\rho(u)\,\dd u\le L_q\eta,\\
 \|q^{\eta'}(r)-q^\eta(r)\|
 &\le L_q|\eta'-\eta|\int_0^1u\rho(u)\,\dd u
 \le L_q|\eta'-\eta|.
\end{align*}
Averaging preserves the coordinatewise monotonicity of the KKT path.  Since
the kernel uses only \(r-\eta u\le r\), primal monotonicity also gives
\[
 m_i^{-1}(s_i^\eta(r))
 \le m_i^{-1}(s_i^*(r))
 =y_i^*(r)\le U_i(r).
\]
The multiplier average remains in \([0,\lambda_{\max}]\), and every integrand
equals the constant nonbinding target when \(r\ge r_0+\eta\).  This proves
parts~\textup{(i)}, \textup{(ii)}, and \textup{(iv)}.

Extend \(\rho\) by zero outside \([0,1]\).  A Lipschitz path is absolutely
continuous and has an a.e.\ derivative \(\dot q^*\) with
\(\|\dot q^*\|_\infty\le L_q\).  With
\[
 K_\eta(u):=\eta^{-1}\rho(u/\eta)\1_{[0,\eta]}(u),
\]
the smoothing is \(K_\eta*q^*\), and its weak derivative is
\[
 \frac{\dd q^\eta(r)}{\dd r}=(K_\eta*\dot q^*)(r).
\]
Translation continuity in \(L^1\) makes this derivative continuous, so it
is the classical derivative everywhere.  The zero extension of \(\rho\)
has total variation
\[
 V_\rho:=|\rho(0)|+|\rho(1)|
 +\int_0^1|\rho'(u)|\,\dd u.
\]
Consequently
\[
 \left\|\frac{\dd q^\eta(r)}{\dd r}\right\|\le L_q,
 \qquad
 \left\|\frac{\dd q^\eta(r')}{\dd r'}
 -\frac{\dd q^\eta(r)}{\dd r}\right\|
 \le (V_\rho L_q/\eta)|r'-r|,
\]
because
\(\|K_\eta(\cdot-h)-K_\eta\|_1
\le(V_\rho/\eta)|h|\).
The integral form of Taylor's theorem gives
\eqref{eq:smoothing-taylor}.  Choosing \(C_{\mathrm{sm}}\) to dominate the displayed
constants proves part~\textup{(iii)} and completes the proof.
\endproof

\textbf{Proof of \Cref{lem:generic-moving-mirror}.}

\proof{Proof.}
By \eqref{eq:generic-projection-compatibility},
\[
 W(q^+,z^+)\le W(q^+,z-\alpha g).
\]
Applying the descent lemma first in the comparator argument gives
\begin{align*}
 W(q^+,z-\alpha g)
 &\le W(q,z-\alpha g)
 +\left\langle\nabla_qW(q,z-\alpha g),q^+-q\right\rangle
 +\frac{L_W}{2}\|q^+-q\|^2.
\end{align*}
Moreover,
\[
 \|\nabla_qW(q,z-\alpha g)-\nabla_qW(q,z)\|
 \le L_W\alpha\|g\|,
\]
while the descent lemma in the state argument gives
\[
 W(q,z-\alpha g)
 \le W(q,z)-\alpha\langle\nabla_zW(q,z),g\rangle
 +\frac{L_W}{2}\alpha^2\|g\|^2.
\]
Substitution and Cauchy--Schwarz prove
\eqref{eq:generic-moving-mirror}.
\endproof

\textbf{Proof of \Cref{lem:two-contributions}.}

\proof{Proof.}
\medskip
\noindent\textbf{Part (i): State-update contraction.}

\medskip
\noindent\emph{Substep 1: fixed-target restoring inequality.}
For every \(r\ge0\), \(0\le\lambda\le\lambda_{\max}\), and
\(0\le x_i\le\bar X_i\), the fixed-target calculation below gives
\begin{align}
 &\left\langle
 \nabla_z\mathcal W\bigl(q^*(r),(\bm x,\lambda)\bigr),
 G^\theta(\bm x,\lambda;r)
 \right\rangle
 \ge
 \mu_0\left(
 \|\bm m(\bm x)-\bm s^*(r)\|^2
 +|\lambda-\lambda^*(r)|^2\right)\notag\\
 &\quad+
 \sum_{i=1}^N
 m_i(x_i)\bigl(g_i'(s_i^*(r))+\lambda^*(r)\bigr)
 +\lambda(r^{\mathrm c}-r_0)^+.
 \label{eq:stabilized-merit}
\end{align}

\medskip
\noindent\emph{Substep 2: transfer to the smoothed comparator and identify the
ideal pairing.}
Fix \(r\ge0\), \(0<\eta\le1\),
\(0\le\lambda\le\lambda_{\max}\), and \(0\le X_i\le\bar X_i\).
Subtracting the exact-target pairing in \eqref{eq:stabilized-merit} from
the corresponding pairing with \(q^\eta(r)\) gives
\begin{align*}
 &-\sum_{i=1}^N\bigl(s_i^\eta(r)-s_i^*(r)\bigr)
  \bigl(g_i'(m_i(X_i))+\lambda\bigr)\\
 &\quad-\bigl(\lambda^\eta(r)-\lambda^*(r)\bigr)
 \left(r^{\mathrm c}-\sum_{i=1}^Nm_i(X_i)
 +\theta\bigl(1-F_j(X_j)\bigr)
 \bigl(g_j'(m_j(X_j))+\lambda\bigr)\right).
\end{align*}
If store \(i\) is inactive at \(r\), backward smoothing leaves
\(s_i^\eta(r)=s_i^*(r)=0\).  If it is active, its KKT residual vanishes,
and safe-box regularity gives
\[
 |g_i'(m_i(X_i))+\lambda|
 \le C\bigl(\|\bm m(\bm X)-\bm s^*(r)\|
          +|\lambda-\lambda^*(r)|\bigr).
\]
For the dual line, use
\[
 r^{\mathrm c}-\sum_{i=1}^N m_i(X_i)
 =(r^{\mathrm c}-r_0)^+
  -\sum_{i=1}^N\bigl(m_i(X_i)-s_i^*(r)\bigr),
\]
and \eqref{eq:anchor-zero}.  The product of
\(\lambda^\eta(r)-\lambda^*(r)\) and
\((r^{\mathrm c}-r_0)^+\) vanishes outside
\([r_0,r_0+\eta]\). Inside that layer its two factors are \(O(\eta)\).
Together with \eqref{eq:smoothing-approximation}, the displayed difference
is bounded below by
\[
 -C\eta\bigl(\|\bm m(\bm X)-\bm s^*(r)\|
          +|\lambda-\lambda^*(r)|\bigr)-C\eta^2.
\]
Young's inequality yields
\[
 C\eta\bigl(\|\bm m(\bm X)-\bm s^*(r)\|
          +|\lambda-\lambda^*(r)|\bigr)
 \le\frac{\mu_0}{2}\left(
   \|\bm m(\bm X)-\bm s^*(r)\|^2
   +|\lambda-\lambda^*(r)|^2\right)+C\eta^2.
\]
Moreover, the squared triangle inequality,
\eqref{eq:smoothing-approximation}, and the upper bound in
\eqref{eq:mirror-metric-comparison} give
\begin{align*}
 &\|\bm m(\bm X)-\bm s^*(r)\|^2
   +|\lambda-\lambda^*(r)|^2\\
 &\qquad\ge
 \frac12\left(\|\bm m(\bm X)-\bm s^\eta(r)\|^2
   +|\lambda-\lambda^\eta(r)|^2\right)-C\eta^2\\
 &\qquad\ge
 \frac1{2C_W}\mathcal W\bigl(q^\eta(r),(\bm X,\lambda)\bigr)-C\eta^2.
\end{align*}
Combining these bounds with \eqref{eq:stabilized-merit} gives
\begin{align}
 &\sum_{i=1}^N
 \bigl(m_i(X_i)-s_i^\eta(r)\bigr)
 \bigl(g_i'(m_i(X_i))+\lambda\bigr)\notag\\
 &\quad+\bigl(\lambda-\lambda^\eta(r)\bigr)
 \left(r^{\mathrm c}-\sum_{i=1}^Nm_i(X_i)
 +\theta\bigl(1-F_j(X_j)\bigr)
 \bigl(g_j'(m_j(X_j))+\lambda\bigr)\right)\notag\\
 &\qquad\ge
 \frac{\mu_0}{4C_W}
 \mathcal W\bigl(q^\eta(r),(\bm X,\lambda)\bigr)
 +\sum_{i=1}^N
 m_i(X_i)\bigl(g_i'(s_i^*(r))+\lambda^*(r)\bigr)
 +\lambda(r^{\mathrm c}-r_0)^+-C\eta^2.
 \label{eq:smoothed-target-restoring}
\end{align}
For a safe state, \eqref{eq:mirror-threshold-derivative} and
\eqref{eq:threshold-field} show that the survival factors cancel, so the
left side of \eqref{eq:smoothed-target-restoring} is precisely
\(\langle\nabla_z\mathcal W(q^\eta(r),(\bm X,\lambda)),
G^\theta(\bm X,\lambda;r)\rangle\).
At \((r,\eta,\bm X,\lambda)=(r_t,\eta_t,\bm X_t,\lambda_t)\), using
\(\eta_t^2=\alpha_t\), we obtain
\begin{align*}
 &\left\langle
 \nabla_z\mathcal W(q_t,z_t),
 G^\theta(\bm X_t,\lambda_t;r_t)
 \right\rangle\\
 &\quad\ge
 \frac{\mu_0}{4C_W}\widetilde W_t
 +\sum_{i=1}^Nm_i(X_{i,t})
   \bigl(g_i'(s_i^*(r_t))+\lambda^*(r_t)\bigr)
 +\lambda_t(r_t^{\mathrm c}-r_0)^+-C\alpha_t.
\end{align*}

\medskip
\noindent\emph{Substep 3: transfer from the ideal field to the executable
stochastic update.}
Because \(\nabla_z\mathcal W(q_t,z_t)\) is \(\calH_t\)-measurable,
\eqref{eq:feasibility-bias} yields
\begin{align*}
 &\E\left[
  \left\langle\nabla_z\mathcal W(q_t,z_t),\widehat G_t\right\rangle
  \mathrel{\Big|}\calH_t\right]
 =
 \left\langle\nabla_z\mathcal W(q_t,z_t),
  G^\theta(\bm X_t,\lambda_t;r_t)\right\rangle\\
 &\quad+
 \left\langle\nabla_z\mathcal W(q_t,z_t),
  G^\theta(\bm Y_t,\lambda_t;r_t)
  -G^\theta(\bm X_t,\lambda_t;r_t)\right\rangle.
\end{align*}
By \eqref{eq:mirror-gradient-control}, \eqref{eq:feasibility-bias}, and
Young's inequality, the second inner product has absolute value at most
\[
 C\sqrt{\widetilde W_t}\,\|\bm Y_t-\bm X_t\|
 \le \mu_{\mathrm{upd}}\widetilde W_t
 +C\|\bm Y_t-\bm X_t\|^2.
\]
Consequently,
\begin{align*}
 &\E\left[
  \left\langle\nabla_z\mathcal W(q_t,z_t),\widehat G_t\right\rangle
  \mathrel{\Big|}\calH_t\right]
 \ge \mu_{\mathrm{upd}}\widetilde W_t\\
 &\quad+\sum_{i=1}^Nm_i(X_{i,t})
   \bigl(g_i'(s_i^*(r_t))+\lambda^*(r_t)\bigr)
 +\lambda_t(r_t^{\mathrm c}-r_0)^+
 -C\bigl(\alpha_t+\|\bm Y_t-\bm X_t\|^2\bigr).
\end{align*}
Multiplying by \(-\alpha_t\) therefore proves
\eqref{eq:direct-state-update-bound}.

\medskip
\noindent\textbf{Part (ii): Comparator-motion control.}

For brevity, write
\(D^\eta(r):=\frac{\dd q^\eta(r)}{\dd r}\).

The resource recursion gives
\begin{equation}
 r_{t+1}-r_t
 =\frac{r_t-\sum_{i=1}^NS_{i,t}}{n_t-1},\qquad
 \E[r_{t+1}-r_t\mid\calH_t]
 =\frac{r_t-\sum_{i=1}^Nm_i(Y_{i,t})}{n_t-1}.
 \label{eq:conditional-resource-motion}
\end{equation}
We first derive the first-order resource-pairing estimate.  Fix
\(r\ge0\), \(0<\eta\le1\), \(0\le x_i,y_i\le\bar X_i\), and
\(0\le\lambda\le\lambda_{\max}\).
For this calculation, abbreviate
\[
 \mathsf W:=\mathcal W\bigl(q^\eta(r),(\bm x,\lambda)\bigr),
 \qquad
 \beta_{\min}:=\min_{i\in[N]}\beta_i,
 \qquad
 C_{\nabla q}:=\sqrt2\,\beta_{\min}^{-2}.
\]
The derivative calculation behind
\eqref{eq:mirror-gradient-control}, together with the lower metric bound, gives
the first estimate below. \Cref{lem:smoothing}\textup{(iii)} gives the second:
\[
 \left\|
 \nabla_q\mathcal W\bigl(q^\eta(r),(\bm x,\lambda)\bigr)
 \right\|
 \le C_{\nabla q}\sqrt{\mathsf W},
 \qquad
 \left\|\frac{\dd q^\eta(r)}{\dd r}\right\|\le C_{\mathrm{sm}}.
\]

Set
\[
 A_\eta:=1+\sqrt N\,C_{\mathrm{sm}},
 \qquad A_W:=\sqrt{2N},
 \qquad A_Y:=\sqrt N.
\]

If \(r\le r_0\), the resource constraint binds and
\(r=\sum_{i=1}^N s_i^*(r)\).  The smoothing approximation, the lower bound in
\eqref{eq:mirror-metric-comparison}, and the one-Lipschitz property of
every \(m_i\) give
\begin{align*}
 \left|r-\sum_{i=1}^N m_i(y_i)\right|
 &\le \|\bm s^*(r)-\bm s^\eta(r)\|_1
 +\|\bm s^\eta(r)-\bm m(\bm x)\|_1
 +\sum_{i=1}^N|m_i(x_i)-m_i(y_i)|\\
 &\le A_\eta\eta+A_W\sqrt{\mathsf W}
 +A_Y\|\bm y-\bm x\|.
\end{align*}
If \(r_0<r<r_0+\eta\), then
\(\bm s^*(r)=\bm s^*(r_0)\),
\(|r-r_0|\le\eta\), and
\(\|\bm s^\eta(r)-\bm s^*(r_0)\|\le C_{\mathrm{sm}}\eta\). Inserting
\(r_0=\sum_{i=1}^N s_i^*(r_0)\) gives the same bound.  Finally, if
\(r\ge r_0+\eta\), \Cref{lem:smoothing}\textup{(ii)--(iii)} gives
\(\frac{\dd q^\eta(r)}{\dd r}=0\).  Thus all three resource regimes satisfy
\begin{align*}
 &\left|
 \left\langle
 \nabla_q\mathcal W\bigl(q^\eta(r),(\bm x,\lambda)\bigr),
 \frac{\dd q^\eta(r)}{\dd r}
 \left(r-\sum_{i=1}^Nm_i(y_i)\right)
 \right\rangle
 \right|\\
 &\qquad\le
 C_{\nabla q}C_{\mathrm{sm}}\sqrt{\mathsf W}
 \left(A_\eta\eta+A_W\sqrt{\mathsf W}
 +A_Y\|\bm y-\bm x\|\right).
\end{align*}
Using \(ab\le(a^2+b^2)/2\), a valid explicit choice is
\begin{equation}
 C_{\mathrm{pair}}
 :=C_{\nabla q}C_{\mathrm{sm}}
 \left(A_W+\frac{A_\eta+A_Y}{2}\right)
 =\sqrt2\,\beta_{\min}^{-2}C_{\mathrm{sm}}
 \left[\sqrt{2N}
 +\frac{1+\sqrt N C_{\mathrm{sm}}+\sqrt N}{2}\right].
 \label{eq:explicit-c-pair}
\end{equation}
Indeed, for all the stated arguments,
\begin{equation*}
\begin{aligned}
 &\left|
 \left\langle
 \nabla_q\mathcal W\bigl(q^\eta(r),(\bm x,\lambda)\bigr),
 \frac{\dd q^\eta(r)}{\dd r}
 \left(r-\sum_{i=1}^Nm_i(y_i)\right)
 \right\rangle
 \right|\\
 &\qquad\le C_{\mathrm{pair}}\left(
 \mathcal W\bigl(q^\eta(r),(\bm x,\lambda)\bigr)
 +\|\bm y-\bm x\|^2+\eta^2\right).
\end{aligned}
\end{equation*}

\medskip
\noindent\emph{Substep 1: control the two sources of comparator motion.}
Fix \(t<T\).  The comparator increment is
\begin{align*}
 q_{t+1}-q_t
 &=
 q^{\eta_t}(r_{t+1})-q^{\eta_t}(r_t)\\
 &\quad+
 q^{\eta_{t+1}}(r_{t+1})-q^{\eta_t}(r_{t+1}).
\end{align*}
If \(r_t\le\max(\Dsum,r_0+\eta_t)\), then
\(r_0\le\Dsum\), \(\eta_t\le1\), and
\(\sum_{i=1}^N S_{i,t}\le\Dsum\) imply
\[
 |r_{t+1}-r_t|
 =\frac{|r_t-\sum_{i=1}^N S_{i,t}|}{n_t-1}
 \le\frac{\Dsum+1}{n_t-1}.
\]
If \(r_t>\max(\Dsum,r_0+\eta_t)\), then
\(\sum_{i=1}^N S_{i,t}\le\Dsum<r_t\), so
\(r_{t+1}>r_t>r_0+\eta_t\). Both
\(D^{\eta_t}(r_t)\) and
\(q^{\eta_t}(r_{t+1})-q^{\eta_t}(r_t)\) vanish because the smoothed path
is constant there.  Thus \Cref{lem:smoothing}\textup{(iii)} gives, in all
cases,
\[
 \left\|
 q^{\eta_t}(r_{t+1})-q^{\eta_t}(r_t)
 -D^{\eta_t}(r_t)(r_{t+1}-r_t)
 \right\|
 \le\frac{C}{\eta_t(n_t-1)^2}.
\]

Moreover,
\[
 \left|
 \min\{t+1,n_{t+1}\}-\min\{t,n_t\}
 \right|\le1,
 \qquad
 \eta_t=\sqrt{\chi}\,
 \bigl(\tau_0+\min\{t,n_t\}\bigr)^{-1/2}.
\]
The mean-value theorem and
\Cref{lem:smoothing}\textup{(iv)} therefore give
\[
 \left\|
 q^{\eta_{t+1}}(r_{t+1})-q^{\eta_t}(r_{t+1})
 \right\|
 \le C\alpha_t^{3/2}.
\]

\medskip
\noindent\emph{Substep 2: bound the first-order resource increment.}
The vectors
\(\nabla_q\mathcal W(q_t,z_t)\) and \(D^{\eta_t}(r_t)\) are
\(\calH_t\)-measurable.  Hence
\eqref{eq:conditional-resource-motion} gives the exact identity
\begin{align*}
 &\E\!\left[
 \left\langle
 \nabla_q\mathcal W(q_t,z_t),
 D^{\eta_t}(r_t)(r_{t+1}-r_t)
 \right\rangle
 \mathrel{\Big|}\calH_t\right]\\
 &\quad=
 \frac1{n_t-1}
 \left\langle
 \nabla_q\mathcal W(q_t,z_t),
 D^{\eta_t}(r_t)
 \left(r_t-\sum_{i=1}^N m_i(Y_{i,t})\right)
 \right\rangle.
\end{align*}
Applying the first-order resource-pairing estimate above at
\((r,\eta,\bm x,\bm y,\lambda)
=(r_t,\eta_t,\bm X_t,\bm Y_t,\lambda_t)\) yields
\begin{align*}
 &\E\!\left[
 \left\langle
 \nabla_q\mathcal W(q_t,z_t),
 D^{\eta_t}(r_t)(r_{t+1}-r_t)
 \right\rangle
 \mathrel{\Big|}\calH_t\right]\\
 &\qquad\le\frac{C_{\mathrm{pair}}}{n_t-1}
 \bigl(\widetilde W_t+\|\bm Y_t-\bm X_t\|^2+\eta_t^2\bigr).
\end{align*}

Since
\[
 \alpha_t\ge\frac{\chi}{\tau_0+n_t},
 \qquad
 \tau_0\le\frac{2\chi}{\alpha_0},
\]
the explicit choices in \eqref{eq:motion-cutoffs} give, for
\(\chi\ge\chi_M\) and \(n:=n_t\ge n_0\),
\[
 \frac{C_{\mathrm{pair}}}{(n-1)\alpha_t}
 \le
 \frac{2C_{\mathrm{pair}}}{\alpha_0(n-1)}
 +\frac{2C_{\mathrm{pair}}}{\chi}
 \le\frac{\mu_{\mathrm{upd}}}{8}.
\]
Moreover, for every \(n\ge2\),
\[
 \frac1{(n-1)\alpha_t}
 \le\frac2{\alpha_0(n-1)}+\frac{n}{\chi(n-1)}
 \le\frac2{\alpha_0}+2,
\]
while
\[
 \alpha_t
 \ge\frac{\chi}{2\chi/\alpha_0+n}
 \ge\min\left\{\frac{\alpha_0}{4},\frac1{2n}\right\}.
\]
Since \(n\ge n_0\) implies \(n\ge8\) and
\(n-1\ge2/\sqrt{\alpha_0}\), the last bound gives
\(\alpha_t(n-1)^2\ge1\).  We have therefore established
\begin{equation}
 \frac{C_{\mathrm{pair}}}{n_t-1}
 \le\frac{\mu_{\mathrm{upd}}}{8}\alpha_t,
 \qquad
 \frac1{n_t-1}\le\left(\frac2{\alpha_0}+2\right)\alpha_t,
 \qquad
 \eta_t(n_t-1)\ge1.
 \label{eq:explicit-large-n-bounds}
\end{equation}
The first inequality absorbs the \(\widetilde W_t\)-term. The other two
will be used below.  Since \(\eta_t^2=\alpha_t\),
\begin{align*}
 &\E\!\left[
 \left\langle
 \nabla_q\mathcal W(q_t,z_t),
 D^{\eta_t}(r_t)(r_{t+1}-r_t)
 \right\rangle
 \mathrel{\Big|}\calH_t\right]\\
 &\qquad\le
 \frac{\mu_{\mathrm{upd}}}{8}\alpha_t\widetilde W_t
 +C\alpha_t\|\bm Y_t-\bm X_t\|^2+C\alpha_t^2.
\end{align*}

\medskip
\noindent\emph{Substep 3: bound the Taylor and radius remainders.}
By \eqref{eq:mirror-gradient-control},
\(\|\nabla_q\mathcal W(q_t,z_t)\|
\le C\sqrt{\widetilde W_t}\).  Young's inequality, the direct
resource-motion remainder above, and the large-\(n_t\) bounds give
\begin{align*}
 &\left|
 \left\langle
 \nabla_q\mathcal W(q_t,z_t),
 q^{\eta_t}(r_{t+1})-q^{\eta_t}(r_t)
 -D^{\eta_t}(r_t)(r_{t+1}-r_t)
 \right\rangle
 \right|\\
 &\qquad\le
 \frac{C\sqrt{\widetilde W_t}}{\eta_t(n_t-1)^2}\le
 \frac{\mu_{\mathrm{upd}}}{8}\alpha_t\widetilde W_t
 +\frac{C}{\alpha_t\eta_t^2(n_t-1)^4}
\\
 &\qquad\le
 \frac{\mu_{\mathrm{upd}}}{8}\alpha_t\widetilde W_t
 +C\alpha_t^2.
\end{align*}
\begingroup

To make the final Taylor-remainder estimate explicit, set
\(K_0:=2/\alpha_0+2\).  Since \(\eta_t^2=\alpha_t\), the second inequality
in \eqref{eq:explicit-large-n-bounds} gives
\[
 \frac{1}{\alpha_t\eta_t^2(n_t-1)^4}
 =
 \frac{1}{\alpha_t^2}
 \left(\frac{1}{n_t-1}\right)^4
 \le K_0^4\alpha_t^2.
\]
Thus the last term produced by Young's inequality is
\(O(\alpha_t^2)\).
\endgroup
Similarly, the radius-motion bound above and Young's inequality give
\begin{align*}
 &\left|
 \left\langle
 \nabla_q\mathcal W(q_t,z_t),
 q^{\eta_{t+1}}(r_{t+1})-q^{\eta_t}(r_{t+1})
 \right\rangle
 \right|\\
 &\qquad\le
 C\sqrt{\widetilde W_t}\,\alpha_t^{3/2}
 \le
 \frac{\mu_{\mathrm{upd}}}{8}\alpha_t\widetilde W_t
 +C\alpha_t^2.
\end{align*}
Combining these estimates with the first-order resource bound yields
\begin{align*}
 &\E\!\left[
 \left\langle
 \nabla_q\mathcal W(q_t,z_t),q_{t+1}-q_t
 \right\rangle
 \mathrel{\Big|}\calH_t\right]\\
 &\qquad\le
 \frac{3\mu_{\mathrm{upd}}}{8}\alpha_t\widetilde W_t
 +C\alpha_t^2
 +C\alpha_t\|\bm Y_t-\bm X_t\|^2.
\end{align*}

Finally, the two direct motion bounds and
\(\eta_t(n_t-1)\ge1\) give
\[
 \|q_{t+1}-q_t\|
 \le\frac{C}{n_t-1}+C\alpha_t^{3/2}
 \le C\alpha_t.
\]
Consequently,
\[
 L_{\mathcal W}G\alpha_t\|q_{t+1}-q_t\|
 +\frac{L_{\mathcal W}}2\|q_{t+1}-q_t\|^2
 \le C\alpha_t^2
\]
pathwise.  Adding this estimate to the preceding conditional bound and using
\(3\mu_{\mathrm{upd}}/8\le\mu_{\mathrm{upd}}/2\) proves
\eqref{eq:conditional-comparator-motion-bound}, after enlarging the primitive
constant \(C_{\rm m}\).  No same-period noise factorization has been used.
\endproof

\textbf{Fixed-target restoring calculation for
\Cref{lem:two-contributions}\textup{(i)}.}
\label{app:stabilized-merit-details}

\proof{Proof.}

\medskip
\noindent\emph{Substep 1: expand the full unstabilized pairing.}
Fix \(r\ge0\), \(0\le\lambda\le\lambda_{\max}\), and
\(0\le x_i\le\bar X_i\) for every \(i\in[N]\).
By \eqref{eq:mirror-threshold-derivative} and
\eqref{eq:threshold-field}, the survival factors cancel and the pairing is
\[
 \begin{aligned}
 &\left\langle\nabla_z\mathcal W(q^*(r),(\bm x,\lambda)),
 G^\theta(\bm x,\lambda;r)\right\rangle\\
 &=\sum_i(m_i(x_i)-s_i^*(r))(g_i'(m_i(x_i))+\lambda)
 +(\lambda-\lambda^*(r))\left(r^{\mathrm c}-\sum_i m_i(x_i)\right)\\
 &\quad+\theta(\lambda-\lambda^*(r))(1-F_j(x_j))
 (g_j'(m_j(x_j))+\lambda).
 \end{aligned}
\]
Expand the first two terms by adding and subtracting the target KKT
quantities:
\begin{align*}
 \sum_{i=1}^N
 \bigl(m_i(x_i)-s_i^*(r)\bigr)
 \bigl(g_i'(m_i(x_i))+\lambda\bigr)
 &=
 \sum_{i=1}^N
 \bigl(m_i(x_i)-s_i^*(r)\bigr)
 \bigl(g_i'(m_i(x_i))-g_i'(s_i^*(r))\bigr)\\
 &\qquad+
 \sum_{i=1}^N
 \bigl(m_i(x_i)-s_i^*(r)\bigr)
 \bigl(g_i'(s_i^*(r))+\lambda^*(r)\bigr)\\
 &\qquad+
 \bigl(\lambda-\lambda^*(r)\bigr)
 \sum_{i=1}^N\bigl(m_i(x_i)-s_i^*(r)\bigr),\\[2pt]
 \bigl(\lambda-\lambda^*(r)\bigr)
 \left(r^{\mathrm c}-\sum_{i=1}^Nm_i(x_i)\right)
 &=
 \bigl(\lambda-\lambda^*(r)\bigr)
 \left(r^{\mathrm c}-\sum_{i=1}^Ns_i^*(r)\right)\\
 &\quad-
 \bigl(\lambda-\lambda^*(r)\bigr)
 \sum_{i=1}^N\bigl(m_i(x_i)-s_i^*(r)\bigr).
\end{align*}

Only after this algebraic cancellation do we use
\eqref{eq:KKT-residuals} and
\eqref{eq:capped-target-complementarity}, which give
\[
 s_i^*(r)\bigl(g_i'(s_i^*(r))+\lambda^*(r)\bigr)=0,\qquad
 r^{\mathrm c}-\sum_{i=1}^N s_i^*(r)=(r^{\mathrm c}-r_0)^+,\qquad
 \lambda^*(r)(r^{\mathrm c}-r_0)^+=0.
\]
More explicitly,
\eqref{eq:g-curvature-bounds} implies, for every \(i\),
\[
 \bigl(u-v\bigr)\bigl(g_i'(u)-g_i'(v)\bigr)
 \ge h_i\kappa_i(u-v)^2,
 \qquad u,v\in[0,m_i(\bar X_i)].
\]
Therefore, with \(u=m_i(x_i)\) and \(v=s_i^*(r)\),
\[
 \sum_{i=1}^N
 \bigl(m_i(x_i)-s_i^*(r)\bigr)
 \bigl(g_i'(m_i(x_i))-g_i'(s_i^*(r))\bigr)
 \ge
 \mu_g\|\bm m(\bm x)-\bm s^*(r)\|^2,
 \qquad
 \mu_g:=\min_{1\le i\le N}h_i\kappa_i.
\]
This is the strong-convexity inequality used in the final step below.
\begin{align}
 &\sum_{i=1}^N
 \bigl(m_i(x_i)-s_i^*(r)\bigr)
 \bigl(g_i'(m_i(x_i))+\lambda\bigr)
 +\bigl(\lambda-\lambda^*(r)\bigr)
 \left(r^{\mathrm c}-\sum_{i=1}^Nm_i(x_i)\right)\notag\\
 &=\sum_{i=1}^N
 \bigl(m_i(x_i)-s_i^*(r)\bigr)
 \bigl(g_i'(m_i(x_i))-g_i'(s_i^*(r))\bigr)
 +\sum_{i=1}^N
 \bigl(m_i(x_i)-s_i^*(r)\bigr)
 \bigl(g_i'(s_i^*(r))+\lambda^*(r)\bigr)\notag\\
 &\quad+\bigl(\lambda-\lambda^*(r)\bigr)
 \left(r^{\mathrm c}-\sum_{i=1}^Ns_i^*(r)\right)\notag\\
 &=\sum_{i=1}^N
 \bigl(m_i(x_i)-s_i^*(r)\bigr)
 \bigl(g_i'(m_i(x_i))-g_i'(s_i^*(r))\bigr)
 +\sum_{i=1}^N
 m_i(x_i)\bigl(g_i'(s_i^*(r))+\lambda^*(r)\bigr)
 +\lambda(r^{\mathrm c}-r_0)^+\notag\\
 &\ge
 \mu_g\|\bm m(\bm x)-\bm s^*(r)\|^2
 +\sum_{i=1}^N
 m_i(x_i)\bigl(g_i'(s_i^*(r))+\lambda^*(r)\bigr)
 +\lambda(r^{\mathrm c}-r_0)^+.
 \label{eq:raw-field-merit}
\end{align}

\medskip
\noindent\emph{Substep 2: the anchor restores the multiplier error.}
For the maximum-margin anchor \(j\), the selected KKT path satisfies
\begin{equation}
 g_j'(s_j^*(r))+\lambda^*(r)=0.
 \label{eq:anchor-zero}
\end{equation}
Indeed, for \(r>0\), \(\lambda^*(r)<b_j-c_j\) makes the anchor active, so
its KKT condition is an equality. At \(r=0\),
\(\lambda^*(0)=b_j-c_j=-g_j'(0)\) gives the same equality.
On the safe box,
\[
 1-F_j(x_j)\ge\beta_j,\qquad
 |g_j'(m_j(x_j))-g_j'(s_j^*(r))|
 \le L_{g,j}|m_j(x_j)-s_j^*(r)|,
 \qquad L_{g,j}:=\frac{h_jK_j}{\beta_j^3}.
\]
Using \eqref{eq:anchor-zero}, the anchor pairing first separates exactly
into a positive multiplier square and one cross term:
\begin{align*}
 &\bigl(\lambda-\lambda^*(r)\bigr)\bigl(1-F_j(x_j)\bigr)
 \bigl(g_j'(m_j(x_j))+\lambda\bigr)\notag\\
 &\quad=\bigl(1-F_j(x_j)\bigr)
 \left(|\lambda-\lambda^*(r)|^2
 +\bigl(\lambda-\lambda^*(r)\bigr)
 \bigl(g_j'(m_j(x_j))-g_j'(s_j^*(r))\bigr)\right).
\end{align*}
Since \(1-F_j(x_j)\le1\), the survival lower bound, the preceding Lipschitz
bound, and Young's inequality yield, in that order,
\begin{align}
 &\bigl(\lambda-\lambda^*(r)\bigr)
 \bigl(1-F_j(x_j)\bigr)
 \bigl(g_j'(m_j(x_j))+\lambda\bigr)\notag\\
 &\quad\ge
 \beta_j|\lambda-\lambda^*(r)|^2
 -|\lambda-\lambda^*(r)|\,
  |g_j'(m_j(x_j))-g_j'(s_j^*(r))|\notag\\
 &\quad\ge
 \beta_j|\lambda-\lambda^*(r)|^2
 -L_{g,j}|\lambda-\lambda^*(r)|\,
  |m_j(x_j)-s_j^*(r)|\notag\\
 &\quad\ge
 \frac{\beta_j}{2}|\lambda-\lambda^*(r)|^2
 -\frac{L_{g,j}^2}{2\beta_j}
  |m_j(x_j)-s_j^*(r)|^2.
 \label{eq:anchor-dual-curvature}
\end{align}

\medskip
\noindent\emph{Substep 3: choose \(\theta\) and absorb the cross term.}
Use the choice of \(\theta\) and the joint restoring modulus \(\mu_0\) in
\eqref{eq:restoring-constants}.  The
bound involving \(L_{g,j}\) is the one used for absorption below.
\(\theta\le1\) is only a convenient normalization for the later uniform
field bounds.
Multiplying \eqref{eq:anchor-dual-curvature} by \(\theta\) and adding it to
\eqref{eq:raw-field-merit} gives
\begin{align*}
 &\sum_{i=1}^N
 \bigl(m_i(x_i)-s_i^*(r)\bigr)
 \bigl(g_i'(m_i(x_i))+\lambda\bigr)\\
 &\quad+\bigl(\lambda-\lambda^*(r)\bigr)
 \left(r^{\mathrm c}-\sum_{i=1}^Nm_i(x_i)
 +\theta\bigl(1-F_j(x_j)\bigr)
 \bigl(g_j'(m_j(x_j))+\lambda\bigr)\right)\\
 &\quad\ge
 \left(\mu_g-\frac{\theta L_{g,j}^2}{2\beta_j}\right)
 \|\bm m(\bm x)-\bm s^*(r)\|^2
 +\frac{\theta\beta_j}{2}|\lambda-\lambda^*(r)|^2\\
 &\qquad\quad
 +\sum_{i=1}^Nm_i(x_i)
   \bigl(g_i'(s_i^*(r))+\lambda^*(r)\bigr)
 +\lambda(r^{\mathrm c}-r_0)^+.
\end{align*}
Condition \eqref{eq:restoring-constants} makes the first coefficient at least
\(\mu_g/2\). The definition of \(\mu_0\) then controls both squared errors.
Together with the inner-product identity derived in Substep~1, this is exactly
\eqref{eq:stabilized-merit}.

\endproof

\textbf{Details for the one-step verification.}

This paragraph proves the three model-specific conditions recorded in
Appendix~\ref{app:one-step-verification}, which in turn verify the hypotheses
of \Cref{lem:generic-moving-mirror}.

\proof{Proof.}
For a fixed store \(i\), abbreviate
\[
 S_i(x):=1-F_i(x),\qquad M_i:=m_i(\bar X_i),\qquad
 w_i(u):=S_i\bigl(m_i^{-1}(u)\bigr)^{-2}.
\]
On \([0,M_i]\), the safe-box bounds give
\[
 1\le w_i\le\beta_i^{-2},
 \qquad |w_i'|\le2K_i\beta_i^{-4}.
\]
Let \(H_i(\sigma,x)\) be the \(i\)-th primal summand in
\eqref{eq:joint-lyapunov}.  Its Hessian on the middle branch is
\[
 \begin{pmatrix}
  w_i(\sigma)&-S_i(x)^{-1}\\
  -S_i(x)^{-1}&1+\bigl(m_i(x)-\sigma\bigr)f_i(x)S_i(x)^{-2}
 \end{pmatrix},
\]
while the left and right branches replace the off-diagonal entry by
\(-1\) and \(-S_i(\bar X_i)^{-1}\), respectively, and have lower-right
entry \(1\).  All three Hessians are uniformly bounded.  The values and
first derivatives match at \(x=0\) and \(x=\bar X_i\) by
\eqref{eq:mirror-boundary-slopes}.  Integrating the uniform Hessian bound
along line segments therefore proves the differentiability and Lipschitz
joint-gradient claim after summing over \(i\) and adding the dual quadratic.

For the executable bounds, initialization lies in the safe box.  Moreover,
\(C_t\ge\sum_{i=1}^N I_{i,t}\), so the water-filling map is the infimum of a
nonempty closed sublevel set of a continuous piecewise-linear function of the
adapted inputs and is measurable.  It gives
\[
 I_{i,t}\le Y_{i,t}\le\max\{I_{i,t},X_{i,t}\},
 \qquad \sum_{i=1}^N Y_{i,t}\le C_t.
\]
The inventory recursion and the next coordinatewise projection then preserve
the safe box and adaptedness by induction.  The two possible values in
\eqref{eq:observable-samples} give
\(|\widehat G_{i,t}|\le G_i\), and
\[
 |\widehat G_{0,t}|
 \le\Dsum+\theta|\widehat G_{j,t}|
 \le\Dsum+\theta G_j.
\]
Thus \(\|\widehat G_t\|\le G\).  If \(\alpha_tG\le1\), every coordinate
of the pre-projection update moves by at most one.  Since \(z_t\) lies in the
safe state box and \(\mathcal Z\) extends that box by one in every coordinate,
this gives \(z_t-\alpha_t\widehat G_t\in\mathcal Z\), which is the
pre-projection domain condition used above.

It remains to verify projection compatibility.  Write
\(q_{t+1}=(\bm\sigma_{t+1},\zeta_{t+1})\) and put
\(y_i=m_i^{-1}(\sigma_{i,t+1})\).  The derivative
\(\partial_xH_i(\sigma_{i,t+1},x)\) is nonpositive on \(x\le y_i\) and
nonnegative on \(x\ge y_i\). This follows directly from the three branches
in \eqref{eq:joint-lyapunov}.  If the threshold representation of
\(q_{t+1}\) lies in \(\calK(r_{t+1})\), projecting each primal coordinate
toward \([0,U_i(r_{t+1})]\) cannot increase \(H_i\).  The dual coordinate
satisfies
\[
 |\proj_{[0,\lambda_{\max}]}(\lambda)-\zeta_{t+1}|
 \le|\lambda-\zeta_{t+1}|.
\]
Summing the coordinatewise inequalities gives
\eqref{eq:joint-projection-compatibility}, completing the verification.
\endproof

\textbf{Verification of the conditional mean used in
\eqref{eq:feasibility-bias}.}
Conditionally on \(\calH_t\), demand independence gives
\[
 \begin{aligned}
 \E[\1\{D_{i,t}<Y_{i,t}\}\mid\calH_t]&=F_i(Y_{i,t}),&
 \E[\1\{D_{i,t}\ge Y_{i,t}\}\mid\calH_t]&=1-F_i(Y_{i,t}),\\
 \E[D_{i,t}\wedge Y_{i,t}\mid\calH_t]&=m_i(Y_{i,t}).
 \end{aligned}
\]
Substitution gives the first equality in \eqref{eq:feasibility-bias}.  On
\([0,\bar X_i]\), \(F_i\) is \(K_i\)-Lipschitz and \(m_i\) is
one-Lipschitz.  By the executable bounds in the checklist above, both
\(\bm X_t\) and \(\bm Y_t\) lie in the fixed safe rectangle
\(\prod_i[0,\bar X_i]\).  
Hence the stabilized field is uniformly Lipschitz there, and comparing
\(\bm Y_t\) with \(\bm X_t\) proves the bias bound.

\section{\texorpdfstring{Fluid Geometry and Remaining Proofs}{Fluid Geometry and Remaining Proofs}}
\label{app:supporting-proofs}
This appendix proves the remaining fluid-geometry,
implementation-gap, and tuning claims and sketches the endpoint-envelope
extension. Constants depend only on
fixed primitives and horizon-independent tuning.

\subsection{\texorpdfstring{Fluid Benchmark and Re-Solving-Path Geometry}{Fluid Benchmark and Re-Solving-Path Geometry}}
\label{app:proof-fluid-benchmark}

\subsubsection*{Proof of \Cref{prop:fluid-benchmark}}

\proof{Proof.}
We first verify the two direct calculations stated before the proposition.
The identities
\[
 Y_{i,t}-I_{i,t}=S_{i,t}+I_{i,t+1}-I_{i,t},
 \qquad
 B_{T+1}=W-\sum_{t=1}^T\sum_{i=1}^N(Y_{i,t}-I_{i,t})
\]
together with \(I_{i,1}=0\) and \(c_i=k_i-w\) give
\begin{align*}
 &\sum_{t=1}^T\sum_{i=1}^N
 \bigl(k_i(Y_{i,t}-I_{i,t})+h_iI_{i,t+1}
 +b_i(D_{i,t}-Y_{i,t})^+\bigr)
 +wB_{T+1}-\sum_{i=1}^Nc_iI_{i,T+1}\\
 &\qquad =wW+\sum_{t=1}^T\sum_{i=1}^N
 \bigl(c_i(D_{i,t}\wedge Y_{i,t})+h_i(Y_{i,t}-D_{i,t})^+
 +b_i(D_{i,t}-Y_{i,t})^+\bigr).
\end{align*}
Since \(\bm Y_t\) is chosen before current demand,
the conditional expectation of the bracketed store--period term on the
right-hand side is \(\ell_i(Y_{i,t})\).  Taking expectations proves
\eqref{eq:exact-cost}.

Moreover, shipments only relocate inventory, whereas sales deplete it.
Summing the aggregate system-inventory recursion therefore gives
\[
 B_{T+1}+\sum_{i=1}^NI_{i,T+1}
 =W-\sum_{t=1}^T\sum_{i=1}^NS_{i,t}.
\]
Taking expectations and using
\(\E^\pi[S_{i,t}\mid\calH_t^\pi]=m_i(Y_{i,t})\) proves
\eqref{eq:total-sales-budget}.

The expected-sales transformation in \Cref{sec:fluid-geometry} makes
\(g_i\) convex and writes
\[
 v(r)=\min\left\{\sum_i g_i(s_i):\sum_i s_i\le r,\ 0\le s_i\le\mu_i\right\}.
\]
Hence \(v\) is convex by mixing feasible allocations and nonincreasing
because its feasible set expands with \(r\).

If \(0\le y_i\le\bar d_i\), then \(\bm y\) is feasible for the program
defining \(v(\sum_{i=1}^N m_i(y_i))\).  If some \(y_i>\bar d_i\), clipping it
to \(\bar d_i\) leaves \(m_i(y_i)\) unchanged and weakly decreases
\(\ell_i(y_i)\).  Therefore, in either case,
\begin{equation}
 \sum_{i=1}^N\ell_i(y_i)
 \ge v\!\left(\sum_{i=1}^N m_i(y_i)\right)
 \qquad\text{pointwise in }\bm y.
 \label{eq:pointwise-fluid-value}
\end{equation}

Apply \eqref{eq:pointwise-fluid-value} to the random action
\(\bm Y_t\).  Taking expectations and using Jensen again,
\[
 \E^\pi\sum_{i=1}^N\ell_i(Y_{i,t})
 \ge\E^\pi v\!\left(\sum_{i=1}^N m_i(Y_{i,t})\right)
 \ge v\!\left(\E^\pi\sum_{i=1}^N m_i(Y_{i,t})\right).
\]
Summing over periods and applying Jensen across periods gives
\begin{align*}
 \sum_{t=1}^T\E^\pi\sum_{i=1}^N\ell_i(Y_{i,t})
 &\ge\sum_{t=1}^T
 v\!\left(\E^\pi\sum_{i=1}^N m_i(Y_{i,t})\right)\\
 &\ge T v\!\left(
 \frac1T\sum_{t=1}^T\E^\pi\sum_{i=1}^N m_i(Y_{i,t})\right)
 \ge Tv(\gamma),
\end{align*}
where the last inequality uses
\(\sum_{t=1}^T\E^\pi\sum_{i=1}^N m_i(Y_{i,t})\le W=\gamma T\) from
\eqref{eq:total-sales-budget} and the fact that \(v\) is nonincreasing.
Finally, \eqref{eq:exact-cost} implies
\(J_T^\pi\ge wW+Tv(\gamma)\).  Taking the infimum over all admissible
policies proves the oracle bound.
\endproof

\label{app:proof-supporting-geometry}

\subsubsection*{\texorpdfstring{Derivation of sales-space curvature}{Derivation of sales-space curvature}}

\proof{Derivation.}
For \(y=m_i^{-1}(s)\), \eqref{eq:expected-sales} and
\eqref{eq:expected-loss} give
\[
 m_i'(y)=1-F_i(y),\qquad m_i''(y)=-f_i(y),
\]
and
\[
 \ell_i'(y)=(h_i+b_i-c_i)F_i(y)-(b_i-c_i),\qquad
 \ell_i''(y)=(h_i+b_i-c_i)f_i(y).
\]
Thus \(g_i'(s)=\ell_i'(y)/m_i'(y)\).  Differentiating once more,
\begin{align*}
 g_i''(s)
 &=\frac{\ell_i''(y)m_i'(y)-\ell_i'(y)m_i''(y)}{m_i'(y)^3}\\
 &=\frac{f_i(y)\bigl((h_i+b_i-c_i)(1-F_i(y))
 +(h_i+b_i-c_i)F_i(y)-(b_i-c_i)\bigr)}
 {(1-F_i(y))^3}\\
 &=\frac{h_i f_i(y)}{(1-F_i(y))^3}.
\end{align*}
\endproof

\subsubsection*{\texorpdfstring{Proof of \Cref{lem:kkt-path}}{Proof of the resource-indexed KKT-path lemma}}

\proof{Proof.}
For the proof, define the fixed-price response
\begin{equation}
 \phi_i(\lambda):=
 \begin{cases}
 m_i\!\left(
 F_i^{-1}\!\left(
 \dfrac{b_i-c_i-\lambda}{h_i+b_i-c_i-\lambda}
 \right)\right),&\lambda<b_i-c_i,\\[6pt]
 0,&\lambda\ge b_i-c_i,
 \end{cases}
 \qquad
 \Phi(\lambda):=\sum_{i=1}^N\phi_i(\lambda),
 \label{eq:best-response}
\end{equation}
and write \(\bm\phi(\lambda):=(\phi_i(\lambda))_{i=1}^N\).  Also let
\[
 \mathcal J_{\max}:=\{i:b_i-c_i=\lambda_{\max}\},\qquad
 L_v:=\left(
 \sum_{j\in\mathcal J_{\max}}
 \frac{p_j^2}{(h_j+b_j-c_j)K_j}
 \right)^{-1}.
\]
These quantities depend only on the fixed model primitives.

For fixed \(\lambda\ge0\), the derivative of
\(g_i(s)+\lambda s\) at \(s=0\) is \(-(b_i-c_i)+\lambda\).
The strict
convexity in \eqref{eq:g-curvature-bounds} therefore makes \(s=0\)
the unique minimizer when \(\lambda\ge b_i-c_i\).  When
\(\lambda<b_i-c_i\), the unique interior stationary point solves
\[
 \frac{(h_i+b_i-c_i)F_i(y)-(b_i-c_i)}
 {1-F_i(y)}+\lambda=0,
\]
which gives the formula for \(\phi_i(\lambda)\) in
\eqref{eq:best-response}.  For the derivative calculation below, write
\[
 y_i^\lambda:=m_i^{-1}(\phi_i(\lambda)).
\]
This quantile lies strictly below \(\bar d_i\), so the upper sales constraint
is slack.  As \(\lambda\uparrow b_i-c_i\), it decreases to zero. Hence each
\(\phi_i(\lambda)\), and therefore \(\Phi\), is continuous at every
breakpoint.
At the endpoints, the definitions give
\(\sum_{i=1}^N\phi_i(0)=r_0\), while
\(\phi_i(\lambda_{\max})=0\) for every \(i\).

For \(\lambda<b_i-c_i\), implicit differentiation gives
\[
 \frac{\dd\phi_i(\lambda)}{\dd\lambda}
 =-\frac{(1-F_i(y_i^\lambda))^2}
 {(h_i+b_i-c_i-\lambda)f_i(y_i^\lambda)}.
\]
Every maximum-margin store is active for
\(\lambda<\lambda_{\max}\).  For such a store \(j\),
\[
 1-F_j(y_j^\lambda)
 =\frac{h_j}{h_j+b_j-c_j-\lambda}\ge p_j,\qquad
 (h_j+b_j-c_j-\lambda)f_j(y_j^\lambda)
 \le(h_j+b_j-c_j)K_j,
\]
so summing over \(\mathcal J_{\max}\) yields
\[
 -\Phi'(\lambda)\ge L_v^{-1}>0
 \quad\text{a.e.\ on }[0,\lambda_{\max}).
\]
Conversely,
\[
 |\phi_i'(\lambda)|\le\frac1{h_i\kappa_i}
\]
where the derivative exists.  Integrating on the finitely many intervals
cut out by the distinct values of \(b_i-c_i\) shows that \(\Phi\) is Lipschitz
and, for \(0\le\lambda<\lambda'\le\lambda_{\max}\),
\[
 \Phi(\lambda)-\Phi(\lambda')
 \ge L_v^{-1}(\lambda'-\lambda)>0.
\]
Thus \(\Phi\) is strictly decreasing from \(r_0\) to \(0\), its inverse is
well defined, and
\[
 |\Phi^{-1}(r')-\Phi^{-1}(r)|
 \le L_v|r'-r|,
 \qquad r,r'\in[0,r_0].
\]

We now establish the explicit representation
\begin{equation}
 (\bm s^*(r),\lambda^*(r))=
 \begin{cases}
 (\bm\phi(\Phi^{-1}(r)),\Phi^{-1}(r)),&0<r<r_0,\\
 (\bm0,\lambda_{\max}),&r=0,\\
 (\bm\phi(0),0),&r\ge r_0.
 \end{cases}
 \label{eq:kkt-path}
\end{equation}

For \(0<r<r_0\), choose \(\lambda=\Phi^{-1}(r)\).  The coordinate
minimizers sum to \(r\) and satisfy the KKT conditions, so strict convexity
gives the unique global optimizer.
At the other endpoint \(r=0\), primal feasibility forces
\(\bm s=\bm0\).  Because every sales coordinate is then at its lower
boundary, stationarity requires
\begin{equation*}
 g_i'(0)+\lambda
 =\lambda-(b_i-c_i)\ge0
 \qquad\text{for every }i.
\end{equation*}
Equivalently, \(\lambda\ge\max_i(b_i-c_i)=\lambda_{\max}\).  Conversely,
every \(\lambda\ge\lambda_{\max}\) satisfies these stationarity inequalities,
and complementary slackness also holds because
\(0-\sum_{i=1}^N s_i=0\).  Hence the KKT multiplier is not unique at
\(r=0\): its
admissible set is \([\lambda_{\max},\infty)\).  We select its smallest element,
\(\lambda^*(0)=\lambda_{\max}\).  This convention also matches
\(\Phi^{-1}(r)\uparrow\lambda_{\max}\) as \(r\downarrow0\), so the selected dual
target joins continuously to the unique multiplier for \(r>0\).  This establishes
\eqref{eq:kkt-path}, uniqueness
of the primal target, and minimality of the displayed multiplier.

The inverse bound is exactly the multiplier bound
in \Cref{lem:kkt-path}\textup{(iii)} on \([0,r_0]\), including \(r=0\) by
the continuous endpoint selection above.  The same bound holds globally because
\(\lambda^*(r)=0\) on \([r_0,\infty)\).  Since
\(|\phi_i'(\lambda)|\le(h_i\kappa_i)^{-1}\) wherever the derivative exists,
the representation in \eqref{eq:kkt-path} gives, for all \(r,r'\ge0\),
\[
 |s_i^*(r')-s_i^*(r)|
 \le \frac{L_v}{h_i\kappa_i}|r'-r|.
\]
Thus \(q^*\) is globally Lipschitz.  Finally,
\(\phi_i(\lambda)\) is nonincreasing in \(\lambda\), while
\(\lambda^*(r)\) is nonincreasing in \(r\). Hence each \(s_i^*(r)\) is
nondecreasing.  Integration across the finitely many breakpoints makes all
of these conclusions valid when several breakpoints tie.

The coordinate minimization above also gives \(s_i^*(r)<\mu_i\), the
storewise activity rule, and the KKT residuals in
\eqref{eq:KKT-residuals}.  The representation
\eqref{eq:kkt-path} yields
\(\sum_i s_i^*(r)=\min\{r,r_0\}\) and all endpoint claims in
\Cref{lem:kkt-path}\textup{(ii)}.  Finally, standard value sensitivity gives
\(v'(r)=-\lambda^*(r)\).  Continuity of the selected multiplier makes
\(v\) continuously differentiable, and the multiplier Lipschitz bound gives
the stated Lipschitz bound for \(v'\).
\endproof

\subsection{\texorpdfstring{Implementation Gap}{Implementation Gap}}
\label{app:implementation-gap}

\subsubsection*{Proof of \Cref{lem:implementation-gap}}

\proof{Proof.}
For every store and period, define the excess inventory above the preferred
threshold by
\begin{equation*}
 e_{i,t}:=(I_{i,t}-X_{i,t})^+.
\end{equation*}
Also let
\[
 H_0:=\left\lceil\sum_{i=1}^Np_i^{-1}\right\rceil,
 \qquad
 \bar X_{\max}:=\max_{1\le i\le N}\bar X_i.
\]
We organize the proof in four steps.

\medskip
\noindent\textbf{Step 1: Bound preferred-path variation and the physical
state.}
For \(t<T\), the update in \Cref{alg:rapdl} and the feasibility identity
\(X_{i,t}=\proj_{[0,U_i(r_t)]}(X_{i,t})\) imply
\begin{align}
 |X_{i,t+1}-X_{i,t}|
 &\le
 \left|
  \proj_{[0,U_i(r_{t+1})]}(X_{i,t}-\alpha_t\widehat G_{i,t})
  -\proj_{[0,U_i(r_{t+1})]}(X_{i,t})
 \right|\notag\\
 &\quad+
 \left|
  \proj_{[0,U_i(r_{t+1})]}(X_{i,t})
  -\proj_{[0,U_i(r_t)]}(X_{i,t})
 \right|\notag\\
 &\le
 \alpha_t|\widehat G_{i,t}|
 +|U_i(r_{t+1})-U_i(r_t)|.
 \label{eq:appendix-threshold-one-step}
\end{align}
The last inequality uses nonexpansiveness of projection onto a fixed interval
and the elementary endpoint bound
\[
 |\proj_{[0,a]}(x)-\proj_{[0,b]}(x)|\le|a-b|,
 \qquad a,b,x\ge0.
\]

If \(r_t\le\Dsum\), the one-Lipschitz cap and
\(0\le\sum_iS_{i,t}\le\Dsum\) give
\[
 |r_{t+1}^{\mathrm c}-r_t^{\mathrm c}|
 \le|r_{t+1}-r_t|
 =\frac{|r_t-\sum_iS_{i,t}|}{n_t-1}
 \le\frac{\Dsum}{n_t-1}.
\]
If \(r_t>\Dsum\), then \(r_{t+1}>r_t>\Dsum\), so
\(r_{t+1}^{\mathrm c}=r_t^{\mathrm c}=\Dsum\).  Thus, in both cases,
\begin{equation*}
 |r_{t+1}^{\mathrm c}-r_t^{\mathrm c}|
 \le\frac{\Dsum}{n_t-1}.
\end{equation*}
Because \(U_i(r)=\bar X_i\wedge r^{\mathrm c}/p_i\),
\[
 |U_i(r_{t+1})-U_i(r_t)|
 \le\frac{\Dsum}{p_i(n_t-1)}.
\]
Combining this inequality with \eqref{eq:appendix-threshold-one-step} and
\(|\widehat G_{i,t}|\le G_i\) yields
\[
 |X_{i,t+1}-X_{i,t}|
 \le G_i\alpha_t+\frac{\Dsum}{p_i(n_t-1)}.
\]
The two-sided step-size profile and \(n_t-1=T-t\) give
\begin{align*}
 \sum_{t=1}^{T-1}\alpha_t
 &\le2\chi\sum_{k=1}^{\lceil T/2\rceil}\frac1{\tau_0+k}
 \le C\log T,\\
 \sum_{t=1}^{T-1}\frac1{n_t-1}
 &=\sum_{k=1}^{T-1}\frac1k
 \le C\log T.
\end{align*}
Consequently,
\begin{equation*}
 \sum_{t=1}^{T-1}|X_{i,t+1}-X_{i,t}|
 \le C\log T
\end{equation*}
pathwise.

The water-filling formula also gives
\[
 I_{i,t+1}\le Y_{i,t}
 \le\max\{I_{i,t},X_{i,t}\}.
\]
Since \(I_{i,1}=0\) and \(0\le X_{i,t}\le\bar X_i\), induction yields
\begin{equation}
 0\le I_{i,t},Y_{i,t},e_{i,t}\le\bar X_i
 \qquad(1\le t\le T).
 \label{eq:appendix-physical-safe-box}
\end{equation}

\medskip
\noindent\textbf{Step 2: Telescope expected drainage.}
The water-filling formula and the physical update
\(I_{i,t+1}=(Y_{i,t}-D_{i,t})^+\) give
\[
 \begin{cases}
  Y_{i,t}=I_{i,t},\quad
  I_{i,t+1}=(I_{i,t}-D_{i,t})^+,
  & I_{i,t}>X_{i,t},\\
  Y_{i,t}\le X_{i,t},\quad
  I_{i,t+1}\le X_{i,t},
  & I_{i,t}\le X_{i,t}.
 \end{cases}
\]
In the first case,
\[
 e_{i,t+1}
 \le(e_{i,t}-D_{i,t})^+
 +|X_{i,t+1}-X_{i,t}|,
\]
whereas in the second case \(e_{i,t}=0\) and
\[
 e_{i,t+1}
 \le|X_{i,t+1}-X_{i,t}|.
\]
Hence, in both cases,
\begin{equation*}
 e_{i,t+1}
 \le(e_{i,t}-D_{i,t})^+
 +|X_{i,t+1}-X_{i,t}|.
\end{equation*}
Using \(a-(a-d)^+=a\wedge d\), this implies
\begin{equation}
 e_{i,t}\wedge D_{i,t}
 \le e_{i,t}-e_{i,t+1}
 +|X_{i,t+1}-X_{i,t}|.
 \label{eq:appendix-one-period-drainage}
\end{equation}
Conditionally on \(\calH_t\), \(e_{i,t}\) is fixed and \(D_{i,t}\) has law
\(F_i\), so
\[
 \E[e_{i,t}\wedge D_{i,t}\mid\calH_t]=m_i(e_{i,t}).
\]
Taking expectations and summing \eqref{eq:appendix-one-period-drainage}
through \(T-1\) gives
\begin{align*}
 \sum_{t=1}^{T-1}\E m_i(e_{i,t})
 &\le
 \E e_{i,1}-\E e_{i,T}
 +\sum_{t=1}^{T-1}\E|X_{i,t+1}-X_{i,t}|\notag\\
 &\le C\log T,
\end{align*}
where \(e_{i,1}=0\).  Since \(m_i(e_{i,T})\le e_{i,T}\le\bar X_i\),
\begin{equation}
 \sum_{t=1}^T\E m_i(e_{i,t})
 \le\bar X_i+C\log T.
 \label{eq:appendix-cumulative-expected-drainage}
\end{equation}

\medskip
\noindent\textbf{Step 3: Convert expected drainage into cumulative excess.}
From \(f_i\le K_i\)
and \(1=\int_0^{\bar d_i}f_i(u)\,\dd u\), one has
\(K_i^{-1}\le\bar d_i\).  For every \(0\le y\le\bar X_i\),
\(F_i(u)\le K_i u\) implies
\[
 m_i(y)
 \ge\int_0^{y\wedge K_i^{-1}}(1-K_i u)\,\dd u
 \ge
 \begin{cases}
  y/2, & 0\le y\le K_i^{-1},\\
  1/(2K_i), & y>K_i^{-1}.
 \end{cases}
\]
In the first range, \(y\le2m_i(y)\).  In the second,
\(y\le\bar X_i\le2K_i\bar X_i m_i(y)\).  Therefore, throughout the safe box,
\begin{equation}
 y\le2\max\{1,K_i\bar X_i\}\,m_i(y).
 \label{eq:appendix-drainage-coercivity}
\end{equation}
Applying \eqref{eq:appendix-drainage-coercivity} to \(y=e_{i,t}\) and using
\eqref{eq:appendix-cumulative-expected-drainage} yields
\begin{equation}
 \sum_{t=1}^T\E e_{i,t}\le C\log T.
 \label{eq:appendix-cumulative-excess}
\end{equation}

\medskip
\noindent\textbf{Step 4: Convert cumulative excess into the implementation
gap.}
Since \(X_{i,t}-\nu_t\le X_{i,t}\), the water-filling formula gives
\[
 Y_{i,t}>X_{i,t}
 \quad\Longleftrightarrow\quad
 Y_{i,t}=I_{i,t}>X_{i,t}.
\]
Thus
\begin{equation*}
 (Y_{i,t}-X_{i,t})^+=e_{i,t}.
\end{equation*}

The clipping rule gives \(X_{i,t}\le U_i(r_t)\le r_t/p_i\).  Hence, whenever
\(n_t\ge H_0\),
\[
 \sum_{i=1}^NX_{i,t}
 \le r_t\sum_{i=1}^Np_i^{-1}
 \le n_tr_t=C_t.
\]
If water-filling binds, then \(\sum_iY_{i,t}=C_t\), and
\[
 \sum_i(Y_{i,t}-X_{i,t})^+
 -\sum_i(X_{i,t}-Y_{i,t})^+
 =C_t-\sum_iX_{i,t}\ge0.
\]
If it does not bind, then \(\nu_t=0\) and
\(Y_{i,t}=\max\{I_{i,t},X_{i,t}\}\ge X_{i,t}\), so the downward discrepancy
is zero.  In either case, for \(n_t\ge H_0\),
\begin{equation}
 \|\bm Y_t-\bm X_t\|_1
 \le2\sum_{i=1}^N(Y_{i,t}-X_{i,t})^+
 =2\sum_{i=1}^Ne_{i,t}.
 \label{eq:appendix-l1-by-excess}
\end{equation}
There are at most \(H_0-1\) periods with \(n_t<H_0\), and
\eqref{eq:appendix-physical-safe-box} gives
\(\|\bm Y_t-\bm X_t\|_1\le\sum_i\bar X_i\) in every such period.
Combining this bound with \eqref{eq:appendix-cumulative-excess} and
\eqref{eq:appendix-l1-by-excess} gives
\begin{equation}
 \sum_{t=1}^T\E\|\bm Y_t-\bm X_t\|_1
 \le
 2\sum_{t=1}^T\sum_{i=1}^N\E e_{i,t}
 +(H_0-1)\sum_{i=1}^N\bar X_i
 \le C\log T.
 \label{eq:appendix-implementation-l1}
\end{equation}
Finally,
\[
 \|\bm Y_t-\bm X_t\|^2
 \le\|\bm Y_t-\bm X_t\|_\infty
      \|\bm Y_t-\bm X_t\|_1
 \le\bar X_{\max}\|\bm Y_t-\bm X_t\|_1.
\]
Summing and applying \eqref{eq:appendix-implementation-l1} prove the same
order for the squared norm.  By the definition
\eqref{eq:implementation-gap-block},
\[
 \mathsf{IMP}_T\le C\log T,
\]
which proves \eqref{eq:implementation-gap-bound}.
\endproof

\subsection{\texorpdfstring{Tuning Construction}{Tuning Construction}}
\label{app:tuning}

\subsubsection*{Verification of \Cref{rem:computable-tuning}}

\proof{Proof.}
First compute from the known primitives
\[
 p_i=\frac{h_i}{h_i+b_i-c_i},\qquad
 \bar X_i=\bar d_i-\frac{p_i}{2K_i},\qquad
 \beta_i=\frac{\kappa_i p_i}{2K_i},
\]
\[
 \mu_g:=\min_{1\le i\le N}h_i\kappa_i,\qquad
 L_{g,j}:=\frac{h_jK_j}{\beta_j^3},
\]
and choose
\[
 \theta:=\frac12\min\left\{1,\frac{\mu_g\beta_j}{L_{g,j}^2}\right\}.
\]
This choice satisfies \eqref{eq:restoring-constants}.

The constants entering the motion cutoff can also be fixed explicitly.  Put
\[
 \beta_{\min}:=\min_{i\in[N]}\beta_i,
 \qquad
 C_W:=\frac{1}{2\beta_{\min}^2},
 \qquad
 \mathcal J_{\max}:=\{i:b_i-c_i=\lambda_{\max}\},
\]
and
\[
 L_v:=\left(
 \sum_{j\in\mathcal J_{\max}}
 \frac{p_j^2}{(h_j+b_j-c_j)K_j}
 \right)^{-1},
 \qquad
 L_q:=L_v\left(1+\sum_{i=1}^N(h_i\kappa_i)^{-2}\right)^{1/2},
 \qquad
 C_{\mathrm{sm}}:=3L_q.
\]
The metric calculation in \Cref{lem:mirror-toolkit} validates the displayed
choice of \(C_W\).  The proof of \Cref{lem:kkt-path} gives the Lipschitz
constant \(L_q\), and the fixed kernel in \eqref{eq:fixed-kernel} has total
variation \(V_\rho=3\), so this \(C_{\mathrm{sm}}\) satisfies
\Cref{lem:smoothing}.  The field and joint-smoothness envelopes
\(G,L_{\mathcal W}\) are likewise obtained from the safe-box bounds. They are
proof envelopes rather than additional tuning inputs.  All quantities in
this paragraph use only the known primitives.  With the same notation as in
Section~A.1, take
\[
 \alpha_0:=\min\{1,G^{-1}\},
 \qquad
 \mu_0:=\frac12\min\{\mu_g,\theta\beta_j\},
 \qquad
 \mu_{\mathrm{upd}}:=\frac{\mu_0}{8C_W}>0.
\]
Then \(\alpha_0G\le1\), and the state-update argument gives the contraction
in \eqref{eq:direct-state-update-bound} with modulus
\(\mu_{\mathrm{upd}}\).

For any \(\chi\ge1\) and
\(\chi/\alpha_0\le\tau_0\le2\chi/\alpha_0\), the step profile satisfies,
whenever \(n=n_t\ge2\),
\[
 \frac1{(n-1)\alpha_t}
 \le\frac{\tau_0+n}{\chi(n-1)}
 \le\frac2{\alpha_0}+2.
\]
Take \(C_{\mathrm{pair}}\) from the explicit formula
\eqref{eq:explicit-c-pair}.  It depends only on
\((N,\bm h,\bm b,\bm c,\bm\kappa,\bm K)\).  Choose
\[
\begin{aligned}
 \mu_{\mathrm{rec}}&:=\min\{\mu_{\mathrm{upd}}/2,1/2\},\qquad
 \chi_M:=\max\left\{1,\frac{32C_{\mathrm{pair}}}{\mu_{\mathrm{upd}}}\right\},\\
 \chi&:=1+\max\left\{\chi_M,\frac3{\mu_{\mathrm{rec}}}\right\},
 \qquad
 \tau_0:=\left\lceil\frac\chi{\alpha_0}\right\rceil.
\end{aligned}
\]
and set
\[
 n_0:=\left\lceil\max\left\{
 8,
 1+\frac{32C_{\mathrm{pair}}}{\mu_{\mathrm{upd}}\alpha_0},
 1+\frac2{\sqrt{\alpha_0}}
 \right\}\right\rceil.
\]
These choices give \(\chi\ge\chi_M\),
\(\mu_{\mathrm{rec}}\chi>3\),
\(\chi/\alpha_0\le\tau_0\le2\chi/\alpha_0\), and
\(\mu_{\mathrm{rec}}\alpha_0\le1/2\).

Substituting these choices into the bounds used to establish
\eqref{eq:explicit-large-n-bounds} verifies all three inequalities in that
display.  Hence the state-update, motion, and Taylor remainders in
\eqref{eq:mirror-recursion} have computable primitive-only constants.  This
constructs horizon-independent tuning without using the demand laws.
\endproof

\subsection{\texorpdfstring{Fixed Support Envelopes: An Extension Sketch}
{Fixed Support Envelopes: An Extension Sketch}}
\label{app:endpoint-envelope}

This subsection sketches how the proof architecture of \Cref{thm:main} can be
adapted when exact support endpoints are replaced by fixed certified
envelopes.  The purpose is to isolate the additional localization mechanism.
\Cref{thm:main} remains the formal guarantee for the direct safe-box
formulation.

Suppose that the policy is supplied fixed, horizon-independent envelopes
\begin{equation*}
 D_i^{\mathrm{up}}\ge\bar d_i,
 \qquad
 D^{\mathrm{up}}:=\sum_{i=1}^ND_i^{\mathrm{up}}<\infty.
\end{equation*}
Under \eqref{eq:density-bounds}, the conservative choice
\(D_i^{\mathrm{up}}=1/\kappa_i\) is valid because
\(1=\int_0^{\bar d_i}f_i\ge\kappa_i\bar d_i\).  Replace the
support-dependent caps by
\begin{equation*}
 r^{\mathrm{c,up}}:=\min\{r,D^{\mathrm{up}}\},
 \qquad
 U_i^{\mathrm{up}}(r):=\min\left\{
 D_i^{\mathrm{up}},\frac{r^{\mathrm{c,up}}}{p_i}\right\},
 \qquad
 \calK^{\mathrm{up}}(r):=
 \prod_i[0,U_i^{\mathrm{up}}(r)]\times[0,\lambda_{\max}].
\end{equation*}
The envelope implementation \(\RAPDL^{\mathrm{up}}\) uses the same recursion
as \Cref{alg:rapdl}, with these caps and with
\[
 \widehat G_{0,t}^{\mathrm{up}}
 :=r_t^{\mathrm{c,up}}-\sum_{i=1}^NS_{i,t}
 +\theta\widehat G_{j,t}.
\]
Let \(G^{\theta,\mathrm{up}}\) denote the corresponding conditional-mean
field, obtained from \(G^\theta\) by replacing \(r^{\mathrm c}\) with
\(r^{\mathrm{c,up}}\).

\textbf{What is inherited from the main proof.}
Because \(D^{\mathrm{up}}\ge\sum_i\bar d_i\ge r_0\), replacing
\(r^{\mathrm c}\) by \(r^{\mathrm{c,up}}\) affects only resource states at
which the fluid solution is already on its slack branch.  The fluid target,
the KKT path, and the backward-smoothed comparator are therefore unchanged.
The projected
moving-comparator inequality in \Cref{lem:generic-moving-mirror}, the
comparator-motion argument in \Cref{lem:two-contributions}\textup{(ii)}, the
reciprocal-step summations in Appendix~\ref{sec:tracking-assembly}, and the
Bellman telescope can all be reused.  The implementation-gap proof also
transfers after replacing its support-dependent inputs by the envelope bounds
below.  The genuinely new issue is that a projected threshold may lie beyond
the true support, where the original global survival and mirror-metric bounds
are unavailable.

\textbf{Implementation-gap ingredient.}
The four changed inputs to the proof of \Cref{lem:implementation-gap} are
\begin{align*}
 |\widehat G_{0,t}^{\mathrm{up}}|
 &\le D^{\mathrm{up}}+\theta G_j,&
 0\le X_{i,t},I_{i,t},Y_{i,t}&\le D_i^{\mathrm{up}},\\
 |U_i^{\mathrm{up}}(r_{t+1})-U_i^{\mathrm{up}}(r_t)|
 &\le\frac{D^{\mathrm{up}}}{p_i(n_t-1)},&
 y&\le2\max\{1,K_iD_i^{\mathrm{up}}\}\,m_i(y).
\end{align*}
The cap-motion case split is the same as before: when
\(r_t>D^{\mathrm{up}}\), both capped rates lie on the constant branch.
Consequently, the preferred-path variation and excess-inventory recursion in
Appendix~\ref{app:implementation-gap} give, with an envelope-dependent
constant,
\begin{equation}
 \sum_{t=1}^T\E\|\bm Y_t-\bm X_t\|_1
 +\sum_{t=1}^T\E\|\bm Y_t-\bm X_t\|^2
 \le C_{\mathrm{up}}\log T.
 \label{eq:envelope-implementation-sketch}
\end{equation}

\textbf{The additional localization mechanism.}
For the analysis only, introduce the barriers
\[
 x_i^\star:=F_i^{-1}(1-p_i),\qquad
 x_i^{(1)}:=F_i^{-1}(1-p_i/4),\qquad
 x_i^{(2)}:=F_i^{-1}(1-p_i/8).
\]
The density upper bound gives
\begin{equation*}
 x_i^{(1)}-x_i^\star\ge\frac{3p_i}{4K_i},\qquad
 x_i^{(2)}-x_i^{(1)}\ge\delta_i:=\frac{p_i}{8K_i},\qquad
 1-F_i(x)\ge\beta_i^{\mathrm{loc}}:=\frac{p_i}{8}
 \quad(x\le x_i^{(2)}).
\end{equation*}
Every exact or backward-smoothed comparator threshold lies below
\(x_i^\star<x_i^{(1)}\).

Let \(\overline H_i(\sigma,x)\) agree with the sales-divergence block
\(\mathcal B_i(\sigma,m_i(x))\) on \([0,x_i^{(2)}]\), and use on
\([-1,D_i^{\mathrm{up}}+1]\) the same unit-curvature quadratic continuation
as in \eqref{eq:joint-lyapunov}, with \(\bar X_i\) and
\(m_i(\bar X_i)\) replaced by \(x_i^{(2)}\) and \(m_i(x_i^{(2)})\).
For constants \(A_i\ge1\), define
\begin{equation*}
 \mathcal W^{\mathrm{loc}}
 ((\bm\sigma,\zeta),(\bm x,\lambda))
 :=\sum_{i=1}^N\left[
 \overline H_i(\sigma_i,x_i)
 +\frac{A_i}{2}(x_i-x_i^{(1)})_+^2\right]
 +\frac12(\lambda-\zeta)^2.
\end{equation*}
Because every comparator threshold is below \(x_i^{(1)}\), the base
continuation and the hinge are monotone away from the comparator. Projection
onto \(\calK^{\mathrm{up}}(r)\) therefore cannot increase
\(\mathcal W^{\mathrm{loc}}\).  The matched continuation and the
\(C^{1,1}\) hinge also give a Lipschitz joint gradient on the fixed extended
box.  The reused step-size check, with the stochastic-direction bound replaced
by its fixed-envelope counterpart, keeps pre-projection points in that box.
On the interior region
\(\mathcal G:=\{\bm x:x_i\le x_i^{(2)}\ \forall i\}\), the proofs of
\Cref{lem:mirror-toolkit} and
\Cref{lem:two-contributions}\textup{(i)} apply with survival lower bound
\(\beta_i^{\mathrm{loc}}\).  The hinge handles the complement.  Indeed, if
\(e_i:=(x_i-x_i^{(1)})_+>0\), then
\[
 G_i^\theta(\bm x,\lambda;r)\ge\frac{3h_i}{4}.
 \]
On \(\mathcal G^{\mathrm c}\), at least one \(e_i\ge\delta_i\).  The base
terms have finite worst-case envelopes on the fixed box.  First choose
\(c>0\) so that
\(c(D_i^{\mathrm{up}}+1)\le3h_i/8\) for every \(i\), and let \(B_0\) bound
the worst-case deficit of the desired base restoring inequality, including
its base-potential and residual terms.  Then
\[
 A_ie_iG_i^\theta-cA_ie_i^2
 \ge\frac{3h_i}{8}A_ie_i.
\]
Choosing
\begin{equation*}
 A_i\ge\max\left\{1,\frac{8B_0}{3h_i\delta_i}\right\}
\end{equation*}
makes the inward hinge drift pay for that exterior loss.  In particular,
\[
 \1\{\mathcal G^{\mathrm c}\}
 \le\sum_{i=1}^N\frac{e_i^2}{\delta_i^2}
 \le C\mathcal W^{\mathrm{loc}}.
\]

This gives localized analogues of the three bounds used in the main proof:
metric and gradient control, a restoring inequality, and an operational-gap
charge.  Define
\begin{align*}
 \mathcal R^{\mathrm{up}}(\bm x,\lambda;r)
 &:=\sum_i m_i(x_i)
 \bigl(g_i'(s_i^*(r))+\lambda^*(r)\bigr)
 +\lambda(r^{\mathrm{c,up}}-r_0)^+,\\
 \Gamma(\bm y;r)
 &:=\sum_i\ell_i(y_i)-v(r)
 +\lambda^*(r)\left(\sum_i m_i(y_i)-r\right).
\end{align*}
For \(q=(\bm\sigma,\zeta)\) with
\(0\le\sigma_i\le m_i(x_i^\star)\) and
\(0\le\zeta\le\lambda_{\max}\), projected
\(z\in\calK^{\mathrm{up}}(r)\), and
\(0\le y_i\le D_i^{\mathrm{up}}\), the required schematic estimates are
\begin{align*}
 \|\bm m(\bm x)-\bm\sigma\|^2+|\lambda-\zeta|^2
 +\1\{\mathcal G^{\mathrm c}\}
 &\le C\mathcal W^{\mathrm{loc}}(q,z),\notag\\
 \|\nabla_q\mathcal W^{\mathrm{loc}}(q,z)\|^2
 +\|\nabla_z\mathcal W^{\mathrm{loc}}(q,z)\|^2
 &\le C\mathcal W^{\mathrm{loc}}(q,z),\notag\\
 \langle\nabla_z\mathcal W^{\mathrm{loc}}(q^\eta(r),z),
 G^{\theta,\mathrm{up}}(\bm x,\lambda;r)\rangle
 &\ge c\mathcal W^{\mathrm{loc}}(q^\eta(r),z)
 +c\mathcal R^{\mathrm{up}}(\bm x,\lambda;r)-C\eta^2,\notag\\
 0\le\Gamma(\bm y;r)
 &\le C\left[
 \mathcal W^{\mathrm{loc}}(q^*(r),z)
 +\mathcal R^{\mathrm{up}}(\bm x,\lambda;r)
 +\|\bm y-\bm x\|_1\right].
\end{align*}
The exact-to-smoothed transfer still uses the main mechanism.  On
\(\mathcal G\), the active-coordinate and slack-boundary-layer calculation in
\Cref{lem:two-contributions}\textup{(i)} is unchanged.  On
\(\mathcal G^{\mathrm c}\), joint-gradient Lipschitzness gives an
\(O(\eta)\) perturbation, while
\(\1\{\mathcal G^{\mathrm c}\}\le C\mathcal W^{\mathrm{loc}}\) converts it
to \(O(\eta\sqrt{\mathcal W^{\mathrm{loc}}})\). Young's inequality then
gives the same contraction loss and an \(O(\eta^2)\) remainder.

For the operational charge, apply the main bound first at \(\bm x\).  Outside
the true support, the additional holding term is
\[
 \ell_i(y)=g_i(m_i(y))+h_i(y-\bar d_i)^+.
\]
At \(\bm x\), this exterior case is charged by the hinge.  Since
\(\ell_i\) and \(m_i\) are Lipschitz on the fixed envelope interval,
replacing \(\bm x\) by the implemented \(\bm y\) costs at most
\(C\|\bm y-\bm x\|_1\).

\textbf{Reusing the proof architecture.}
Set \(\eta_t^2=\alpha_t\),
\(q_t=q^{\eta_t}(r_t)\), and
\(\widetilde W_t^{\mathrm{loc}}
:=\mathcal W^{\mathrm{loc}}(q_t,z_t)\).  Combining
\Cref{lem:generic-moving-mirror}, the localized ingredients above, and the
comparator-motion proof of
\Cref{lem:two-contributions}\textup{(ii)} gives the same type of recursion.
In that case split, \(D^{\mathrm{up}}\) replaces \(\Dsum\): below the cap the
resource increment is bounded, while above it the comparator is already
constant on the slack branch.  Thus
\begin{align*}
 \E[\widetilde W_{t+1}^{\mathrm{loc}}\mid\calH_t]
 &\le(1-c\alpha_t)\widetilde W_t^{\mathrm{loc}}
 -c\alpha_t\mathcal R^{\mathrm{up}}(\bm X_t,\lambda_t;r_t)\notag\\
 &\quad+C\alpha_t^2
 +C\alpha_t\|\bm Y_t-\bm X_t\|^2.
\end{align*}
The only altered Taylor estimate is
\[
 \frac{C\sqrt{\widetilde W_t^{\mathrm{loc}}}}
 {\eta_t(n_t-1)^2}
 \le \frac{c}{8}\alpha_t\widetilde W_t^{\mathrm{loc}}
 +C\alpha_t^2,
\]
which follows from \(\eta_t^2=\alpha_t\) and the same large-\(n_t\) bounds as
in Appendix~A.  The reciprocal-step summations, together with
\eqref{eq:envelope-implementation-sketch}, then control the localized
potential and residual at logarithmic order.  Finally, the Bellman telescope
uses \(D^{\mathrm{up}}\) in place of \(\Dsum\). When
\(r_t>D^{\mathrm{up}}\), both resource rates lie on the constant slack branch
of \(v\), so the Taylor remainder is zero.

These calculations identify the additional ingredients needed for a
fixed-envelope implementation and show how they enter the already established
proof architecture.  We record the route as an extension sketch rather than
as a separate theorem, and formal proof could be constructed from the sketch.

\subsection{\texorpdfstring{Two-Rate RAPDL: An Extension Sketch}
{Two-Rate RAPDL: An Extension Sketch}}
\label{app:two-rate-rapdl}

This subsection records the transfer argument for using different primal and
dual step scales.  It isolates the weighted-potential identity that allows the
common-rate proof to be reused, rather than repeating that proof.

Fix \(\varkappa>0\), treated as constant as the horizon varies, and write
\[
 P_\varkappa:=\operatorname{diag}(I_N,\varkappa),
 \qquad
 a_t:=\frac{\chi_x}{\tau_0+\min\{t,n_t\}}.
\]
The two-rate update is
\begin{equation*}
 z_{t+1}
 :=\proj_{\calK(r_{t+1})}
 \bigl(z_t-a_tP_\varkappa\widehat G_t\bigr),
 \qquad t<T.
\end{equation*}
Equivalently,
\(\alpha_t^x=a_t\),
\(\alpha_t^\lambda=\varkappa a_t\), and
\(\chi_\lambda=\varkappa\chi_x\).  Inventory decisions, observations,
resource updates, and clipping sets remain those of \Cref{alg:rapdl}.

\textbf{The weighted-potential cancellation.}
For the primal blocks \(H_i\) in \eqref{eq:joint-lyapunov}, define
\begin{equation*}
 \mathcal W_\varkappa(q,z)
 :=\sum_{i=1}^NH_i(\sigma_i,x_i)
 +\frac{1}{2\varkappa}(\lambda-\zeta)^2.
\end{equation*}
It is equivalent to the common-rate potential:
\begin{equation}
 \min\{1,\varkappa^{-1}\}\mathcal W(q,z)
 \le\mathcal W_\varkappa(q,z)
 \le\max\{1,\varkappa^{-1}\}\mathcal W(q,z).
 \label{eq:weighted-potential-equivalence-sketch}
\end{equation}
For any direction \(g=(g_1,\ldots,g_N,g_0)\),
\begin{equation}
 \left\langle
 \nabla_z\mathcal W_\varkappa(q,z),P_\varkappa g
 \right\rangle
 =\sum_{i=1}^N\partial_{x_i}H_i(\sigma_i,x_i)g_i
 +(\lambda-\zeta)g_0.
 \label{eq:weighted-first-order-cancellation-sketch}
\end{equation}
Thus the factor \(\varkappa\) in the dual step is canceled exactly by the
dual weight \(1/\varkappa\).  The first-order Primal-Dual pairing in
\Cref{lem:two-contributions}\textup{(i)} is unchanged.

\textbf{What changes in the reusable bounds.}
The metric and gradient constants become
\(\varkappa\)-dependent through
\eqref{eq:weighted-potential-equivalence-sketch}.  Projection compatibility is
unchanged: the primal projection is the same, and multiplying the dual square
by the positive constant \(1/\varkappa\) preserves interval-projection
monotonicity.  The stochastic-direction envelope becomes
\[
 \overline G_0:=\Dsum+\theta G_j,
 \qquad
 G_\varkappa
 :=\left(\sum_{i=1}^NG_i^2
 +\varkappa^2\overline G_0^2\right)^{1/2},
 \qquad
 \|P_\varkappa\widehat G_t\|\le G_\varkappa,
\]
so the mirror quadratic term remains \(O_\varkappa(a_t^2)\).

On the comparator side, the smoothing lemma is unchanged.  Weighted gradient
control and \eqref{eq:weighted-first-order-cancellation-sketch} transfer the
two parts of \Cref{lem:two-contributions}, with constants
\(C_{W,\varkappa}\),
\(L_{\mathcal W,\varkappa}\), and
\(C_{\mathrm{pair},\varkappa}\).  The last constant is obtained from
\eqref{eq:explicit-c-pair} by replacing the comparator-gradient envelope with
its weighted counterpart.  Set
\[
 a_0:=\min\{1,G_\varkappa^{-1}\},\qquad
 \mu_{\mathrm{upd},\varkappa}
 :=\frac{\mu_0}{8C_{W,\varkappa}},\qquad
 \mu_\varkappa
 :=\min\{\mu_{\mathrm{upd},\varkappa}/2,1/2\}.
\]
Under the reused initialization condition
\(\tau_0\ge\chi_x/a_0\), one has
\(a_t\le a_0\) and \(a_tG_\varkappa\le1\).  The tuning construction is then
reused under the
substitution
\[
 (G,C_W,L_{\mathcal W},\mu_{\mathrm{upd}},C_{\mathrm{pair}},
 \alpha_0,\chi)
 \mapsto
 (G_\varkappa,C_{W,\varkappa},L_{\mathcal W,\varkappa},
 \mu_{\mathrm{upd},\varkappa},C_{\mathrm{pair},\varkappa},
 a_0,\chi_x).
\]

\textbf{Resulting recursion and summation.}
Set \(\eta_t^2=a_t\),
\(q_t=q^{\eta_t}(r_t)\), and
\(\widetilde W_{\varkappa,t}:=\mathcal W_\varkappa(q_t,z_t)\).  With
\(\mathcal R_t\) denoting the same nonnegative residual as in
\eqref{eq:mirror-recursion}, the weighted moving-comparator calculation has
the form
\begin{align*}
 \E[\widetilde W_{\varkappa,t+1}\mid\calH_t]
 &\le(1-\mu_\varkappa a_t)\widetilde W_{\varkappa,t}
 -\mu_\varkappa a_t\mathcal R_t\notag\\
 &\quad+C_\varkappa a_t^2
 +C_\varkappa a_t\|\bm Y_t-\bm X_t\|^2.
\end{align*}
Appendix~\ref{sec:tracking-assembly} applies with \(a_t\) in place of
\(\alpha_t\).  The implementation argument also changes only through the
primal variation bound
\[
 |X_{i,t+1}-X_{i,t}|
 \le G_i a_t+\frac{\Dsum}{p_i(n_t-1)},
\]
because the dual rate does not enter the physical excess-inventory recursion.
The exact-to-smoothed comparison, implementation-gap bound, and re-solving
reduction then follow the common-rate proof with constants allowed to depend
on \(\varkappa\).

These observations indicate how the common-rate analysis extends to any fixed
\(\varkappa>0\).  We record the transfer argument as a sketch rather than
repeat the proof of \Cref{thm:main}.

\end{APPENDICES}
\end{document}